\newcolumntype{P}[1]{>{\centering\arraybackslash}p{#1}}
\newcolumntype{M}[1]{>{\centering\arraybackslash}m{#1}}
\newcommand{\ie}{\textit{i.e.}\ }
\newcommand{\eg}{\textit{e.g.}\ }
\newcommand{\etc}{etc.\ }
\newcommand*\samethanks[1][\value{footnote}]{\footnotemark[#1]}
\newtheorem{theorem}{Theorem}
\newtheorem{lemma}[theorem]{Lemma}
\newtheorem{proposition}[theorem]{Proposition}
\newtheorem{definition}[theorem]{Definition}
\newcommand{\alglinelabel}{%
  \addtocounter{ALC@line}{-1}
  \refstepcounter{ALC@line}
  \label
}
\newcommand{\Comment}[2][.5\linewidth]{%
  \leavevmode\hfill\makebox[#1][l]{$\triangleright$~#2}}
\title{Learning Distributed and Fair Policies for Network Load Balancing as Markov Potential Game}
\author{%
  Zhiyuan Yao\thanks{Equal contribution.}\\
  \'Ecole Polytechnique, Cisco Systems\\
  \texttt{zhiyuan.yao@polytechnique.edu} \\
   \And
    Zihan Ding\samethanks\\
    Princeton University\\
    \texttt{zihand@princeton.edu}
}
\begin{document}

\maketitle

\begin{abstract}
This paper investigates the network load balancing problem in data centers (DCs) where multiple load balancers (LBs) are deployed, using the multi-agent reinforcement learning (MARL) framework. The challenges of this problem consist of the heterogeneous processing architecture and dynamic environments, as well as limited and partial observability of each LB agent in distributed networking systems, which can largely degrade the performance of in-production load balancing algorithms in real-world setups. Centralised-training-decentralised-execution (CTDE) RL scheme has been proposed to improve MARL performance, yet it incurs -- especially in distributed networking systems, which prefer distributed and plug-and-play design scheme -- additional communication and management overhead among agents. We formulate the multi-agent load balancing problem as a Markov potential game, with a carefully and properly designed workload distribution fairness as the potential function. A fully distributed MARL algorithm is proposed to approximate the Nash equilibrium of the game. Experimental evaluations involve both an event-driven simulator and real-world system, where the proposed MARL load balancing algorithm shows close-to-optimal performance in simulations, and superior results over in-production LBs in the real-world system.
\end{abstract}

\section{Introduction}
\label{sec:intro}

In cloud data centers (DCs) and distributed networking systems, servers are deployed on infrastructures with multiple processors to provide scalable services~\cite{dragoni2017microservices}.
To optimise workload distribution and reduce additional queuing delay, load balancers (LBs) play a significant role in such systems.
State-of-the-art network LBs rely on heuristic mechanisms~\cite{lvs, maglev, 6lb, incab2018} under the low-latency and high-throughput constraints of the data plane.
However, these heuristics are not adaptive to dynamic environments and require human interventions, which can lead to most painful mistakes in the cloud -- mis-configurations.
RL approaches have shown performance gains in distributed system and networking problems~\cite{auto2018sigcomm, decima2018, drl-udn-2019, sivakumar2019mvfst}, yet applying RL on the network load balancing problem is challenging.

First, unlike traditional workload distribution or task scheduling problem~\cite{auto2018sigcomm, decima2018}, network LBs have limited observations over the system, including task sizes and actual server load states.
Being aware of only the number of tasks they have distributed, servers can be overloaded by collided elephant tasks and have degraded quality of service (QoS).

Second, to guarantee high service availability in the cloud, multiple LBs are deployed in DCs. Network traffic is split among all LBs.
This multi-agent setup makes LBs have only partial observation over the system.

Third, modern DCs are based on heterogeneous hardware and elastic infrastructures~\cite{kumar2020fast}, where server capacities vary.
It is challenging to assign correct weights to servers according to their actual processing capacities, and this process conventionally requires human intervention -- which can lead to error-prone configurations~\cite{maglev,incab2018}.

\noindent\begin{minipage}{0.55\textwidth}
\begin{algorithm}[H]
	\footnotesize
	\caption{LB System Transition Protocol}\label{alg:model-transition}
	\begin{algorithmic}[1]
	    \STATE Initialise server load, $X_j(0) \gets 0, \forall j\in[N]$
		\FOR {each time step $t$}
		    \FOR {each LB agent $i\in[M]$}
		        \STATE Choose action $\alpha_{ij}(t)$ for coming tasks $w_i(t)$
		    \ENDFOR
		    \FOR {each server $j$}
		        \STATE Update workload:\\
		        $X_j(t)= X_j(t-1)+\sum_{i=1}^M w_i(t)\alpha_{ij}(t)-v_j(t-1)$
		    \ENDFOR
		\ENDFOR
	\end{algorithmic}
\end{algorithm}
\end{minipage}
\begin{minipage}{0.4\textwidth}
    \centering
    \includegraphics[width=\columnwidth]{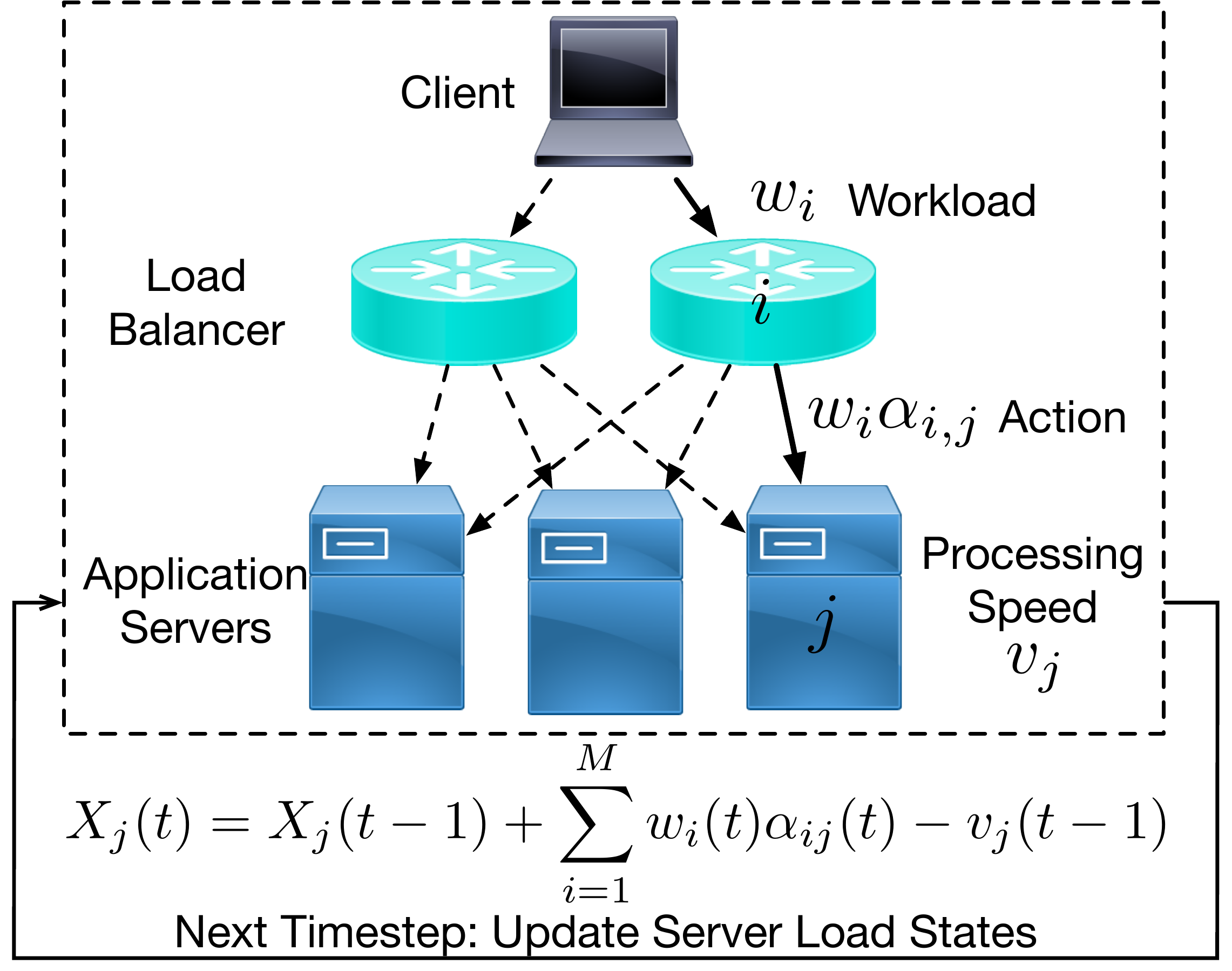}
	\captionof{figure}{Network load balancing.}
	\label{fig:intro}
\end{minipage}

Last but not least, given the low-latency and high-throughput constraints in the distributed networking setup, the interactive training procedure of RL models and the centralised-training-decentralised-execution (CTDE) scheme~\cite{foerster2018counterfactual} can incur additional communication and management overhead.

In this paper, we study the network load balancing problem in multi-agent game theoretical approach, by formulating it as a Markov potential game through specifying the proper reward function, namely variance-based fairness.
We propose a distributed Multi-Agent RL (MARL) network load balancing mechanism that is able to exploit asynchronous actions based only on local observations and inferences.
Load balancing performance gains are evaluated based on both event-based simulations and real-world experiments\footnote{Source code and data of both simulation and real-world experiment are open-sourced at \href{https://github.com/ZhiyuanYaoJ/MARLLB}{https://github.com/ZhiyuanYaoJ/MARLLB}.}.

\section{Related Work}
\label{sec:background}

\textbf{Network Load Balancing Algorithms.} The main goal of network LBs is to \textit{fairly} distribute workloads across servers.
The system transition protocol of network load balancing system is described in Alg.~\ref{alg:model-transition} and depicted in Fig.~\ref{fig:intro}.
Existing load balancing algorithms are sensitive to partial observations and inaccurate server weights.
Equal-Cost Multi-Path (ECMP) LBs randomly assign servers to new requests~\cite{glb2018, faild2018, silkroad2017}, which makes them agnostic to server load state differences.
Weighted-Cost Multi-Path (WCMP) LBs assign weights to servers proportional to their provisioned resources (\eg CPU power)~\cite{maglev, katran, concury2020, prism2020}.
However, the statically assigned weights may not correspond to the actual server processing capacity.
As depicted in Fig.~\ref{fig:motivation-multi-stage}, servers with the same IO speed yet different CPU capacities have different actual processing speed when applications have different resource requirements.
Active WCMP (AWCMP) is a variant of WCMP and it periodically probe server utilisation information (CPU/memory/IO usage)~\cite{spotlight2018, incab2018}.
However, active probing can cause delayed observations and incur additional control messages, which degrades the performance of distributed networking systems.
Local Shortest Queue (LSQ) assigns new requests to the server with the minimal number of ongoing networking connections that are \textit{locally} observed~\cite{twf2020, cheetah2020}.
It does not concern server processing capacity differences.
Shortest Expected Delay (SED) derives the ``expected delay'' as locally observed server queue length divided by statically configured server processing speed~\cite{lvs}.
However, LSQ and SED are sensitive to partial observations and misconfigurations.
As depicted in Fig.~\ref{fig:motivation-inaccurate}, the QoS performance of each load balancing algorithm degrades from the ideal setup (global observations and accurate server weight configurations) when network traffic is split across multiple LBs or server weights are mis-configured\footnote{The stochastic Markov model of the simulation is detailed in the App.~\ref{app:model-basic}}, which prevails in real-world cloud DCs.

\begin{figure}[tbp]
	\centering
	\begin{subfigure}{0.47\columnwidth}
		\centering
		\includegraphics[height=1.2in]{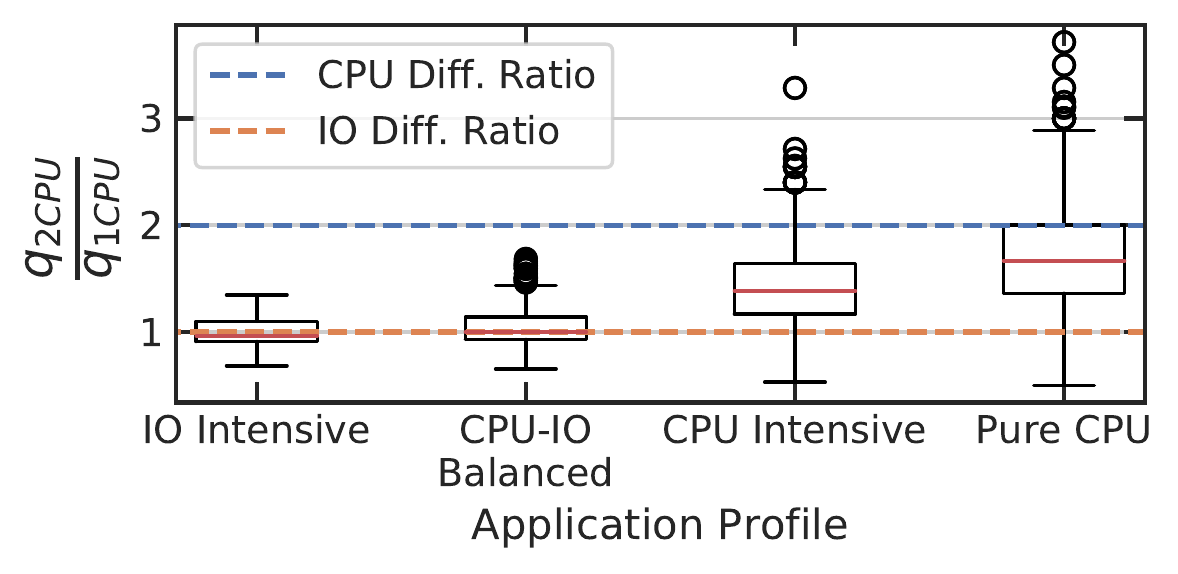}
		\caption{It is hard to accurately estimate the actual server processing speeds since it depends on both provisioned resources, and application profiles (App.~\ref{app:results-simulation-weights}).}
		\label{fig:motivation-multi-stage}
	\end{subfigure}
	\hspace{.05in}
	\begin{subfigure}{0.5\columnwidth}
		\centering
		\includegraphics[height=1.2in]{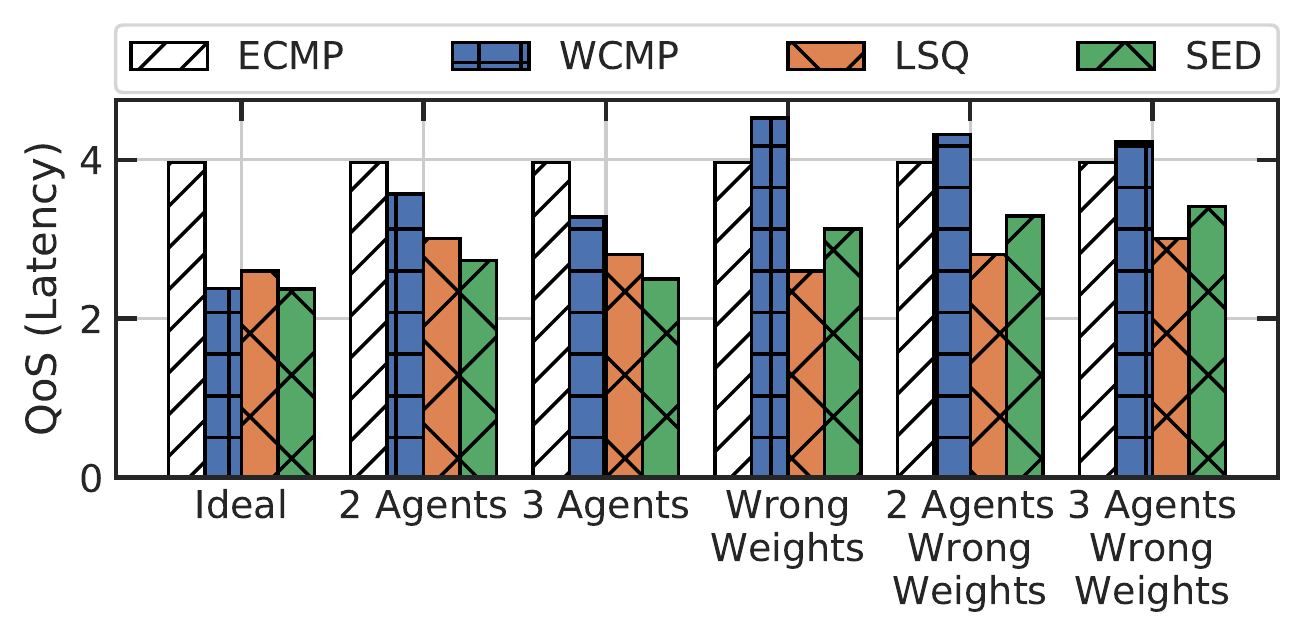}
		\caption{The performance of existing network load balancing algorithms degrades when observation becomes partial with multi-agents and weights are mis-configured.}
		\label{fig:motivation-inaccurate}
	\end{subfigure}
	
	\caption{Existing network load balancing algorithms are sub-optimal under real-world setups.}
	\label{fig:motivation}
	\vskip -.1in
\end{figure}

In this paper, we propose a distributed MARL-based load balancing algorithm that considers dynamically changing queue lengths (\eg sub-ms in modern DC networks~\cite{guo2015pingmesh}), and autonomously adapts to actual server processing capacities, with no additional communications among LB agents or servers.

\textbf{Markov Potential Games.} 

A potential game (PG)~\cite{monderer1996potential, sandholm2001potential, marden2009cooperative, candogan2011flows} has a special function called \textit{potential function}, which specifies a property that any individual deviation of the action for one player will change the value of its own and the potential function equivalently. A desirable property of PG is that pure NE always exists and coincides with the maximum of potential function in norm-form setting. Self-play~\cite{fudenberg1998theory} is provably converged for PG.
Markov games (MG) is an extension of normal-form game to a multi-step sequential setting. A combination of PG and MG yields the Markov potential games (MPG)~\cite{fox2022independent, leonardos2021global}, where pure NE is also proved to exist. Some algorithms~\cite{macua2018learning, mguni2021learning, fox2022independent} lying in the intersection of game theory and reinforcement learning are proposed for MPG.
For example, independent nature policy gradient is proved to converge to Nash equilibrium (NE) for MPG~\cite{fox2022independent}. 

\begin{table}[tbp]
\centering
\caption{Trade-offs among the probing frequency, measurement quality, and communication overhead.}
\label{tab:communication-overhead}
\begin{tabular}{cc|c|c|c|c|c}
\hline
\multicolumn{2}{c|}{Probing Frequency (/s)}                                                                                    & 2.22  & 2.86  & 4.00  & 6.67  & 20.00  \\ \hline
\multicolumn{1}{c|}{\multirow{2}{*}{RMSE}}                                                                      & CPU (\%)     & 48.33 & 44.56 & 39.84 & 32.65 & 21.97  \\
\multicolumn{1}{c|}{}                                                                                           & \#Job        & 2.07  & 1.85  & 1.61  & 1.31  & 0.91   \\ \hline
\multicolumn{1}{c|}{\multirow{2}{*}{Spearman's Corr.}}                                                          & CPU (\%)     & 0.28  & 0.40  & 0.52  & 0.68  & 0.85   \\
\multicolumn{1}{c|}{}                                                                                           & \#Job        & 0.47  & 0.56  & 0.66  & 0.77  & 0.89   \\ \hline
\multicolumn{1}{c|}{\multirow{2}{*}{\begin{tabular}[c]{@{}c@{}}Communication\\ Overhead (kbps)\end{tabular}}} & 2LB-7server  & 2.15  & 2.76  & 3.86  & 6.44  & 9.32   \\
\multicolumn{1}{c|}{}                                                                                           & 6LB-20server & 18.40 & 23.66 & 33.12 & 55.20 & 165.60 \\ \hline
\end{tabular}
\end{table}

\textbf{Multi-Agent RL}. MARL~\cite{yang2020overview} has been viewed as an important avenue for solving different types of games in recent years. For cooperative settings, a line of work based on joint-value factorisation have been proposed, involving VDN~\cite{sunehag2017value}, COMA~\cite{foerster2018counterfactual}, MADDPG~\cite{lowe2017multi}, and QMIX~\cite{rashid2018qmix}. For these works, a global reward is assigned to players within the team, but individual policies are optimised to execute individual actions, known as the CTDE setting.
MPG satisfies the assumptions of the value decomposition approach, with the well-specified potential function as the joint rewards. However, deploying CTDE RL models in real-world distributed system incurs additional communication latency and management overhead for synchronising agents and aggregating trajectories.
These additional management and communication overheads can incur substantial performance degradation -- constrained throughput and increased latency -- especially in data center networks.
As listed in Table~\ref{tab:communication-overhead}, when we use active probing to measure server utilisation information, higher probing frequencies give improved measurement quality--in terms of CPU usage and number of on-going jobs on the servers.
However, higher probing frequencies also incur increased communication overhead, especially in large-scale data center networks.
The detailed experimental setups, as well as both qualitative and quantitative analysis of the impact of communication overhead, are described in Sec.~\ref{app:results-ablation-comm}.
By leveraging the special structure of MPG, independent learning approach can be more efficient due to the decomposition of the joint state and action spaces, which is leveraged in the proposed methods. Methods like MATRPO~\cite{li2020multi}, IPPO~\cite{de2020independent} follow a fully decentralised setting, but for general cooperative games.

In terms of the distribution fairness, FEN~\cite{jiang2019learning} is proposed as a decentralised approach for fair reward distribution in multi-agent systems. They defined the fairness as coefficient of variation and decompose it for each individual agent. Another work~\cite{zimmer2021learning} proposes a decentralised learning method for fair policies in cooperative games. However, the decentralised learning manner in these methods are not well justified, while in this paper the load balancing problem is formally characterised as a MPG and the effectiveness of distributed training is verified.

\section{Methods}
\subsection{Problem Description}

We formulate the load balancing problem into a discrete-time dynamic game with strong distributed and concurrent settings, where no centralised control mechanism exists among agents.
We let $M$ denote the number of LB agents ($[M]$ denotes the set of LB agents $\{1, \dots, M\}$) and $N$ denote the number of servers ($[N]$ denotes the set of servers $\{1, \dots, N\}$).
At each time step (or round) $t \in H$ in a horizon $H$ of the game, each LB agent $i$ receives a workload $w_i(t) \in W$, where $W$ is the workload distribution, and the LB agent assigns a server to the task using its load balancing policy $\pi_i \in \Pi$, where $\Pi$ is the load balancing policy profile.
At each time-step $t$, a LB agent $i$ takes an action $a_i(t) = \{a_{ij}(t)\}_{j=1}^{N}$, according to which the tasks $w_i(t)$ are assigned with distribution $\alpha_i(t)$. $\alpha_{ij}(t)$ is the probability mass of assigning tasks to server $j$, $ \sum_{j=1}^{N} \alpha_{ij}(t) = 1$.
Therefore, at each time step, the workload assigned to server $j$ by the $i$-th LB is $w_i(t)\alpha_{ij}(t)$.
During each time interval, each server $j$ is capable of processing a certain amount of workload $v_j$ based on the property of each server (\eg provisioned resources including CPU, memory, \etc).
We have server load state (remaining workload to process) $X_j(T) = \sum_{t=0}^{T}\max\{0, \sum_{i=1}^M w_i(t)\alpha_{ij}(t) - v_j\} = \max\{0, \sum_{t=0}^{T}\sum_{i=1}^{M}w_i(t)\alpha_{ij}(t) - v_{j}T\} = \sum_{i=1}^{M}X_{ij}(T)$\footnote{$X_{ij}(T) = \sum_{t=0}^{T}\max\{0, w_i(t)\alpha_{ij}(t) - \frac{v_j}{M}\}$}.
Let $l_{j}$ denote the time for a server $j$ to process all remaining workloads, which is also the potential queuing time for new-coming tasks, $l_j(t) = \frac{X_j(t-1)+\sum_{i=1}^{M}w_i(t)\alpha_{ij}(t)}{v_j} = \frac{\sum_{i=1}^{M}X_{ij}(t-1)+w_i(t)\alpha_{ij}(t)}{v_j} = \sum_{i=1}^{M}l_{ij}(t)$.
Then transition from time step $t$ to time step $t+1$ is given in Alg.~\ref{alg:model-transition}.
Reward: $r_{i}(t) = R(\boldsymbol{l}(t), a_i(t), \delta_i(t))$, where $R$ is the reward function, $\boldsymbol{l}(t) = \sum_{j=1}^{N} l_j(t) = \sum_{i=1}^{M} l_{i}(t)$ denotes the estimated remaining time to process on each server, and $\delta_i(t)$ is a random variable that makes the process stochastic.
\begin{definition}(Makespan)
In the selfish load balancing problem, the makespan is defined as:
{\small
\begin{align}
 \text{MS} = \max_j(l_j), l_j = \sum_i l_{ij}
 \label{eq:makespan}
\end{align}}
\end{definition}
The network load balancing problem is featured as multi-commodity flow problems and is NP-hard, which makes it hard to solve with trivial algorithmic solution within micro-second level~\cite{sen2013scalable}.
This problem can be formulated as a constrained optimisation problem for minimizing the makespan over an horizon $t \in [H]$:

{\small
\begin{align}
    minimize \sum_{t=h}^{H}&\max_j l_j(t) \\[-5pt]
    s.t. \quad
    l_{j}(t)=&\frac{\sum_{i=1}^M (X_{ij}(t-1)+w_{i}(t)\alpha_{ij}(t))}{v_{j}}, \quad \sum_{i=1}^{M}w_i(t) \le \sum_{j=1}^{N}v_j,   \quad w_i, v_j\in(0, +\infty) \label{eq:lb_cons1}\\
    X_{ij}(T)&=\sum_{t=0}^{T} \max\{0, w_{i}(t)\alpha_{ij}(t) - \frac{v_j}{M}\},  \quad\sum_{j=1}^{N}\alpha_{ij}(t) = 1,  \quad \alpha_{ij} \in [0, 1] \label{eq:lb_cons2}
\end{align}}

In modern realistic network load balancing system, the arrival of network requests is usually unpredictable in both its arriving rate and the expected workload, which introduces large stochasticity into the problem. Moreover, due to the existence of noisy measurements and partial observations, the estimation of makespan can be inaccurate, which indicates the actual server load states or processing capacities are not correctly captured. Instant collisions of elephant workloads or bursts of mouse workloads often happen, which do not indicate server processing capacity thus misleading the observation. To solve this issue, we introduce \emph{fairness} as an alternative of the original objective makespan. Specifically, makespan is estimated on a per-server level, while the estimation of fairness can be decomposed to the LB level, which allows evaluating the individual LB performance without general loss. This is more natural in load balancing system due to the partial observability of LBs.

\subsection{Distribution Fairness}
\label{sec:fairness}
We mainly introduce two types of load balancing distribution fairness: (1) variance-based fairness (VBF) and (2) product-based fairness (PBF). It will be proved that optimization over either fairness will be sufficient but not necessary for minimising the makespan.

\begin{definition}(Variance-based Fairness)
\label{def:vbf}
For a vector of time to finish all remaining jobs $\boldsymbol{l}=[l_{1}, \dots, l_{N}]$ on each server $j\in[N]$, let $\overline{\boldsymbol{l}}(t) = \frac{1}{N}\sum_{j=1}^{N}\sum_{i=1}^{M}l_{ij}(t)$, the variance-based fairness for workload distribution is just the negative sample variance of the job time, which is defined as:
{\small
\begin{align}
    F(\boldsymbol{l}) = -\frac{1}{N}\sum_{j=1}^{N}\bigg(l_j(t)-\overline{\boldsymbol{l}}(t)\bigg)^2 = -\frac{1}{N}\sum_{j=1}^{N}l_j^2(t)+\overline{\boldsymbol{l}}^2(t).
\end{align}}
VBF defined per LB is: $F_i(\boldsymbol{l}_i) = -\frac{1}{N}\sum_{j=1}^{N}l_{ij}^2(t)+\overline{\boldsymbol{l}}_{i}^2(t)$, where $\overline{\boldsymbol{l}}_i(t) =  \frac{1}{N}\sum_{j=1}^{N}l_{ij}(t)$.
\end{definition}

\begin{lemma} The VBF for load balancing system satisfies the following property:
\label{lem:vbf}
{\small
\begin{align}
    F_i^{\pi_i, -\pi_i}(\boldsymbol{l}_i)-F_i^{\tilde{\pi}_i, -\pi_i}(\tilde{\boldsymbol{l}}_i) = F^{\pi_i, -\pi_i}(\boldsymbol{l})-F^{\tilde{\pi}_i, -{\pi}_i}(\tilde{\boldsymbol{l}})
\end{align}}
\end{lemma}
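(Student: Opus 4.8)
The plan is to read the statement as the defining \emph{exact potential} identity---agent $i$'s change in its own fairness $F_i$ should equal the change in the global fairness $F$ under a unilateral deviation---and to verify it by direct expansion, exploiting the additive decomposition $l_j(t)=\sum_{k=1}^{M}l_{kj}(t)$ fixed in the problem description. The key structural observation is that when agent $i$ switches from $\pi_i$ to $\tilde{\pi}_i$ while the opponent profile $-\pi_i$ stays fixed, only the $i$-th summand of each $l_j$ moves. Writing $L_{-i,j}=\sum_{k\neq i}l_{kj}(t)$ and $\overline{L}_{-i}=\frac{1}{N}\sum_{j}L_{-i,j}$, we have $l_j=l_{ij}+L_{-i,j}$ and $\tilde{l}_j=\tilde{l}_{ij}+L_{-i,j}$ with $L_{-i,j}$ \emph{identical} before and after the deviation. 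This is what will let the opponents' contribution factor out.

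First I would expand the global fairness $F(\boldsymbol{l})=-\frac{1}{N}\sum_{j}l_j^2+\overline{\boldsymbol{l}}^2$ by substituting $l_j=l_{ij}+L_{-i,j}$ and splitting the mean as $\overline{\boldsymbol{l}}=\overline{\boldsymbol{l}}_i+\overline{L}_{-i}$. This produces three groups of terms: (i) the terms in $l_{ij}$ alone, which reassemble exactly into $F_i(\boldsymbol{l}_i)$; (ii) the terms in $L_{-i,j}$ alone, which form a quantity $C_{-i}=-\frac{1}{N}\sum_j L_{-i,j}^2+\overline{L}_{-i}^2$ that is independent of agent $i$'s policy; and (iii) the cross terms $-\frac{2}{N}\sum_{j}l_{ij}L_{-i,j}+2\,\overline{\boldsymbol{l}}_i\,\overline{L}_{-i}$. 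Taking the difference $F^{\pi_i,-\pi_i}(\boldsymbol{l})-F^{\tilde{\pi}_i,-\pi_i}(\tilde{\boldsymbol{l}})$, the background term $C_{-i}$ cancels exactly because $-\pi_i$ is unchanged, and group (i) contributes precisely $F_i(\boldsymbol{l}_i)-F_i(\tilde{\boldsymbol{l}}_i)$, which is the left-hand side of the lemma. Hence the whole identity collapses to showing that the cross-term contribution cancels in the difference, i.e. that $\sum_{j}\bigl(L_{-i,j}-\overline{L}_{-i}\bigr)\bigl(l_{ij}-\tilde{l}_{ij}\bigr)=0$.

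The main obstacle is exactly this cross-player coupling: the residual equals $-2$ times the sample covariance between the opponents' aggregate load profile $L_{-i}$ and the change $l_{ij}-\tilde{l}_{ij}$ that agent $i$'s deviation induces, and it does not vanish for arbitrary configurations. I would attack it using the structural constraints of the model: the normalisation $\sum_{j}\alpha_{ij}(t)=1$ fixes the total mass agent $i$ injects, which (under homogeneous server capacities, so that the $1/v_j$ weights are common) pins $\overline{\boldsymbol{l}}_i$ and forces $\sum_{j}(l_{ij}-\tilde{l}_{ij})=0$; this kills the $\overline{L}_{-i}$ part of the covariance immediately. The genuinely hard piece is the remaining inner product $\sum_{j}L_{-i,j}(l_{ij}-\tilde{l}_{ij})$, which I would try to eliminate via the symmetric per-agent fair-share decomposition $X_{ij}$ (with share $v_j/M$) introduced in the footnote, or, failing a purely constraint-based argument, state the vanishing of this covariance as the operative homogeneity/balancedness assumption under which the potential property holds. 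Establishing that this term is \emph{exactly} zero---rather than merely small on the relevant trajectories---is where I expect the real work to concentrate.
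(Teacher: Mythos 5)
Your decomposition is exactly the one in the paper's own proof: write $l_j = l_{ij} + l_{-ij}$ with $l_{-ij} = \sum_{k\neq i} l_{kj}$, expand the global variance into a per-agent part, an opponents-only part, and a cross term, and observe that the opponents-only part cancels in the difference because $\pi_{-i}$ is held fixed. The place where you stopped, however, is precisely the place where the paper's proof goes wrong. After the binomial expansion (which incidentally carries a sign typo there), the paper drops the cross term $\frac{2}{N}\sum_{j}(l_{ij}-\overline{l}_i)(l_{-ij}-\overline{l}_{-i})$ with the one-line justification ``$\sum_{j}(l_{ij}-\overline{l}_i)=0$''. That inference is a non sequitur: the factor $(l_{-ij}-\overline{l}_{-i})$ varies with $j$, so the sum of products is not the product of the sums; the cross term is ($N$ times) the sample covariance between agent $i$'s load profile and the opponents' aggregate profile, and it is generically nonzero. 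Your diagnosis that this covariance is the genuine obstruction, and that it ``does not vanish for arbitrary configurations,'' is correct -- you stopped exactly at the step the paper handles invalidly.

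Moreover, no argument can close this gap without an extra hypothesis, because the identity itself fails in general. Take $N=2$ equal-capacity servers and $M=2$ agents, with the opponent profile $(l_{21},l_{22})=(1,0)$ held fixed, and let agent $1$ deviate from $(l_{11},l_{12})=(0,1)$ to $(\tilde l_{11},\tilde l_{12})=(1,0)$; both allocations carry the same total mass, so the constraint $\sum_j\alpha_{1j}=1$ is respected. The two allocations of agent $1$ have identical per-LB variance, so the left-hand side of the lemma is $0$; but the global load vector moves from $(1,1)$ (variance $0$) to $(2,0)$ (variance $1$), so the right-hand side equals $1$. Exactly as you predicted, mass conservation kills only the $\overline{L}_{-i}$ part of the covariance, and the surviving inner product $\sum_j L_{-i,j}(l_{ij}-\tilde l_{ij})=-1$ accounts for the entire discrepancy. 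So the lemma -- and with it the MPG structure claimed in Theorem~\ref{thm:mpg_vbf} -- holds only under an additional orthogonality or balancedness condition of the kind you propose to isolate (\eg unilateral deviations of agent $i$'s per-server loads being uncorrelated, across servers, with the opponents' aggregate profile). Your proposal is right that the real work concentrates there; the caveat is that the paper's proof never does that work, it merely asserts the term away.
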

This property makes VBF a good choice for the reward function in load balancing tasks. We will see more discussions in later sections. Proof of the lemma is provided in Appendix~\ref{sec:app_vbf}.

\begin{proposition}
\label{prop:var_fairness}
Maximising the VBF is sufficient for minimising the makespan, subjective to the load balancing problem constraints (Eq.~\eqref{eq:lb_cons1} and \eqref{eq:lb_cons2}): $\max F(\boldsymbol{l}) \Rightarrow  \min \max_j(l_j)$. 
This also holds for per-LB VBF as $\max F_i(\boldsymbol{l}_i) \Rightarrow  \min \max_j(\boldsymbol{l}_i)$.
\end{proposition}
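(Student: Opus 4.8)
The plan is to show that the global maximiser of the VBF coincides with a perfectly balanced profile of per-server finishing times, and that this balanced profile attains a policy-independent lower bound on the makespan. First I would rewrite the VBF as the negative sample variance, $F(\boldsymbol{l}) = -\frac{1}{N}\sum_j (l_j - \overline{\boldsymbol{l}})^2 \le 0$, so that maximising $F$ is exactly minimising the dispersion of $\{l_j\}$, with the maximal value $0$ attained if and only if $l_1 = \cdots = l_N = \overline{\boldsymbol{l}}$.

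The key step is a weighted-average lower bound on the makespan using the server speeds $v_j$ as weights. I would write $\max_j l_j \ge \frac{\sum_j v_j l_j}{\sum_j v_j}$ and substitute $v_j l_j = X_j(t-1) + \sum_i w_i(t)\alpha_{ij}(t)$ from the definition of $l_j$ in Eq.~\eqref{eq:lb_cons1}. Because each LB's allocation is normalised, $\sum_j \alpha_{ij}(t) = 1$ (Eq.~\eqref{eq:lb_cons2}), the cross term collapses to $\sum_j\sum_i w_i(t)\alpha_{ij}(t) = \sum_i w_i(t)$, so the bound becomes $\max_j l_j \ge c := \frac{\sum_j X_j(t-1) + \sum_i w_i(t)}{\sum_j v_j}$, a quantity independent of the policy profile $\alpha$. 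This $c$ is therefore a universal floor for the makespan over the feasible set.

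Next I would evaluate the makespan at the VBF maximiser. When $F$ is maximal all finishing times coincide, say $l_j \equiv \lambda$; plugging into the same weighted average gives $\lambda = \frac{\sum_j v_j \lambda}{\sum_j v_j} = c$, hence $\max_j l_j = c$. Combining with the previous step, the balanced (VBF-maximising) profile meets the floor $c$ and thus minimises the makespan, which is precisely the implication $\max F(\boldsymbol{l}) \Rightarrow \min\max_j(l_j)$. The per-LB statement follows verbatim after replacing $l_j, X_j$ by $l_{ij}, X_{ij}$ and again using $\sum_j\alpha_{ij}(t)=1$, so that the per-agent floor $\frac{\sum_j X_{ij}(t-1) + w_i(t)}{\sum_j v_j}$ plays the role of $c$; alternatively it is immediate from the decomposition in Lemma~\ref{lem:vbf}.

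I expect the main obstacle to be the implicit feasibility assumption behind the word \emph{sufficient}: the argument delivers minimality only at a profile where perfect balance $l_1 = \cdots = l_N$ is actually attainable within the constraints \eqref{eq:lb_cons1}--\eqref{eq:lb_cons2}. Residual backlogs $X_j(t-1)$ can render $\alpha_{ij}\ge 0$ infeasible for the exactly-balanced target when some server is already congested (i.e. $c < X_j(t-1)/v_j$), in which case the maximiser of $F$ no longer has zero variance and the clean equality $\max_j l_j = c$ must be replaced by a more careful comparison between the variance-minimising profile and the true makespan minimiser. I would resolve this either by invoking the continuous relaxation $\alpha_{ij}\in[0,1]$ together with the capacity condition $\sum_i w_i(t)\le\sum_j v_j$ to guarantee attainability of balance, or by restricting the claim to the regime where backlogs are dominated by the incoming workload; this is consistent with the proposition only asserting sufficiency rather than necessity.
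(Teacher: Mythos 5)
Your proof is correct, but it takes a genuinely different route from the paper's. The paper expands the negative variance as $-F(\boldsymbol{l})=\frac{1}{N}\sum_{j}l_j^2-\frac{C^2}{N^2}$ with $C=\sum_j l_j$, applies the means inequality $\frac{1}{N}\sum_j l_j^2\le(\max_j l_j)^2$ (equality iff all $l_j$ coincide), and concludes that maximising $F$ drives the profile toward equalisation, whose makespan is then minimal; the ``sufficient but not necessary'' caveat comes precisely from the fact that one is minimising an upper bound. You instead construct a policy-independent floor: $\max_j l_j\ge\frac{\sum_j v_j l_j}{\sum_j v_j}=c$, where the collapse $\sum_j\sum_i w_i(t)\alpha_{ij}(t)=\sum_i w_i(t)$ via the normalisation $\sum_j\alpha_{ij}(t)=1$ makes $c$ independent of $\alpha$, and then show the balanced (VBF-maximising) profile attains $c$ exactly. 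This is arguably tighter than the paper's argument: with heterogeneous speeds $v_j$, the unweighted total $C=\sum_j l_j$ that the paper implicitly holds fixed actually \emph{does} depend on the allocation (shifting work to a faster server shrinks $\sum_j l_j$), whereas your $v_j$-weighted sum $\sum_j v_j l_j$ is genuinely invariant under reallocation, so your floor is honest where the paper's equivalence $\min\bigl((\max_j l_j)^2-\frac{C^2}{N^2}\bigr)\Leftrightarrow\min\max_j l_j$ quietly assumes constancy of $C$. Both proofs share the same unaddressed feasibility assumption — that perfect balance $l_1=\cdots=l_N$ is attainable under the constraints, which residual backlogs $X_j(t-1)$ can prevent — but you flag it explicitly and propose reasonable resolutions (capacity condition, or restricting to the regime where incoming workload dominates backlog), while the paper passes over it silently in its equality condition. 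Your handling of the per-LB case, and the observation that it alternatively follows from Lemma~\ref{lem:vbf}, is also sound.
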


\begin{definition}(Product-based Fairness~\cite{yao2022reinforced}) For a vector of time to finish all remaining jobs $\boldsymbol{l}=[l_{1}, \dots, l_{N}]$ on each server $j\in[N]$, the product-based fairness for workload distribution is defined as: $F(\boldsymbol{l}) = F([l_1, \dots, l_N]) = \prod_{j \in [N]}\frac{l_j}{\max(\boldsymbol{l})}$.
PBF defined per LB is: $    F_i(\boldsymbol{l}_i) = F([l_{i1}, \dots, l_{iN}]) = \prod_{j \in [N]}\frac{l_{ij}}{\max(\boldsymbol{l}_i)}$.
\end{definition}

\begin{proposition}
\label{prop:pro_fairness}
Maximising the product-based fairness is sufficient for minimising the makespan, subjective to the load balancing problem constraints (Eq.~\eqref{eq:lb_cons1} and \eqref{eq:lb_cons2}): $\max F(\boldsymbol{l}) \Rightarrow  \min \max(\boldsymbol{l})$.
\end{proposition}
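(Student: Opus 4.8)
The plan is to reduce the claim to two elementary facts: that the product-based fairness $F$ attains its maximum value exactly at perfectly balanced load vectors, and that, under the problem constraints, a perfectly balanced load vector is precisely what minimises the makespan. Both reductions hinge on the single event ``all coordinates of $\boldsymbol{l}$ are equal,'' which is what makes the two objectives coincide.

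First I would bound $F$ from above. Since every $l_j \ge 0$ and $l_j \le \max(\boldsymbol{l})$, each factor $l_j/\max(\boldsymbol{l})$ lies in $[0,1]$, so $F(\boldsymbol{l}) = \prod_{j\in[N]} l_j/\max(\boldsymbol{l}) \le 1$, and the product equals $1$ if and only if every factor equals $1$, i.e. $l_j = \max(\boldsymbol{l})$ for all $j$, which means all coordinates of $\boldsymbol{l}$ coincide. The degenerate case $\max(\boldsymbol{l})=0$ is immediate, and any configuration with some $l_j=0$ forces $F=0$, so the maximiser is attained at a strictly positive, constant vector. Hence maximising $F$ is equivalent to driving $\boldsymbol{l}$ to a constant vector.

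Next I would exploit the conservation law implicit in the constraint \eqref{eq:lb_cons1}. Multiplying that constraint by $v_j$ gives $v_j l_j(t) = \sum_{i} X_{ij}(t-1) + \sum_i w_i(t)\alpha_{ij}(t)$, and summing over $j$ together with $\sum_j \alpha_{ij}(t)=1$ yields $\sum_{j} v_j l_j(t) = \sum_{i}\sum_j X_{ij}(t-1) + \sum_i w_i(t) =: C$, a quantity fixed by the current backlog and arrivals, hence independent of the allocation $\alpha(t)$. From the inequality $C = \sum_j v_j l_j \le (\max_j l_j)\sum_j v_j$ I then obtain the makespan lower bound $\max_j l_j \ge C/\sum_j v_j$, with equality if and only if $l_j=\max_j l_j$ for every $j$, i.e. once again exactly when $\boldsymbol{l}$ is constant. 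Combining the two steps closes the argument: the allocation maximising $F$ produces $F=1$, which forces $\boldsymbol{l}$ to be constant, and by the lower bound this attains $\max_j l_j = C/\sum_j v_j$, the smallest makespan any feasible allocation can reach; therefore $\max F(\boldsymbol{l}) \Rightarrow \min\max(\boldsymbol{l})$.

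The step I expect to be the main obstacle is the conservation identity, since it is what converts an otherwise opaque constrained optimisation into a transparent load-equalisation problem. One subtlety to check carefully there is that $F$ weights the coordinates uniformly whereas conservation weights them by the capacities $v_j$; these do not misalign, because both the condition $F=1$ and the equality case of the makespan bound reduce to the \emph{same} event, namely that all $l_j$ are equal (equal finish times across servers, with the capacities already absorbed into each $l_j$). I would also note, under the continuous relaxation $\alpha_{ij}\in[0,1]$ with $\sum_i w_i(t)\le\sum_j v_j$, that a perfectly balanced $\boldsymbol{l}$ is feasible, so the $F$-maximiser indeed realises $F=1$; this feasibility is exactly where only sufficiency, and not necessity, can be asserted, consistent with the preceding discussion.
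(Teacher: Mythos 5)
Your proof is correct (under the feasibility caveat you state) but follows a genuinely different route from the paper's. The paper fixes the index $k$ of the largest load, reduces $\max F(\boldsymbol{l})$ to $\max \prod_{j\neq k} l_j$, and applies the AM--GM inequality to bound this product via $\frac{C-l_k}{N-1}$ with $C=\sum_j l_j$ treated as a fixed total workload; maximising that bound is then equivalent to minimising $l_k$, and sufficiency-without-necessity comes from the gap between maximising $F$ and maximising its upper bound. You instead sandwich both objectives against the single event ``$\boldsymbol{l}$ is constant'': $F\le 1$ with equality exactly there, and the capacity-weighted conservation law $\sum_j v_j l_j = C$ gives the makespan lower bound $C/\sum_j v_j$, attained exactly there. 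Your version buys two things. First, it works with the PBF exactly as defined, $\prod_j \bigl(l_j/\max(\boldsymbol{l})\bigr)$ with $(\max\boldsymbol{l})^N$ in the denominator, whereas the paper's manipulation silently replaces it by $\prod_j l_j / \max(\boldsymbol{l})$ --- harmless for the argument's direction but not literally the defined quantity. Second, and more substantively, your invariant $\sum_j v_j l_j$ really is independent of the allocation even when the $v_j$ differ, while the paper's unweighted $C=\sum_j l_j$ is allocation-invariant only for homogeneous servers; your conservation identity is the correct general form, and it yields the explicit optimal makespan $C/\sum_j v_j$ as a bonus. The price is the one you identify: your argument concludes only when a perfectly balanced $\boldsymbol{l}$ is feasible, i.e.\ when no server's standing backlog satisfies $\sum_i X_{ij}(t-1)/v_j > C/\sum_j v_j$; if balance is infeasible, $F<1$ everywhere and your equality-case reasoning is silent. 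The paper's AM--GM route carries an analogous implicit restriction (its equality case requires the non-maximal loads to be equal), so neither proof is strictly more general, but you should state the backlog condition as an explicit hypothesis rather than attributing feasibility to the relaxation $\alpha_{ij}\in[0,1]$ and the stability constraint alone, since those do not by themselves rule out an over-full server.
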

Proofs of proposition \ref{prop:var_fairness} and \ref{prop:pro_fairness} are in Appendix~\ref{sec:app_vbf} and\ref{sec:app_pbf}, respectively. From proposition~\ref{prop:var_fairness} and \ref{prop:pro_fairness}, we know that the two types of fairness can serve as an effective alternative objective for optimising the makespan, which will be leveraged in our proposed MARL method as valid reward functions.

\subsection{Game Theory Framework}

Markov game is defined as $\mathcal{MG}(H, M, \mathcal{S}, \mathcal{A}_{\times M}, \mathbb{P}, r_{\times M})$, where $H$ is the horizon of the game, $M$ is the number of player in the game, $\mathcal{S}$ is the state space, $\mathcal{A}_{\times M}$ is the joint action space of all players, $\mathcal{A}_i$ is the action space of player $i$, $\mathbb{P}=\{\mathbb{P}_h\}, h\in[H]$ is a collection of transition probability matrices $\mathbb{P}_h: \mathcal{S}\times \mathcal{A}_{\times M} \rightarrow \Pr(\mathcal{S})$, $r_{\times M}=\{r_i|i\in[M]\}, r_i:\mathcal{S}\times\mathcal{A}_{\times M}\rightarrow \mathbb{R}$ is the reward function for $i$-th player given the joint actions.  The stochastic policy space for the $i$-th player in $\mathcal{MG}$ is defined as $\Pi_i: \mathcal{S} \rightarrow \Pr(\mathcal{A}_i)$, $\Pi=\{\Pi_i\}, i\in[M]$.

For the Markov game $\mathcal{MG}$, the state value function $V_{i, h}^{\boldsymbol{\pi}}: \mathcal{S}\rightarrow \mathbb{R}$ and state-action value function $Q_{i, h}^{\boldsymbol{\pi}}: \mathcal{S}\times \mathcal{A}\rightarrow \mathbb{R}$ for the $i$-th player at step $h$ under policy $\boldsymbol{\pi}\in\Pi_{\times M}$ is defined as:
\begin{equation} \label{eq:V_value}
{\small\begin{aligned}
	 V_{i, h}^{\boldsymbol{\pi}}(s):= \mathbb{E}_{\boldsymbol{\pi},\mathbb{P}}\bigg[\sum_{h' =
        h}^H r_{i, h'}(s_{h'}, \boldsymbol{a}_{h'}) \bigg| s_h = s\bigg], 	 Q_{i, h}^{\boldsymbol{\pi}}(s, \boldsymbol{a}):= \mathbb{E}_{\boldsymbol{\pi},\mathbb{P}}\bigg[\sum_{h' =
        h}^H r_{i, h'}(s_{h'}, \boldsymbol{a}_{h'}) \bigg| s_h = s, a_h=\boldsymbol{a}\bigg].
\end{aligned}}
\end{equation}

\begin{definition}($\epsilon$-approximate Nash equilibrium) Given a Markov game $\mathcal{MG}(H, M, \mathcal{S}, \mathcal{A}_{\times M}, \mathbb{P}, \Pi_{\times M}, r_{\times M})$, let $\pi_{-i}$ be the policies of the players except for the $i$-th player, the policies $(\pi_i^*, \pi_{-i}^*)$ is an $\epsilon$-Nash equilibrium if $\forall i\in[M], \exists \epsilon>0$, 
{\small
\begin{align}
V_i^{\pi^*_i, \pi^*_{-i}}(s)\ge V_i^{\pi_i, \pi^*_{-i}}(s)-\epsilon, \forall \pi_i \in \Pi_i.
\end{align}}
If $\epsilon=0$, it is an exact Nash equilibrium.
\end{definition}

\begin{definition}(Markov Potential Game) A Markov game $\mathcal{M}(H, M, \mathcal{S}, \mathcal{A}_{\times M}, \mathbb{P}, \Pi_{\times M}, r_{\times M})$ is a Markov potential game (MPG) if $\forall i\in[M], \pi_i, \tilde{\pi}_i\in\Pi_i, \pi_{-i}\in\Pi_{-i}, s\in\mathcal{S}$, 
{\small
\begin{align}
V_i^{\pi_i, \pi_{-i}}(s) - V_i^{\tilde{\pi}_i, \pi_{-i}}(s)=\phi^{\pi_i, \pi_{-i}}(s)-\phi^{\tilde{\pi}_i, \pi_{-i}}(s),
\end{align}}
where $\phi(\cdot)$ is the potential function independent of the player index. 
\end{definition}

\begin{lemma} Pure NE (PNE) always exists for PG, local maximisers of potential function are PNE. PNE also exists for MPG. \cite{monderer1996potential}
\label{lem:pne}
\end{lemma}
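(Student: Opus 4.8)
The plan is to treat the two regimes separately, in each case reducing a Nash condition to a statement about maximisers of the potential. For the normal-form potential game, recall that a PG carries a potential $\phi$ with $u_i(a_i, a_{-i}) - u_i(\tilde a_i, a_{-i}) = \phi(a_i, a_{-i}) - \phi(\tilde a_i, a_{-i})$ for every player $i$, every pair $a_i, \tilde a_i \in \mathcal{A}_i$, and every $a_{-i}$; the identity says a unilateral deviation moves player $i$'s payoff by exactly the amount it moves $\phi$. First I would fix a profile $a^*$ that is a local maximiser of $\phi$ with respect to unilateral deviations, i.e.\ $\phi(a_i', a^*_{-i}) \le \phi(a^*)$ for all $i$ and all $a_i'$. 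Substituting into the identity gives $u_i(a_i', a^*_{-i}) - u_i(a^*) = \phi(a_i', a^*_{-i}) - \phi(a^*) \le 0$, so no unilateral deviation is profitable and $a^*$ is a PNE; reading the same chain backwards shows every PNE is a local maximiser, which gives the claimed correspondence. Existence is then immediate on a finite action space, where $\phi$ attains a global maximum that is a fortiori a local maximiser, hence a PNE.

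For the Markov potential game I would run the analogue at the level of policies, using the MPG identity $V_i^{\pi_i, \pi_{-i}}(s) - V_i^{\tilde\pi_i, \pi_{-i}}(s) = \phi^{\pi_i, \pi_{-i}}(s) - \phi^{\tilde\pi_i, \pi_{-i}}(s)$, which holds at every $s$. Fix a full-support initial distribution $\rho$, write $V_i^{\boldsymbol{\pi}}(\rho) := \mathbb{E}_{s\sim\rho}[V_i^{\boldsymbol{\pi}}(s)]$, and define the scalar potential $\Phi(\boldsymbol{\pi}) := \mathbb{E}_{s\sim\rho}[\phi^{\boldsymbol{\pi}}(s)]$. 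Averaging the MPG identity against $\rho$ yields $V_i^{\pi_i, \pi_{-i}}(\rho) - V_i^{\tilde\pi_i, \pi_{-i}}(\rho) = \Phi(\pi_i, \pi_{-i}) - \Phi(\tilde\pi_i, \pi_{-i})$. Let $\boldsymbol{\pi}^*$ maximise $\Phi$ over the finite set of deterministic joint policies. Then for every deterministic deviation $\pi_i'$, maximality of $\Phi$ together with the identity gives $V_i^{\pi^*_i, \pi^*_{-i}}(\rho) \ge V_i^{\pi_i', \pi^*_{-i}}(\rho)$, so $\pi^*_i$ is $\rho$-optimal among deterministic responses to $\pi^*_{-i}$.

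The main obstacle is upgrading this to the full Nash condition of the excerpt, which must hold against arbitrary (possibly mixed) deviations $\pi_i \in \Pi_i$ and at every state $s$, not merely in $\rho$-weighted value. Here I would invoke the fundamental theorem of finite-horizon MDPs: once $\pi^*_{-i}$ is fixed, player $i$ faces a standard MDP that admits a deterministic Markov policy which is simultaneously optimal from every initial state. This has two consequences. Restricting the best response to deterministic policies is without loss, so $\pi^*_i$, being $\rho$-optimal among deterministic responses, is in fact $\rho$-optimal among all responses; and because $\rho$ has full support while a single policy attains the optimum at every state at once, $\rho$-optimality forces $V_i^{\pi^*_i, \pi^*_{-i}}(s) \ge V_i^{\pi_i, \pi^*_{-i}}(s)$ for every $s$ and every $\pi_i$. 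Thus $\boldsymbol{\pi}^*$ is a Nash equilibrium at all states and is moreover pure, establishing that a PNE exists for the MPG. The delicate points to check are exactly this MDP reduction --- that deterministic responses attain the optimum under the reward/horizon structure in use --- and that the potential identity is only ever applied within the finite deterministic policy set, so the finite-maximiser existence argument remains valid.
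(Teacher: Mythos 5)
The paper never actually proves this lemma: it is imported wholesale from the literature, with the normal-form statement credited to \cite{monderer1996potential} and the MPG statement resting on the Markov-potential-game references \cite{leonardos2021global, fox2022independent} cited in Section 2. Your proposal therefore supplies a self-contained argument where the paper supplies only a citation, and the argument you give is essentially the standard one from that literature. The normal-form half is exactly the Monderer--Shapley argument: the potential identity converts unilateral deviations into equal changes of $\phi$, so any profile that is a local maximiser with respect to unilateral deviations is a PNE, and finiteness of the action space guarantees a global maximiser exists. The MPG half follows the route of \cite{leonardos2021global}: average the state-wise identity against a full-support initial distribution $\rho$ to obtain a scalar potential $\Phi$ on joint policies, maximise $\Phi$ over the finite set of deterministic joint policies, and upgrade $\rho$-optimality to the Nash condition using the fact that, with $\pi^*_{-i}$ fixed, player $i$ faces a finite MDP admitting a deterministic Markov best response that is optimal from every state simultaneously. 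Two caveats should be recorded if this proof were inserted into the paper. First, both halves require finiteness (finite players and actions in the normal-form case; finite states, actions and horizon in the Markov case); this matches the cited results but is not stated in the lemma, and notably does not literally cover the load-balancing game of this paper, whose action simplices $\{\alpha_{ij}\}$ are continuous. Second, your full-support-$\rho$ argument establishes $V_{i,1}^{\pi^*_i,\pi^*_{-i}}(s)\ge V_{i,1}^{\pi_i,\pi^*_{-i}}(s)$ at every state $s$, i.e., the Nash condition for the step-$1$ value functions; the paper's definition leaves the step index implicit, and if one insists on the inequality at every step $h$, states unreachable under $\pi^*$ require an additional (standard but not automatic) modification argument. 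Neither point is a defect relative to the paper, which proves nothing here; they are simply the hypotheses under which the cited results hold.
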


\begin{theorem}
\label{thm:mpg_vbf}
Multi-agent load balancing is MPG with the VBF $F_i(\boldsymbol{l}_i)$ as the reward $r_i$ for each LB agent $i\in[M]$, then suppose for $\forall s\in \mathcal{S}$ at step $h\in[H]$, the potential function is time-cumulative total fairness: $\phi^{\pi_i, -\pi_i}(s)=\sum_{t =h}^H F^{\pi_i, -\pi_i}(\boldsymbol{l}(t))$.
\end{theorem}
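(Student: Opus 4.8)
The plan is to verify directly the defining identity of a Markov potential game, namely $V_i^{\pi_i,\pi_{-i}}(s) - V_i^{\tilde{\pi}_i,\pi_{-i}}(s) = \phi^{\pi_i,\pi_{-i}}(s) - \phi^{\tilde{\pi}_i,\pi_{-i}}(s)$, where each agent's reward is $r_i = F_i(\boldsymbol{l}_i)$ and the candidate potential is the expected cumulative total fairness $\phi_h^{\boldsymbol{\pi}}(s) = \mathbb{E}_{\boldsymbol{\pi},\mathbb{P}}[\sum_{t=h}^H F(\boldsymbol{l}(t)) \mid s_h = s]$ (the stochastic reading of the $\phi$ written in the statement). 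First I would expand both value functions via Eq.~\eqref{eq:V_value}, writing $V_{i,h}^{\boldsymbol{\pi}}(s) = \mathbb{E}_{\boldsymbol{\pi},\mathbb{P}}[\sum_{t=h}^H F_i(\boldsymbol{l}_i(t)) \mid s_h = s]$, and expand $\phi$ analogously. By linearity of expectation the target identity reduces to a per-step statement: it suffices to show that at each $t$ the increment $F_i(\boldsymbol{l}_i(t)) - F_i(\tilde{\boldsymbol{l}}_i(t))$ induced by the deviation equals the increment $F(\boldsymbol{l}(t)) - F(\tilde{\boldsymbol{l}}(t))$ of the total fairness, which is exactly the content of Lemma~\ref{lem:vbf}.

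The step needing care is justifying this per-step application inside two expectations taken under different trajectory distributions, one generated by $(\pi_i,\pi_{-i})$ and the other by $(\tilde{\pi}_i,\pi_{-i})$. To handle this I would couple the two processes through the shared opponent policies $\pi_{-i}$ and the common stochasticity $\{\delta_i(t)\}$, and then invoke the per-agent decomposition from the problem description: the state decomposes as $X_j(t)=\sum_i X_{ij}(t)$ and $l_j(t)=\sum_i l_{ij}(t)$, where each contribution $X_{ij},\,l_{ij}$ evolves from agent $i$'s own action $\alpha_{ij}$ and workload $w_i$ alone (the footnoted recursion $X_{ij}(T)=\sum_t \max\{0, w_i(t)\alpha_{ij}(t)-v_j/M\}$). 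Consequently, altering only agent $i$'s policy leaves every other agent's contribution trajectory $\{l_{i'j}(t)\}_{i'\neq i}$ unchanged along the coupling, so at every step the configurations $\boldsymbol{l}(t)$ and $\tilde{\boldsymbol{l}}(t)$ differ exactly in agent $i$'s component, which is precisely the regime where Lemma~\ref{lem:vbf} applies.

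With the coupling in place I would apply Lemma~\ref{lem:vbf} term by term and sum over $t=h,\dots,H$, yielding $\mathbb{E}[\sum_t (F_i(\boldsymbol{l}_i(t)) - F_i(\tilde{\boldsymbol{l}}_i(t)))] = \mathbb{E}[\sum_t (F(\boldsymbol{l}(t)) - F(\tilde{\boldsymbol{l}}(t)))]$; regrouping recovers $V_i^{\pi_i,\pi_{-i}}(s) - V_i^{\tilde{\pi}_i,\pi_{-i}}(s)$ on the left and $\phi^{\pi_i,\pi_{-i}}(s) - \phi^{\tilde{\pi}_i,\pi_{-i}}(s)$ on the right. Since $F$ carries no player index, the resulting $\phi$ is a legitimate player-independent potential, so the Markov game with VBF rewards meets the MPG definition and Lemma~\ref{lem:pne} then guarantees existence of a pure Nash equilibrium. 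I expect the main obstacle to be exactly the expectation and coupling bookkeeping of the second paragraph: Lemma~\ref{lem:vbf} is pointwise in the load vectors, and lifting it to the cumulative value functions is only legitimate because the own-load dynamics decouple across agents; I would therefore state this decoupling explicitly as the property that keeps $\{l_{i'j}\}_{i'\neq i}$ invariant under a unilateral deviation.
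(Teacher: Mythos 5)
Your proposal is correct and follows essentially the same route as the paper's proof: expand both value functions as expected cumulative per-LB VBF rewards, apply Lemma~\ref{lem:vbf} term by term across the horizon, and regroup into the difference of the time-cumulative total-fairness potential. The coupling argument you add (using the decoupled per-agent dynamics $X_j=\sum_i X_{ij}$, $l_j=\sum_i l_{ij}$ to keep the opponents' contributions invariant under a unilateral deviation) is exactly the justification the paper leaves implicit when it passes from expectations under two different policy profiles to a pointwise per-step application of Lemma~\ref{lem:vbf}, so your version is, if anything, more complete than the paper's.
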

The proof of the theorem is based on Lemma~\ref{lem:vbf}, and it's provided in Appendix~\ref{sec:app_vbf_mpg}.
This theorem is essential for establishing our method, since it proves that multi-agent load balancing problem can be formulated as a MPG with the time-cumulative VBF as its potential function. Also, the choice of per-LB VBF as reward function for individual agent is critical for making it MPG, it is easy to verify that PBF cannot guarantee such property. From Lemma~\ref{lem:pne} we know the maximiser of potential function is the NE of MPG, and from proposition~\ref{prop:var_fairness} it is known that maximising the VBF gives the sufficient condition for minimising the makespan. Therefore, an effective independent optimisation with respect to the individual reward function specified in the above theorem will lead the minimiser of makespan for load balancing tasks. The effective independent optimisation here means the NE of MPG is achieved.

\subsection{Distributed LB Method}
\label{sec:rl}

\begin{figure}[t]
	\centering
	\centerline{\includegraphics[width=.9\columnwidth]{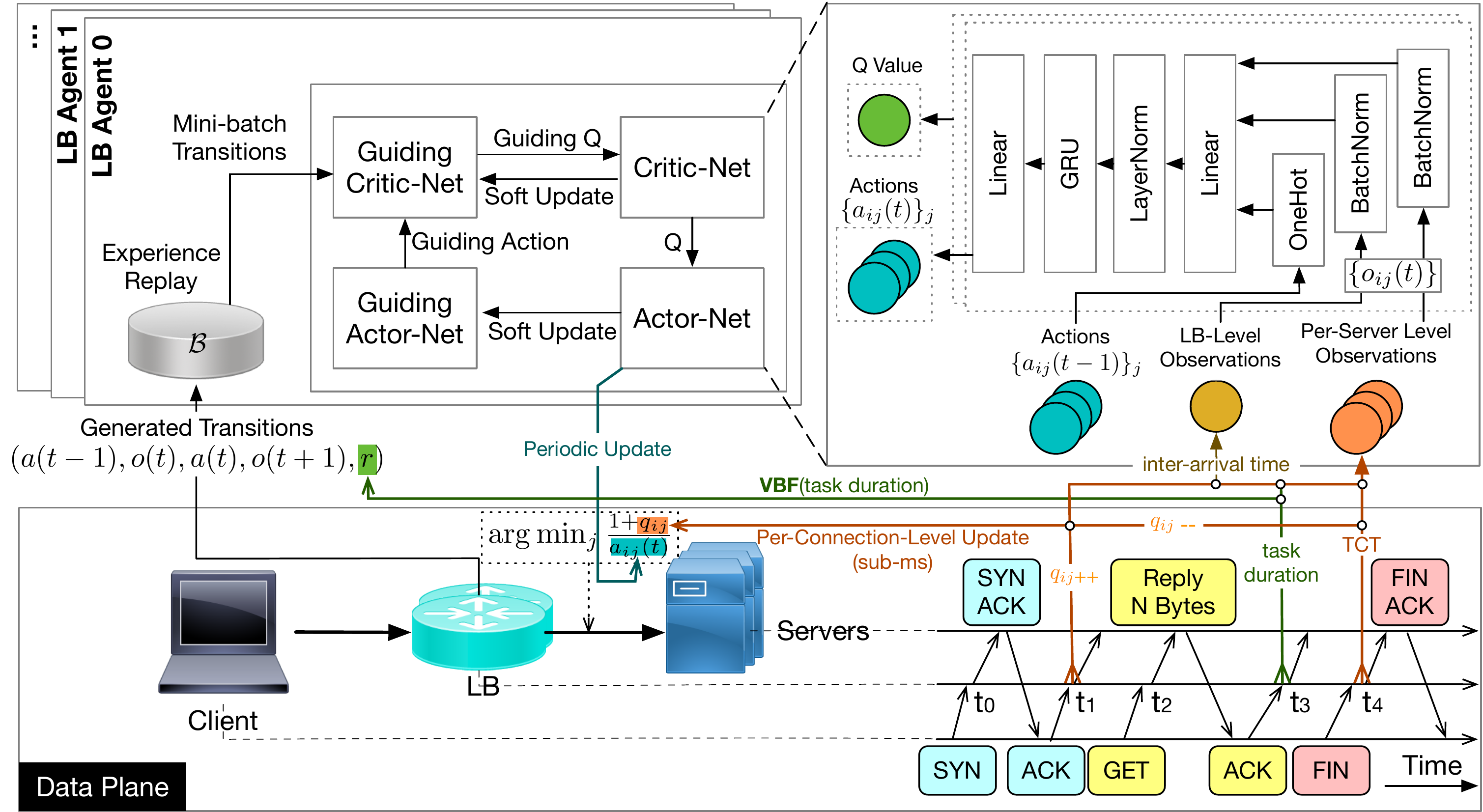}}
	\caption{Overview of the proposed distributed MARL framework for network LB. }
	\label{fig:design}
\end{figure}

With the above analysis, the load balancing problem can be formulated as an episodic version of multi-player partially observable Markov game, which we denote as $\mathcal{POMG}(H, M, \mathcal{S}, \mathcal{O}_{\times M}, \mathbb{O}_{\times M}, \mathcal{A}_{\times M}, \mathbb{P}, r_{\times M})$, where $M, H, \mathcal{S},\mathcal{A}_{\times M}$ and $\mathbb{P}$ follow the same definitions as in Markov game $\mathcal{MG}$, $\mathcal{O}_{\times M}$ contains the observation space $O_i$ for each player, $\mathbb{O}=\{\mathbb{O}_h\}, h\in[H]$ is a collection of observation emission matrices, $\mathbb{O}_{i, h}: \mathcal{S} \rightarrow \Pr(\mathcal{O}_{i})$, $r_{\times M}=\{r_i|i\in[M]\}, r_i:\mathcal{O}_i\times\mathcal{A}_{\times M}\rightarrow \mathbb{R}$ is the reward function for $i$-th LB agent given the joint actions. The stochastic policy space for the $i$-th agent in $\mathcal{POMG}$ is defined as $\Pi_i: \mathcal{O}_i \rightarrow \Pr(\mathcal{A}_i)$. As discussed in Sec.~\ref{sec:background}, the partial observability comes from the fundamental configuration of network LBs in DC networks, which allows LBs to observe only a partial of network traffic and does not give LBs information about the tasks (\eg expected workload) distributed from each LB. The reward functions in our experiments are variants of distribution fairness introduced in Sec.~\ref{sec:fairness}. The potential functions can be defined accordingly based on the two fairness indices. The overview of the proposed distributed MARL framework is shown in Fig.~\ref{fig:design}.

In MPG, independent policy gradient allows finding the maximum of the potential function, which is the PNE for the game. This inspires us to leverage the policy optimisation in a decomposed manner, \emph{i.e.}, distributed RL for policy learning of each LB agent. However, due to the partial observability of the system and the challenge of directly estimating the makespan (Eq.~\eqref{eq:makespan}), each agent cannot have a direct access to the global potential function. To address this problem, the aforementioned fairness (Sec.~\ref{sec:fairness}) can be deployed as the reward function for each agent, which makes the value function as a valid alternative for the potential function as an objective. This also transforms the joint objective (makespan or potential) to individual objectives (per LB fairness) for each agent. Proposition~\ref{prop:var_fairness} and \ref{prop:pro_fairness} verify that optimising towards these fairness indices is sufficient for minimising the makespan.

\begin{algorithm}[tbp]
\footnotesize
\caption{Distributed LB for MPG}
\label{alg:dec_lb_mpg}
\begin{algorithmic}[1]
\STATE \textbf{Initialise:}
\STATE \quad LB policy $\pi_{\theta_i}$ and critic $Q_{\phi_i}$ networks, replay buffer $\mathcal{B}_i, \forall i \in [M]$;
\STATE \quad server processing speed function $v_j, \forall j \in [N]$;
\STATE \quad initial observed instant queue length on server $j$ by the $i$-th LB: $q_{ij}=0,  \forall i\in[M], j\in[N]$.
\WHILE {not converge}
\STATE Reset server load state $X_j(1) \gets 0, \forall j\in[N]$
\STATE Each LB agent $i$ ($i\in[M]$) receives individual observation ${o}_i(1)$
\FOR {$t=1,\dots, H$}
    \STATE Initialise distributed workload $m_{ij}, w_{i}(t) \gets 0, i\in[M], j\in[N]$ 
    \STATE Get actions $a_{i}(t) \gets \{a_{ij}(t)\}_{j=1}^N = \pi_{\theta_i}({o}_{i}(t)), i\in[M]$
    \FOR {job $\tilde{w}$ arrived at LB $i$ between timestep [$t$, $t+1$)}
        \STATE LB $i$ assigns $\tilde{w}$ to server $j = \arg \min_{k\in[N]}\frac{q_{ik}(t)+1}{a_{ik}(t)}$ \alglinelabel{line:job_update_start}
        \STATE $m_{ij} \gets m_{ij}+\tilde{w}$, $w_{i}(t)\gets w_{i}(t)+ \tilde{w}$
        \STATE $\alpha_{ij}(t) \gets \frac{m_{ij}}{w_{i}(t)}$ \alglinelabel{line:job_update_end}
    \ENDFOR
    
    \FOR {each server $j$}
        \STATE Update workload: {$X_{ij}(t+1) \gets \max\{X_{ij}(t)+w_{i}(t)\alpha_{ij}(t) - \frac{v_j}{M}, 0\}$}
        \STATE $X_{j}(t+1) \gets \sum_{i=1}^{M}X_{ij}(t)$
    \ENDFOR
    \STATE Each agent receives individual reward $r_i(t)$
	\STATE Each agent $i$ collects observation ${o}_i(t+1), i\in[M]$
    \STATE Update replay buffer: $\mathcal{B}_i=\mathcal{B}_i\bigcup (a_i(t-1), {o}_i(t), a_i(t), r_i(t), {o}_i(t+1)), i\in[M]$
\ENDFOR
\STATE Update critics with gradients: $\nabla_{\phi_i}\mathbb{E}_{(o_i,a_i, r_i, o'_i)\sim \mathcal{B}_i}\bigg[\bigg(Q_{\phi_i}(o_i,a_i)-r_i-\gamma V_{\tilde{\phi}_i}(o_i^\prime)\bigg)^2\bigg]$\\
\STATE where $V_{\tilde{\phi}_i}(o_i^\prime)=\mathbb{E}_{ (o_i^\prime, a'_i)\sim\mathcal{B}_i}[Q_{\tilde{\phi}_i}(o_i^\prime, a_i^\prime)-\alpha\log\pi_{\theta_i}(a_i^\prime|o_i^\prime)], i\in[M]$
\STATE Update policies with gradients: -$\nabla_{\theta_i}\mathbb{E}_{o_i\sim\mathcal{B}_i}[\mathbb{E}_{a\sim\pi_{\theta_i}}[\alpha\log\pi_{\theta_i}(a_i|o_i)-Q_{\phi_i}(o_i,a_i)]], i\in[M]$
\ENDWHILE
\RETURN final models of learning agents
\end{algorithmic}
\end{algorithm}

Alg.~\ref{alg:dec_lb_mpg} shows the proposed distributed LB for load balancing problem, which is a partially observable MPG. The distributed policy optimisation is based on Soft Actor-Critic (SAC)~\cite{haarnoja2018soft} algorithm, which is a type of maximum-entropy RL method. It optimises the objective $\mathbb{E}[\sum_t \gamma^t r_t+\alpha \mathcal{H(\pi_\theta)}]$, whereas $\mathcal{H}(\cdot)$ is the entropy of the policy $\pi_\theta$. Specifically, the critic $Q$ network is updated with gradient $\nabla_\phi\mathbb{E}_{o,a}\bigg[\bigg(Q_\phi(o,a)-r(o,a)-\gamma \mathbb{E}_{o^\prime}[V_{\tilde{\phi}}(o^\prime)]\bigg)^2\bigg]$, where $V_{\tilde{\phi}}(o^\prime)=\mathbb{E}_{a^\prime}[Q_{\tilde{\phi}}(o^\prime, a^\prime)-\alpha\log\pi_\theta(a^\prime|o^\prime)]$ and $Q_{\tilde{\phi}}$ is the target $Q$ network; the actor policy $\pi_\theta$ is updated with the gradient $\nabla_\theta\mathbb{E}_o[\mathbb{E}_{a\sim\pi_\theta}[\alpha\log\pi_\theta(a|o)-Q_\phi(o,a)]]$. Other key elements of RL methods involve the observation, action and reward function, which are detailed as following.

\textbf{Observation.} 
Each LB agent partially observes over the traffic that traverses through itself, including per-server-level and LB-level measurements. For each LB, per-server-level observations consist of -- for each server -- the number of on going tasks, and sampled task duration and task completion time (TCT).
Specifically, in Alg.~\ref{alg:dec_lb_mpg} line \ref{line:job_update_start}-\ref{line:job_update_end}, $w_{i}$ is the coming workload on servers assigned by $i$-th LB, and it is not observable for LB. $q_{ik}+1$ is the locally observed number of tasks on $k$-th server by $i$-th LB, due to the real-world constraints of limited observability at the Transport layer. The ``+1'' is for taking into account the new-coming task. 
Observations of task duration and TCT samples, along with LB-level measurements which sample the task inter-arrival time as an indication of overall system load state, are reduced to 5 scalars -- \ie average, 90th-percentile, standard deviation, discounted average and weighted discounted average\footnote{Discounted average weights are computed as $0.9^{t^\prime-t}$, where $t$ is the sample timestamp and $t^\prime$ is the moment of calculating the reduced scalar.} -- as inputs for LB agents.

\textbf{Action.}
To bridge the different timing constraints between the control plane and data plane, each LB agent assigns the $j$-th server to newly arrived tasks using the ratio of two factors, $\arg \min_{k\in[N]} \frac{q_{ik}+1}{a_{ik}}$, where the number of on-going tasks $q_{ik}$ helps track dynamic system server occupation at per-connection level -- which allows making load balancing decision at $\mu$s-level speed -- and $a_{ik}$ is the periodically updated RL-inferred server processing speed. As in line \ref{line:job_update_end} of Alg.~\ref{alg:dec_lb_mpg}, $\alpha_{ij}(t)$ is a statistical estimation of workload assignment distribution at time interval $[t, t+1)$.

\textbf{Reward.} The individual reward for distributed MPG LB is chosen as the VBF (as Def.~\ref{def:vbf}) of the discounted average of sampled task duration measured on each LB agent, such that the LB group jointly optimise towards the potential function defined in Eq.~\eqref{thm:mpg_vbf}.
Task duration information is gathered as the time interval between the end of connection initialisation (\eg $3$-way handshake for TCP traffic) and the acknowledgement to the first data packet (\eg the first ACK packet for TCP traffic).
Given the limited and partial observability of LB agents, task duration information approximates the remaining workload $\boldsymbol{l}$ by measuring the queuing and processing delay for new-coming tasks on each server.
This PBF- and MS-based rewards are also implemented for CTDE MARL algorithm as a comparison.

\textbf{Model.} The architecture of the proposed RL framework is depicted in Fig.~\ref{fig:design}.
Each LB agent consists of a replay buffer, and a pair of actor-critic networks, whose architecture is depicted on the top right.
There is also a pair of guiding actor-critic networks, with the same network architectures but updated in a delayed and soft manner.
Each LB agent takes observations $o_i(t)$ extracted from the data plane (\eg numbers of ongoing tasks $\{q_{ij}\}$, task duration, TCT) and actions from previous timestep $a_{i}(t-1)$ as inputs, and periodically generates new actions $a_{i}(t)$, which is used to update the server assignment function $\arg \min_{j\in[N]} \frac{q_{ij}+1}{a_{ij}}$ in the data plane. The gated recurrent units (GRU)~\cite{chung2014empirical} are applied for all agents to leverage the sequential history information for handling partial observability. 

\section{Evaluation}
\label{sec:experiment}

We developed (i) an event-based simulator (App.~\ref{app:implement-simulator}) to study the distance between the NE achieved by the proposed algorithm and the NE achieved by the theoretical optimal load balancing policy (with perfect observation), and (ii) a realistic testbed (App.~\ref{app:implement-testbed}) on physical servers in a DC network providing Apache web services, with real-world network traffic~\cite{wiki_traces}, to evaluate the real-world performance of the proposed algorithm, in comparison with in-production state-of-the-art LB~\cite{maglev}.

\begin{figure}[tbp]
	\centering
	\begin{subfigure}{0.4\columnwidth}
		\centering
		\includegraphics[height=1.3in]{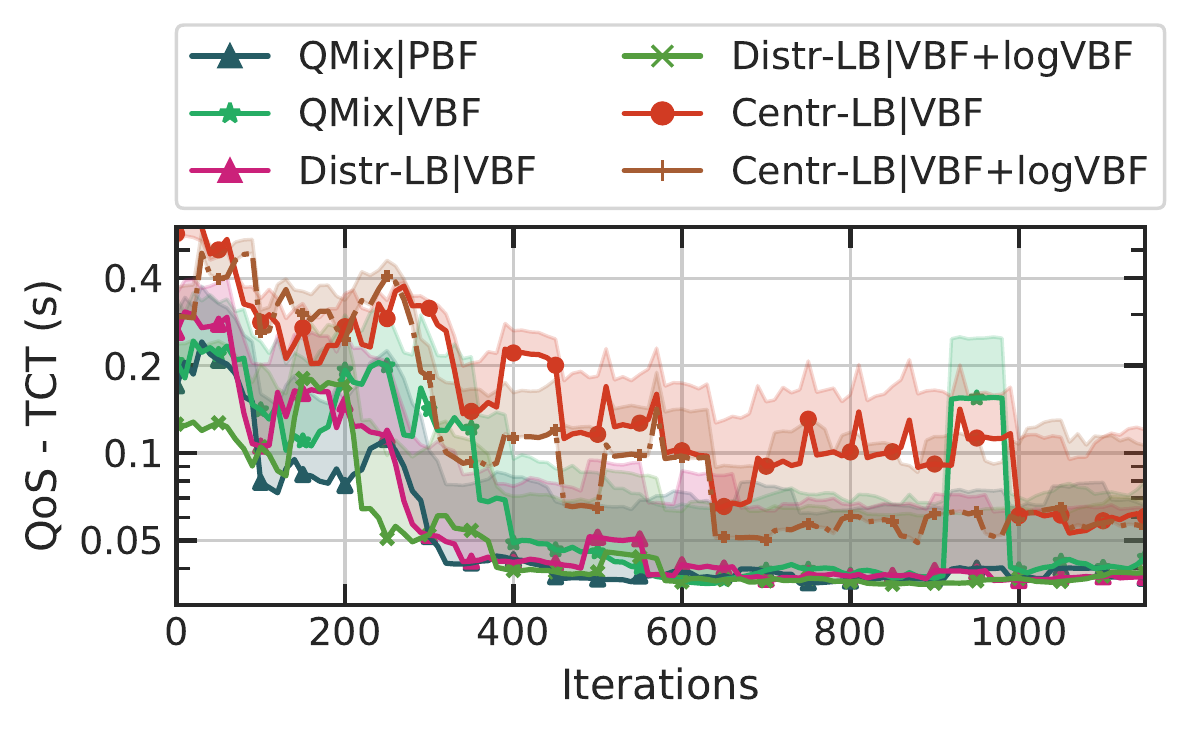}
		\vskip -.1in
		\caption{Learning curves.}
		\label{fig:eval-train}
	\end{subfigure}
	\hspace{.05in}
	\begin{subfigure}{0.55\columnwidth}
		\centering
		\includegraphics[height=1.3in]{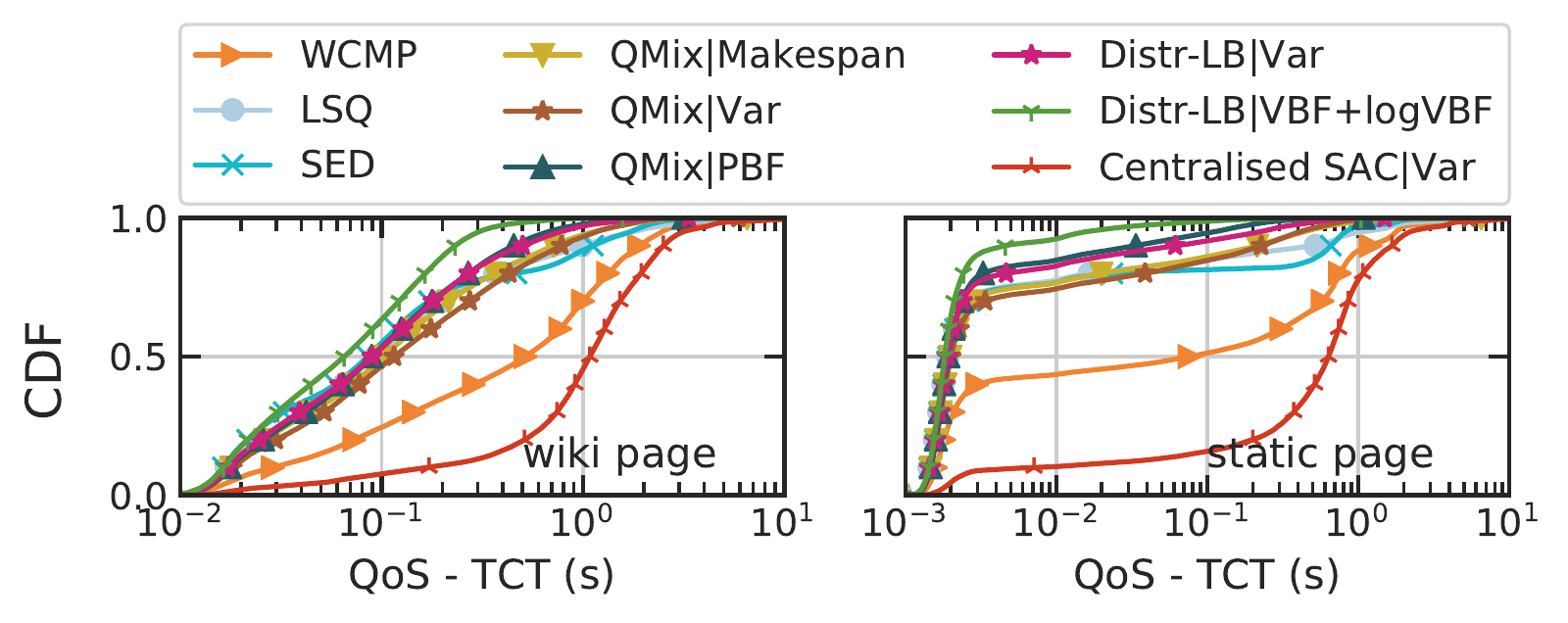}
		\vskip -.1in
		\caption{CDF of TCT.}
		\label{fig:eval-cdf}
	\end{subfigure}
	\caption{Experimental results show that the proposed distributed RL framework using proposed VBF as rewards converges and effectively achieves better load balancing performance (lower TCT and better QoS) than existing LB algorithms and CTDE RL algorithms.}
	\label{fig:eval}
\end{figure}

\begin{table}[tbp]
    \scriptsize
    \centering
    \caption{Comparison of average QoS (s) in moderate-scale real-world network setup.}
    \begin{tabular}{c|c|c|c|c|c}
    \toprule
    \multicolumn{2}{c}{\multirow{2}{*}{Method}} &  \multicolumn{2}{|c|}{Period III ($758.787$ queries/s)} & \multicolumn{2}{c}{Period IV ($784.522$ queries/s)} \\
    \cline{3-6}
     \multicolumn{2}{c|}{} & \multicolumn{1}{c|}{Wiki} & \multicolumn{1}{c|}{Static} & \multicolumn{1}{c|}{Wiki} & \multicolumn{1}{c}{Static} \\
    \hline
     \multicolumn{2}{c|}{WCMP} &  $0.412\pm0.101$ & $0.134\pm0.059$ & $ 0.834\pm0.323$ & $0.492\pm0.276 $ \\ 
      \multicolumn{2}{c|}{LSQ} & $0.620\pm0.442$ & $0.339\pm0.316$ & $0.357\pm0.373 $ & $0.173\pm0.299 $  \\ 
      \multicolumn{2}{c|}{SED} &  $0.215\pm0.210$ & $0.051\pm0.081$ & $0.346\pm0.496$ &$0.169\pm0.330 $ \\ \cline{1-2}
      \multirow{2}{*}{\textbf{RLB-SAC}~\cite{yao2022reinforced}} 
      & Jain & $0.193\pm0.073$ & $0.026\pm0.022$ & $0.204\pm0.084$ & $0.039\pm0.047$ \\
      & G    & $0.149\pm0.049$ & $0.015\pm0.011$ & $0.155\pm0.052$ & $0.011\pm0.011$ \\	\cline{1-2}
      \multirow{3}{*}{\textbf{QMix-LB}} & MS &  $0.217\pm0.157$ & $0.048\pm0.069$ &$0.263\pm0.202 $ &$0.073\pm0.092 $  \\ 
    & VBF &  $0.141\pm0.025$ & $0.008\pm0.004$ &$ 0.286\pm0.162$ & $0.068\pm0.066 $ \\ 
      & PBF &  $0.211\pm0.153$ & $0.047\pm0.078$ & $0.181\pm0.042 $& $0.018\pm0.009 $ \\ \cline{1-2}
    \multirow{2}{*}{\makecell{\textbf{Distr-LB}\\(this paper)}} & VBF &  $0.159\pm0.054$ & $0.017\pm0.009$ & $0.196\pm0.091 $& $ 0.032\pm0.033$\\ 
    & VBF+$\log$VBF &  $\mathbf{0.108\pm0.022}$ & $\mathbf{0.004\pm0.001}$ & $\mathbf{0.104\pm0.013}$ & $\mathbf{0.006\pm0.003} $\\ \cline{1-2}
    \multirow{2}{*}{\textbf{Centr-LB}} & VBF & $1.068\pm0.386$ & $0.570\pm0.378$ & $ 1.378\pm0.377$& $ 0.867\pm0.350$\\ 
    & VBF+$\log$VBF & $0.759\pm0.254$ & $0.306\pm0.222$ & $ 1.013\pm0.168$&$ 0.520\pm0.167$ \\
    \bottomrule
    \end{tabular}
    \label{tab:compare_small_scale_518}
\end{table}

\textbf{Moderate-Scale Real-World Testbed:} As depicted in Fig.~\ref{fig:eval-train}, in a moderate-scale real-world DC network setup with $2$ LB agents and $7$ servers, after $120$ episodes of training, the proposed distributed LB (Distr-LB) algorithm is able to learn from the environment based on VBF as rewards, and it converges to offer better QoS than QMix.
Centralised RL agent (Centr-LB) has difficulties to learn within $120$ episodes because of the increased state and action space.
An empirical finding is that, by adding a log term to the VBF-based reward for Distr-LB, we help LB agents to become more sensitive to close-to-$0$ VBF during training ($\nabla_x \log f(x)>\nabla_xf(x) \text{ when } f(x)< 1$), therefore achieving better load balancing performance.
As depicted in Fig.~\ref{fig:eval-cdf}, when comparing with in-production LB algorithms (WCMP, LSQ, SED), Distr-LB shows clear performance gains and reduced TCT for both types of web pages -- Wikipedia pages require to query SQL databases thus they are more CPU-intensive, while static pages are IO-intensive.
The comparison of average TCT using different LB algorithms is shown in Table~\ref{tab:compare_small_scale_518} ($99$th percentile TCT in Table~\ref{tab:compare_small_scale_full_99}).
The proposed Distr-LB also shows superior performance than the RL-based solution (RLB-SAC)~\cite{yao2022reinforced} because of (i) a well designed MARL framework, and (ii) the use of recurrent neural network to handle load balancing problem as a sequential problem.

\begin{table}[tbp]
\scriptsize
\centering
\caption{Comparison of average QoS (s) in moderate-scale simulator for different types of applications.}
\begin{tabular}{c|c|c|c|c}
\toprule
\multicolumn{2}{c|}{} & \multicolumn{1}{c|}{50\%-CPU+50\%-IO} &  \multicolumn{1}{c|}{75\%-CPU+25\%-IO} & \multicolumn{1}{c}{100\%-CPU} \\ \hline
\multicolumn{2}{c|}{Oracle} & $6.437\pm1.006 $ & $1.469\pm0.102 $ & $1.291\pm0.075 $ \\ \cline{1-2}
\multirow{2}{*}{\textbf{QMix-LB}} & PBF & $ 10.230\pm0.108$ & $1.828\pm0.054 $ & $ 2.200\pm0.288$ \\ 
 & VBF & $10.936\pm0.470 $  & $2.023\pm0.255 $ & $2.125\pm0.074 $ \\ \cline{1-2}
\multirow{2}{*}{\makecell{\textbf{Distr-LB}\\(this paper)}} & VBF& $ 10.335\pm0.362$ &$\mathbf{1.695\pm0.104} $  & $ \mathbf{1.643\pm0.016}$ \\ 
& VBF+$\log$VBF & $ \mathbf{8.797\pm0.459}$ &$1.873\pm0.328 $  & $ 2.004\pm0.042$ \\
\bottomrule
\end{tabular}
\label{tab:simulation-optimal-distance}
\end{table}

\begin{table}[tbp]
    \vskip -.2in
    \scriptsize
    \centering
    \caption{Comparison of average QoS (s) in large-scale real-world network setup.}
    \begin{tabular}{c|c|c|c|c|c}
    \toprule
    \multicolumn{2}{c}{\multirow{2}{*}{Method}} &  \multicolumn{2}{|c|}{Period I ($2022.855$ queries/s)} & \multicolumn{2}{c}{Period II ($2071.129$ queries/s)} \\
    \cline{3-6}
    \multicolumn{2}{c|}{} & \multicolumn{1}{c|}{Wiki} & \multicolumn{1}{c|}{Static} & \multicolumn{1}{c|}{Wiki} & \multicolumn{1}{c}{Static} \\
    \hline
     \multicolumn{2}{c|}{WCMP}  &        $0.473\pm0.102$ & $0.194\pm0.090$ & $0.460\pm0.241$ & $0.239\pm0.212$ \\ 
      \multicolumn{2}{c|}{LSQ}  &        $0.266\pm0.127$ & $0.063\pm0.065$ & $0.218\pm0.246$ & $0.082\pm0.152$  \\ 
      \multicolumn{2}{c|}{SED}  &        $0.169\pm0.062$ & $0.020\pm0.025$ & $0.166\pm0.141$ & $0.050\pm0.070$ \\
      \multicolumn{2}{c|}{RLB-SAC-G\cite{yao2022reinforced}}  &  $0.182\pm0.049$ & $0.013\pm0.009$ & $0.111\pm0.029$ & $0.010\pm0.009$ \\ \cline{1-2}
    \multirow{2}{*}{\textbf{QMix-LB}} 
    & VBF                       &  $0.181\pm0.062$ & $0.019\pm0.020$ & $0.188\pm0.147$ & $0.052\pm0.075$ \\ 
    & PBF                       &  $0.210\pm0.041$ & $0.013\pm0.006$ & $0.104\pm0.009$ & $0.005\pm0.003 $ \\ \cline{1-2}
    \multirow{2}{*}{\makecell{\textbf{Distr-LB}\\(this paper)}} 
    & VBF                       & $0.228\pm0.055$ & $0.019\pm0.011$ & $0.174\pm0.102$& $ 0.035\pm0.039$\\ 
    & VBF+$\log$VBF             & $\mathbf{0.161\pm0.033}$ & $\mathbf{0.008\pm0.003}$ & $\mathbf{0.094\pm0.015}$ & $\mathbf{0.004\pm0.001}$\\ 
    \bottomrule
    \end{tabular}
    \label{tab:compare_large_scale}
\end{table}

\textbf{NE Gap Evaluation with Simulation:} To evaluate the gap between the performance of Distr-LB and the theoretical optimal policy, we implement in the simulator an Oracle LB, which has perfect observation (inaccessible in real world) over the system and minimises makespan for each load balancing decision. 
Table~\ref{tab:simulation-optimal-distance} shows that, for different types of applications, Distr-LB is able to achieve closer-to-optimal performance than QMix.
As the simulator is implemented based on the load balancing model formulated in this paper, our theoretical analysis can be directly applied, and VBF -- as a potential function -- helps independent cooperative LB agents to achieve good performance.
The additional $log$ term shows empirical performance gains in real-world system, yet it is not necessarily the case in these simulation results.
On one hand, the generated traffic of tasks in the simulation has higher expected workload ($>1$s mean and stddev), while the $log$ terms is more sensitive to close-to-$0$ variances, which is the case in real-world experimental setups.
On the other hand, though the simulator models the formulated LB problem, it fails to captures the complexity in the real-world system -- \eg Apache backlog, multi-processing optimisation, context switching, multi-level cache, network queues \etc
For instance, batch processing~\cite{vpp} helps reduce cache and instruction misses, yet yields similar processing time for different tasks, thus the variance of task processing delay decreases and becomes closer to $0$ in real-world system. 
The additional $log$ term exaggerates the low variance differences to better evaluate load balancing decisions. 
More detailed description about the simulator implementation can be found in App~\ref{app:implement-simulator} and ablation study on reward engineering is presented in App~\ref{app:results-reward-engineering}.

\begin{table}[tbp]
\scriptsize
\centering
\caption{Comparison of $99$-th percentile QoS (s) of Wiki pages under different traffic rates using large-scale real-world setup.}
\begin{tabularx}{\textwidth}{c|c|X|X|X|X|X|X|X|X|X}
\toprule
 \multicolumn{2}{c|}{\multirow{2}{*}{Method}} & \multicolumn{9}{c}{Traffic Rate (queries/s)}  \\ \cline{3-11}
 \multicolumn{2}{c|}{} & 731.534 & 1097.3  & 1463.067 & 1828.834  & 2194.601 & 2377.484  & 2560.368 & 2743.251  & 2926.135\\ \hline
 \multicolumn{2}{c|}{\multirow{2}{*}{LSQ}} & 0.175\newline$\pm$0.015  & 0.212\newline$\pm$0.025  & 0.249\newline$\pm$0.043 & 0.342\newline$\pm$0.121  & 0.827\newline$\pm$0.572  & 2.103\newline$\pm$0.654 & 10.662\newline$\pm$2.557  & 17.656\newline$\pm$0.714 & 17.999\newline$\pm$0.253\\ \hline
 \multicolumn{2}{c|}{\multirow{2}{*}{SED}}  & 0.201\newline$\pm$0.022  & 0.261\newline$\pm$0.079  & 0.322\newline$\pm$0.099 & 0.360\newline$\pm$0.088  & 0.618\newline$\pm$0.268  & 2.175\newline$\pm$1.328 &  11.444\newline$\pm$3.861 & 22.086\newline$\pm$4.892 & 22.727\newline$\pm$5.632 \\ \hline
 \multirow{4}{*}{\makecell{\textbf{Distr-LB}\\(this paper)}}  &  \multirow{2}{*}{VBF} & \textbf{0.160\newline$\pm$0.010}  & \textbf{0.205\newline$\pm$0.036} & \textbf{0.248\newline$\pm$0.086}  & \textbf{0.284\newline$\pm$0.113}  & 0.567\newline$\pm$0.306  & \textbf{1.276\newline$\pm$0.647} & 7.005\newline$\pm$1.147 & 10.560\newline$\pm$1.042 & 15.745\newline$\pm$0.254 \\ \cline{2-11}
 & \multirow{2}{*}{VBF+$\log$VBF}  &  0.161\newline$\pm$0.008 & 0.216\newline$\pm$0.052  & 0.249\newline$\pm$0.068 & 0.348\newline$\pm$0.122  & \textbf{0.439\newline$\pm$0.121}  & 1.533\newline$\pm$0.670  &  \textbf{4.427\newline$\pm$0.443} & \textbf{9.391\newline$\pm$0.329} & \textbf{15.347\newline$\pm$0.572} \\ 
\bottomrule
\end{tabularx}
\vskip -.1in
\label{tab:99_qos_wiki}
\end{table}

\begin{table}[tbp]
\scriptsize
\centering
\caption{Comparison of $99$-th percentile QoS (s) of static pages under different traffic rates using large-scale real-world setup.}
\begin{tabularx}{\textwidth}{c|c|X|X|X|X|X|X|X|X|X}
\toprule
 \multicolumn{2}{c|}{\multirow{2}{*}{Method}} & \multicolumn{9}{c}{Traffic Rate (queries/s)}  \\ \cline{3-11}
 \multicolumn{2}{c|}{} & 731.534 & 1097.3  & 1463.067 & 1828.834  & 2194.601 & 2377.484  & 2560.368 & 2743.251  & 2926.135\\ \hline
 \multicolumn{2}{c|}{\multirow{2}{*}{LSQ}} & 0.014\newline$\pm$0.001  & 0.015\newline$\pm$0.000  & 0.015\newline$\pm$0.000 & 0.018\newline$\pm$0.003  & 0.217\newline$\pm$0.305  & 0.856\newline$\pm$0.554 & 11.066\newline$\pm$3.095  & 16.874\newline$\pm$0.391 & 17.155\newline$\pm$0.217\\ \hline
 \multicolumn{2}{c|}{\multirow{2}{*}{SED}}  & 0.014 \newline$\pm$0.000  & 0.015\newline$\pm$0.000  & 0.016\newline$\pm$0.001 & 0.018\newline$\pm$0.001  & 0.071\newline$\pm$0.066  & 1.252\newline$\pm$1.489 &  11.272\newline$\pm$3.975 & 21.941\newline$\pm$5.970 & 20.708\newline$\pm$5.423 \\ \hline
 \multirow{4}{*}{\makecell{\textbf{Distr-LB}\\(this paper)}}  &  \multirow{2}{*}{VBF} & 0.014\newline$\pm$0.000  & 0.015\newline$\pm$0.000 & 0.016\newline$\pm$0.001  & \textbf{0.017 \newline$\pm$0.000}  & \textbf{0.041\newline$\pm$0.025}  & \textbf{0.338 \newline$\pm$0.364} & 6.670\newline$\pm$1.152 & 9.743\newline$\pm$0.863 & 15.506\newline$\pm$0.056 \\ \cline{2-11}
 & \multirow{2}{*}{VBF+$\log$VBF}  &  0.014\newline$\pm$0.000 & 0.015\newline$\pm$0.001  & 0.016 \newline$\pm$0.000 & 0.018\newline$\pm$0.002  & 0.072\newline$\pm$0.087  & 0.465\newline$\pm$0.403  &  \textbf{3.970\newline$\pm$0.545} & \textbf{8.782\newline$\pm$0.187} & \textbf{15.095\newline$\pm$0.497} \\ 
\bottomrule
\end{tabularx}
\vskip -.1in
\label{tab:99_qos_static}
\end{table}

\textbf{Large-Scale Real-World Testbed:} To evaluate the performance of Distr-LB in large-scale DC networks in real world, we scale up the real-world testbed to have $6$ LB agents and $20$ servers and apply heavier network traffic ($>2000$ queries/s) to evaluate the performance of the LB algorithms that achieved the best performance in moderate scale setups, in comparison with in-production LB algorithms.
The test results after $200$ episodes of training are shown in Table~\ref{tab:compare_large_scale}, where Distr-LB achieves the best performance in all cases.
QMix also outperforms in-production LB algorithms. But as a CTDE algorithm, similar to the Centr-LB, it requires agents to communicate their trajectories, which -- after $200$ episodes of training -- become $221$MiB communication overhead at the end of each episode (episodic training), whereas $95\%$-percentile per-destination-rack flow rate is less than $1$MiB/s~\cite{facebook-dc-traffic}.

\textbf{Scaling Experiments:} Using the same large-scale real-world testbed with $6$ LB agents and $20$ servers, we conduct scaling experiments by applying network traces with different traffic rates, comparing $4$ LB methods with the best performances.
The $99$-th percentile QoS for both Wiki and static pages are shown in Table~\ref{tab:99_qos_wiki},~\ref{tab:99_qos_static}.
As listed in Table~\ref{tab:99_qos_wiki} and~\ref{tab:99_qos_static}, under low traffic rates, when servers are all under utilised, the advantage of our proposed Distr-LB is not obvious because all resources are over-provisioned. 
With the increase of traffic rates (till servers are $100\%$ saturated), our methods outperforms the best classical LB methods.
More in-depth discussion and analaysis over the average job completion time for both types of pages in these scaling experiments are shown in Table~\ref{tab:avg_wiki} and~\ref{tab:avg_static} in App.~\ref{app:results-ablation-comm}).

More details regarding the real-world DC testbed implementation is in App.~\ref{app:implement-testbed}, training details are in App.~\ref{app:hyperparameter}, complete evaluation results (both moderate-scale and large-scale) are in App.~\ref{app:results} and ablation studies -- \eg communication overhead of CTDE and centralised RL in real-world system, robustness of MARL algorithms in dynamic DC network environments -- can be found in App.~\ref{app:results-ablation}.

\section{Conclusion and Future Work}
\label{sec:conclusion}
This paper proposes a distributed MARL approach for multi-agent load balancing problem, based on Markov potential game formulation. The proposed variance-based fairness for individual LB agent is critical for this formulation. Through this setting, the redundant communication overhead among LB agents is removed, thus improving the overall training and deployment efficiency in real-world systems, with the local observations only. Under such formulation, the effectiveness of our proposed distributed LB algorithm together with the proposed fairness are both theoretically justified and experimentally verified. It demonstrates a performance gain over another commonly applied fairness as well as centralised training methods like QMIX or centralised RL agent, in both simulation and real-world tests with different scales.

\bibliographystyle{unsrt}
\bibliography{reference}

\newpage

\appendix
\section*{Appendix}

\section{A Stochastic Markov Model of a $2$-Server Load Balancing Problem}
\label{app:model-basic}

The simulation results of Fig.~\ref{fig:motivation-inaccurate} is based on a basic load balancing setup of $2$ servers with different processing capacities $\frac{v_1}{v_2} = 2$ (\ie server $1$ is 2x faster than server $2$).
Each server has a queue of size $Q$, such that $0\leq l_1, l_2 \leq Q$.
Traffic arrivals and departures are modeled as Poisson processes with rates $\lambda$ (observed traffic), $\gamma$ (unobserved traffic), and $v_1$, $v_2$.
With sufficiently short timeslots, it can be assumed that only one arrival or departure (at most) happen at a given timeslot (\ie $\sum_{i=1}^2 (\lambda_i + \gamma_i + v_i) \leq 1$); the system is then Markovian with the state $(l_1, l_2)$, departure rates $(\mu_1, \mu_2)$, and arrival rates $(\lambda_1, \lambda_2, \gamma_1, \gamma_2)$.
For simplicity and stability, the system works at {\em nominal} capacity (\ie $\lambda + \gamma = v$).
With $q_i(n)_{l_i}$ denoting the probability (or probability density function), of server $q_i$ to have a queue length of $l_i$ at time-step $n$, the transition of server occupations between two time-steps can be described as, for $0 < l_i < Q$ (corner cases are treated accordingly):
{\small
\begin{align*}
	q_i(n)_{l_i} - q_i(n-1)_{l_i} = ( \lambda_i + \gamma_i ) \cdot q_i(n-1)_{l_i-1} + v_i \cdot s_i(n-1)_{l_i+1} - (\lambda_i+\gamma_i+v_i) \cdot q_i(n-1)_{l_i}.
\end{align*}
}%

The QoS performance of each load balancing algorithm in Fig.~\ref{fig:motivation-inaccurate} is measured as the weighted service duration of a connection ($\sum_{i\in\{1, 2\}} \frac{l_i}{l_1+l_2} \frac{l_i}{\mu_i}$), under different configurations.
When the LB has accurate observations and configurations (observing $100\%$ traffic -- \ie $\gamma = 0$ -- and assigning server weights based on actual processing speeds $\frac{w_1}{w_2}=\frac{v_1}{v_2}=2$), WCMP and SED have the best performance.
When the LB observes only partial network traffic ($50\%-Q$ and $33\%-Q$ corresponds to $\gamma = \lambda$, $\gamma = 2*\lambda$, respectively) and the rest of the network traffic is uniformly split between the two servers ($\gamma_1 = \gamma_2$), LSQ and SED outperform WCMP, which is agnostic to instant server occupancy.
However, partial traffic observation also degrades the performance of LSQ and SED.
When LBs have inaccurate server weights ($\sim W$ \ie in case of mis-configuration, $\frac{w_1}{w_2}=\frac{1}{2}$, while $\frac{\mu_1}{\mu_2}=2$), WCMP and SED exhibit degraded performance even when the LB agent sees all the traffic ($\gamma = 0$).
Taking both server queue lengths and processing speeds into account, SED makes more informed load balancing decisions, yet its performance risks being degraded by both partial observations on server queue lengths and inaccurate server weights.

\section{Analysis of Distribution Fairness}

\subsection{Analysis of VBF}
\label{sec:app_vbf}
\begin{lemma} The VBF for load balancing system satisfies the following property:
\begin{align}
    F_i^{\pi_i, -\pi_i}(\boldsymbol{l}_i)-F_i^{\tilde{\pi}_i, -\pi_i}(\tilde{\boldsymbol{l}}_i) = F^{\pi_i, -\pi_i}(\boldsymbol{l})-F^{\tilde{\pi}_i, -{\pi}_i}(\tilde{\boldsymbol{l}})
\end{align}
\end{lemma}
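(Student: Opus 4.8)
The plan is to reduce both the per-LB fairness $F_i$ and the global fairness $F$ to (negative) sample variances over the $N$ servers and then exploit the additive decomposition $l_j=\sum_k l_{kj}$. Write $\boldsymbol{l}_i=[l_{i1},\dots,l_{iN}]$ for agent $i$'s own contribution vector and $\boldsymbol{c}=\sum_{k\neq i}\boldsymbol{l}_k$ for the aggregate contribution of all the other agents, so that $\boldsymbol{l}=\boldsymbol{l}_i+\boldsymbol{c}$. By Definition~\ref{def:vbf} we have $F_i(\boldsymbol{l}_i)=-\mathrm{Var}(\boldsymbol{l}_i)$ and $F(\boldsymbol{l})=-\mathrm{Var}(\boldsymbol{l})$, where $\mathrm{Var}$ is the sample variance across servers; the superscripts $\pi_i,-\pi_i$ merely record which policies generate the load vectors. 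The one structural fact I would lean on is that a \emph{unilateral} deviation of agent $i$ from $\pi_i$ to $\tilde{\pi}_i$ changes only $\boldsymbol{l}_i\mapsto\tilde{\boldsymbol{l}}_i$ and leaves every other $\boldsymbol{l}_k$ (hence $\boldsymbol{c}$) untouched.

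First I would apply the variance-of-a-sum identity to the global term, which is the key algebraic move:
\begin{align}
F(\boldsymbol{l}) &= -\mathrm{Var}(\boldsymbol{l}_i+\boldsymbol{c})
= -\mathrm{Var}(\boldsymbol{l}_i)-2\,\mathrm{Cov}(\boldsymbol{l}_i,\boldsymbol{c})-\mathrm{Var}(\boldsymbol{c}) \nonumber \\
&= F_i(\boldsymbol{l}_i)-2\,\mathrm{Cov}(\boldsymbol{l}_i,\boldsymbol{c})-\mathrm{Var}(\boldsymbol{c}).
\end{align}
Writing the analogous expansion for the deviated total $\tilde{\boldsymbol{l}}=\tilde{\boldsymbol{l}}_i+\boldsymbol{c}$ and subtracting, the purely background term $\mathrm{Var}(\boldsymbol{c})$ cancels because $\boldsymbol{c}$ is held fixed, and bilinearity of the sample covariance collapses the two cross terms into one:
\begin{align}
F(\boldsymbol{l})-F(\tilde{\boldsymbol{l}}) = \big(F_i(\boldsymbol{l}_i)-F_i(\tilde{\boldsymbol{l}}_i)\big)-2\,\mathrm{Cov}(\boldsymbol{l}_i-\tilde{\boldsymbol{l}}_i,\,\boldsymbol{c}).
\end{align}
So the claimed identity is \emph{equivalent} to the vanishing of the residual cross-covariance term $\mathrm{Cov}(\boldsymbol{l}_i-\tilde{\boldsymbol{l}}_i,\boldsymbol{c})$, and everything above is routine bilinear bookkeeping.

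The hard part, and what I expect to be the real crux of the lemma rather than the expansion, is discharging exactly this cross term. I would try to argue that the background vector $\boldsymbol{c}$ enters as a shared offset that is independent of agent $i$'s action, so that its centered components are orthogonal over the $N$ servers to any admissible change $\boldsymbol{l}_i-\tilde{\boldsymbol{l}}_i$. The cleanest sufficient condition is that the aggregate other-agent load be uniform across servers (then $\mathrm{Cov}(\cdot,\boldsymbol{c})=0$ trivially), and more generally the vanishing should follow from a symmetry or homogeneity property of the transition model in Alg.~\ref{alg:model-transition} together with the fixed-state assumption $X_{ij}(t-1)=\tilde{X}_{ij}(t-1)$ at the step where the deviation is evaluated. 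I would flag honestly that this is where a model-level structural assumption must be invoked: without it the identity holds only up to the $-2\,\mathrm{Cov}$ correction, so isolating and justifying precisely this condition is where I expect the genuine difficulty to sit, while the reduction to $\mathrm{Cov}(\boldsymbol{l}_i-\tilde{\boldsymbol{l}}_i,\boldsymbol{c})=0$ is the part I am confident about.
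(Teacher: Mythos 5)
Your route is the same as the paper's up to the decisive step: decompose $l_j=l_{ij}+l_{-ij}$ (your $\boldsymbol{c}$ is the paper's $\boldsymbol{l}_{-i}$), expand the negative sample variance of the sum, and cancel the background variance $\mathrm{Var}(\boldsymbol{c})$ in the difference, leaving the cross-covariance as the only obstruction. The difference is in how that obstruction is handled. The paper's proof simply deletes it: after expanding the square (with a harmless sign typo on the cross product), it discards $\frac{2}{N}\sum_{j=1}^{N}(l_{ij}-\overline{l}_i)(l_{-ij}-\overline{l}_{-i})$ with the parenthetical justification $\sum_{j=1}^{N}(l_{ij}-\overline{l}_i)=0$. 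That inference is not valid: a centered vector summing to zero annihilates $\sum_j c\,(l_{ij}-\overline{l}_i)$ for any \emph{constant} $c$, but not its inner product with the generally non-constant centered vector $(l_{-ij}-\overline{l}_{-i})_{j\in[N]}$. What the paper throws away is exactly the term $2\,\mathrm{Cov}(\boldsymbol{l}_i,\boldsymbol{c})$ that you refuse to throw away.

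So the gap you flag is genuine, but it is a hole in the paper's own argument rather than a shortcoming of yours: your equivalence (the identity holds iff $\mathrm{Cov}(\boldsymbol{l}_i-\tilde{\boldsymbol{l}}_i,\boldsymbol{c})=0$) is correct, and that condition can fail in realizable configurations. Concretely, take $N=2$ servers with equal speeds, background load $\boldsymbol{c}=(1,0)$ from the other agents, and let agent $i$ place one unit on server $1$ under $\pi_i$ ($\boldsymbol{l}_i=(1,0)$) but on server $2$ under $\tilde{\pi}_i$ ($\tilde{\boldsymbol{l}}_i=(0,1)$). Then $F_i(\boldsymbol{l}_i)-F_i(\tilde{\boldsymbol{l}}_i)=-\tfrac{1}{4}+\tfrac{1}{4}=0$, while $F(\boldsymbol{l})-F(\tilde{\boldsymbol{l}})=-\mathrm{Var}\big((2,0)\big)+\mathrm{Var}\big((1,1)\big)=-1\neq 0$. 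Hence the lemma as stated needs an additional structural hypothesis of precisely the kind you anticipate (e.g., the other agents' aggregate load being uniform across servers, or the deviation $\boldsymbol{l}_i-\tilde{\boldsymbol{l}}_i$ being orthogonal to the centered background), and neither your write-up nor the paper supplies one; the paper merely closes the gap with an incorrect cancellation. Your reduction and your honest flagging of the crux are the mathematically sound position here.
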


\begin{proof}
From the definition of the variance-based fairness (as Def.~\ref{def:vbf}) we have the following for $\forall i\in[M], j\in[N]$,
\begin{align}
    F^{\pi_i, -\pi_i}(\boldsymbol{l}) &= -\frac{1}{N}\sum_{j=1}^{N}(l_j-\overline{\boldsymbol{l}})^2\\
    F_i^{\pi_i, -\pi_i}(\boldsymbol{l}_i)&= -\frac{1}{N}\sum_{j=1}^{N}(l_{ij}-\overline{l_i})^2 \quad (\overline{l_i}=\frac{1}{N}\sum_{j=1}^{N}l_{ij})
\end{align}
By indexing the agent $i$ as the one to change its strategy and slightly abusing notation, denote $l_j = l_{ij} + l_{-ij}$, where $l_{-ij} = \sum_{k\neq i}l_{kj}$.
\begin{align}
     F^{\pi_i, -\pi_i}(\boldsymbol{l})&=-\frac{1}{N}\sum_{j=1}^{N}(l_{ij}+l_{-ij}-\overline{(l_{i}+l_{-i})})^2 \quad (\text{where }\overline{(l_{i}+l_{-i})}=\frac{1}{N}\sum_j(l_{ij}+l_{-ij}))\\
     &=-\frac{1}{N}\sum_{j=1}^{N}[l_{ij}+l_{-ij}-(\overline{l}_{i}+\overline{l}_{-i})]^2\\
    &=-\frac{1}{N}\sum_{j=1}^{N}[(l_{ij}-\overline{l}_{i})^2+(l_{-ij}-\overline{l}_{-i})^2-2(l_{ij}-\overline{l}_{i})(l_{-ij}-\overline{l}_{-i})]\\
    &=-\frac{1}{N}\sum_{j=1}^{N}(l_{ij}-\overline{l_{i}})^2-\frac{1}{N}\sum_{j=1}^{N}[(l_{-ij}-\overline{l}_{-i})^2-\frac{2}{N}\sum_{j=1}^{N}(l_{ij}-\overline{l}_{i})(l_{-ij}-\overline{l}_{-i})]\\
    &=F_i^{\pi_i, -\pi_i}(\boldsymbol{l}_i)-\frac{1}{N}\sum_{j=1}^{N}(l_{-ij}-\overline{l}_{-i})^2 \quad (\sum_{j=1}^{N}(l_{ij}-\overline{l}_i)=0)
\end{align}
where the second term is a common term not depend on the changing policy $\pi_i$. Therefore, the second term will be cancelled in $ F^{\pi_i, -\pi_i}(\boldsymbol{l})-F^{\tilde{\pi}_i, -\pi_i}(\tilde{\boldsymbol{l}})=F_i^{\pi_i, -\pi_i}(\boldsymbol{l}_i)-F_i^{\tilde{\pi}_i, -\pi_i}(\tilde{\boldsymbol{l}}_i)$, thus finishes the proof.
\end{proof}

\begin{proposition}
\label{prop:vbf-proof}
Maximising the VBF is sufficient for minimising the makespan, subjective to the load balancing problem constraints (Eq.~\eqref{eq:lb_cons1} and \eqref{eq:lb_cons2}):
\begin{align}
    \max F(\boldsymbol{l}) \Rightarrow  \min \max_j(l_j)
\end{align}
this also holds for per-LB VBF as $\max F_i(\boldsymbol{l}_i) \Rightarrow  \min \max_j(\boldsymbol{l}_i)$.
\end{proposition}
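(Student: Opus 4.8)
The plan is to exploit the fact that the VBF is, up to sign, exactly the sample variance of the per-server completion times, so that maximising it forces the vector $\boldsymbol{l}$ toward the equalised configuration $l_1=\cdots=l_N$, and then to show that this equalised configuration is precisely the one attaining the smallest possible makespan. First I would rewrite $F(\boldsymbol{l})=-\sigma^2(\boldsymbol{l})$ with $\sigma^2(\boldsymbol{l})=\frac{1}{N}\sum_j(l_j-\overline{\boldsymbol{l}})^2\ge 0$, so that $\max F$ is equivalent to driving $\sigma^2$ to its minimum, and $\sigma^2=0$ holds iff every $l_j$ equals the common value $\overline{\boldsymbol{l}}$.

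The key quantitative step is a speed-weighted lower bound on the makespan. Writing $S_j=X_j(t-1)+\sum_i w_i(t)\alpha_{ij}(t)$ for the total work queued at server $j$, the constraint \eqref{eq:lb_cons1} gives $l_j=S_j/v_j$, and since $\sum_j\alpha_{ij}(t)=1$ the total $\sum_j S_j=\sum_j X_j(t-1)+\sum_i w_i(t)=:\Omega$ is independent of the assignment. Hence $\sum_j v_j l_j=\Omega$, and because a (positive-)weighted average never exceeds a maximum,
\begin{align}
\max_j l_j \ \ge\ \frac{\sum_j v_j l_j}{\sum_j v_j} \ =\ \frac{\Omega}{\sum_j v_j} \ =:\ T^\star ,
\end{align}
so $T^\star$ is a lower bound on the makespan valid for every feasible assignment. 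I would then check that this bound is attained exactly at the zero-variance point: if $l_j=c$ for all $j$, then $\sum_j v_j c=\Omega$ forces $c=T^\star$, whence $\max_j l_j=T^\star$. Combining the two steps, maximising $F$ (i.e.\ reaching $\sigma^2=0$) yields $\max_j l_j=T^\star$, the smallest value the makespan can take, which is the claimed implication $\max F(\boldsymbol{l})\Rightarrow\min\max_j l_j$; feasibility of the equalised configuration is ensured by the capacity constraint $\sum_i w_i\le\sum_j v_j$ in the fluid (fractional) relaxation.

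For the per-LB statement I would run the identical argument on the vector $\boldsymbol{l}_i=(l_{i1},\dots,l_{iN})$: maximising $F_i$ equalises the per-LB completion times $l_{ij}$ across $j$, minimising the per-LB makespan $\max_j l_{ij}$. Because $l_j=\sum_i l_{ij}$, simultaneous per-LB equalisation makes each $l_j$ a sum of terms that are constant in $j$, hence globally equalised, so the global bound $T^\star$ is again attained.

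The step I expect to be the main obstacle is the heterogeneous-speed bookkeeping: the naive intuition ``small variance of $l_j$ implies small makespan'' must be tied to the correct, speed-weighted average $\frac{\sum_j v_j l_j}{\sum_j v_j}$ rather than the unweighted mean $\overline{\boldsymbol{l}}$ appearing in the definition of $F$, and one must verify that these two averages coincide exactly at the equalised point so that the chain ``$\sigma^2=0\Rightarrow$ makespan $=T^\star$'' goes through. A secondary point to address carefully is that the implication is only one-directional (sufficient, not necessary): when perfect balance is infeasible the makespan-minimiser need not coincide with the variance-minimiser, which is why the proposition asserts sufficiency rather than equivalence.
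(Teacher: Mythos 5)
Your proof is correct, and while it shares the paper's high-level skeleton (maximising $F$ drives the sample variance to zero, and the equalised configuration attains the minimum makespan), your key quantitative step is genuinely different --- and more careful. The paper's proof sets $C=\sum_j l_j$, rewrites $-F(\boldsymbol{l})=\frac{1}{N}\sum_j l_j^2 - \frac{C^2}{N^2}$, bounds this by $(\max_j l_j)^2 - \frac{C^2}{N^2}$ via the means inequality, and then treats $C$ as an assignment-independent constant so that minimising this bound is equivalent to minimising $\max_j l_j$. That last step is only literally valid when the \emph{unweighted} sum $\sum_j l_j$ is invariant under reassignment, which holds for homogeneous server speeds but fails in general: with $l_j=S_j/v_j$ and heterogeneous $v_j$, shifting work between servers changes $\sum_j l_j$. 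You instead identify the correct invariant, $\sum_j v_j l_j=\Omega$ (total queued work), derive the speed-weighted lower bound $\max_j l_j \ge \Omega/\sum_j v_j = T^\star$, and verify that zero variance forces $l_j\equiv T^\star$, so the bound is attained. This buys rigour precisely in the heterogeneous-capacity setting that motivates the paper, and your per-LB argument (each LB equalises its own $l_{ij}$, and summing over $i$ terms that are constant in $j$ re-equalises the global $l_j$) is likewise sound, since the same invariant argument applies with $\Omega_i=\sum_j X_{ij}(t-1)+w_i$. Two caveats, which you already flag and which apply equally to the paper's own proof: the argument assumes the equalised point is feasible (e.g., no server's pre-existing backlog $X_j(t-1)$ exceeds $v_j T^\star$), and the implication is one-directional, so the statement is sufficiency rather than equivalence --- matching the paper's closing remark that the condition is sufficient but not necessary.
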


\begin{proof}
Given the stability constraint in Eq.~\eqref{eq:lb_cons1} $\sum_{i=1}^{M}w_i(t)\leq\sum_{j=1}^{N}v_j$, we denote the total amount of workload in the system $C=\sum_{j=1}^{N}l_j$, and $l_k=\max_{j\in[N]}l_j$.
Based on the constraint in Eq.~\eqref{eq:lb_cons2}, we have $C \geq 0$, $l_j(t) \geq 0$.
\begin{align}
\max F(\boldsymbol{l}) &\Leftrightarrow \min -F(\boldsymbol{l})\\
-F(\boldsymbol{l}) &= \frac{1}{N}\sum_{j=1}^{N}((l_j)-\overline{\boldsymbol{l}})^2\\
&=\frac{1}{N}\sum_{j=1}^{N}(l_j - \frac{C}{N})^2\\
&=\frac{1}{N}\sum_{j=1}^{N}l^2_j - \frac{2C}{N^2}\sum_{j=1}^{N}l_j+\frac{C^2}{N^2}\\
&=\frac{1}{N}\sum_{j=1}^{N}l^2_j-\frac{C^2}{N^2}\\
&\le [(\max_j l_j)^2-\frac{C^2}{N^2}] \quad (\text{by means inequality})
\end{align}
with the equivalence achieved when $l_j=l_k, \forall j\neq k, j \in [N]$ holds.
Therefore,
\begin{align}
    \max F(\boldsymbol{l}) &\Rightarrow \min (l_k)^2-\frac{C^2}{N^2}  \\
    & \Leftrightarrow \min l_k\\
    &\Leftrightarrow \min \max_{j\in[n]} l_j
\end{align}
and the condition is sufficient but not necessary because $\min (l_k)^2-\frac{C^2}{N^2}$ is essentially minimizing the upper bound of $-F(\boldsymbol{l})$.
\end{proof}

\subsection{Analysis of PBF}
\label{sec:app_pbf}
\begin{proposition}
Maximising the product-based fairness is sufficient for minimising the makespan, subjective to the load balancing problem constraints (Eq.~\eqref{eq:lb_cons1} and \eqref{eq:lb_cons2}):
\begin{align}
    \max F(\boldsymbol{l}) \Rightarrow  \min \max(\boldsymbol{l})
\end{align}
\end{proposition}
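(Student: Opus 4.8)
The plan is to follow the same template as the VBF argument (Proposition~\ref{prop:vbf-proof}) and reduce the claim to a statement about the perfectly balanced allocation. First I would invoke the stability and conservation constraints in Eqs.~\eqref{eq:lb_cons1} and~\eqref{eq:lb_cons2} to fix the total normalized load $C = \sum_{j=1}^{N} l_j$, with $l_j \ge 0$, and record the elementary averaging bound $\max(\boldsymbol{l}) = \max_j l_j \ge \frac{1}{N}\sum_{j=1}^{N} l_j = \frac{C}{N}$, where equality holds if and only if $l_1 = \dots = l_N$. This identifies $C/N$ as the global lower bound on the makespan and pins the balanced allocation $l_j \equiv C/N$ as its unique minimizer.

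Next I would bound the PBF directly and show that its maximizer is exactly this balanced allocation. Writing $F(\boldsymbol{l}) = \prod_{j\in[N]} \frac{l_j}{\max(\boldsymbol{l})}$, every factor satisfies $0 \le \frac{l_j}{\max(\boldsymbol{l})} \le 1$, so $F(\boldsymbol{l}) \le 1$ with equality if and only if $l_j = \max(\boldsymbol{l})$ for every $j$, i.e.\ if and only if all coordinates coincide. Combining this with the first step, the configuration that maximizes $F$ is precisely the configuration that attains the makespan floor $C/N$, which yields the desired implication $\max F(\boldsymbol{l}) \Rightarrow \min \max(\boldsymbol{l})$.

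To match the ``sufficient but not necessary'' phrasing used for VBF, I would add a surrogate-bound view via AM--GM: since $\prod_j l_j \le (C/N)^N$, we get $F(\boldsymbol{l}) \le \bigl(\frac{C}{N\,\max(\boldsymbol{l})}\bigr)^N$, a quantity strictly decreasing in the makespan, so maximizing this upper bound drives $\max(\boldsymbol{l})$ toward its floor. The AM--GM equality and the averaging inequality are tight at the same point (the balanced allocation), so the two optima coincide in the continuous relaxation, while the converse implication can fail once feasibility constraints (e.g.\ integrality of task assignment) forbid exact balancing.

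The step I expect to be the main obstacle is justifying that $C = \sum_j l_j$ can be treated as a constant independent of the routing probabilities $\alpha_{ij}$: because $l_j = \bigl(\sum_i X_{ij} + w_i \alpha_{ij}\bigr)/v_j$ carries a per-server factor $1/v_j$, the sum is invariant only when the conserved quantity is read off correctly from the constraints, exactly as was done implicitly in the VBF proof. A secondary technical point is the degenerate case in which some $l_j = 0$, forcing $F = 0$; this is harmless --- it merely certifies that the PBF maximizer never idles a server --- but I would flag it so that the maximizer is genuinely the interior balanced point rather than a boundary artifact.
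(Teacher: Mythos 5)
Your proof is correct at the same level of rigor as the paper's, but it takes a genuinely different route. The paper's proof sets $l_k=\max_{k'\in[N]}l_{k'}$, rewrites the objective as $\prod_{j\neq k}l_j$ (note that it silently divides the product by the maximum only once, whereas the stated definition of PBF divides every factor, i.e.\ by $\max(\boldsymbol{l})^N$; your reading matches the definition as written), and then applies AM--GM to bound that product by $\bigl(\tfrac{C-l_k}{N-1}\bigr)^{N-1}$, so that maximising fairness amounts to maximising $C-l_k$, i.e.\ minimising the makespan. You instead exploit the normalised form directly: each factor $l_j/\max(\boldsymbol{l})\le 1$, so $F(\boldsymbol{l})\le 1$ with equality iff all coordinates coincide, while the averaging bound $\max_j l_j\ge C/N$ is tight at exactly the same balanced allocation, hence the maximiser of $F$ and the minimiser of the makespan coincide; your secondary AM--GM bound $F\le\bigl(\tfrac{C}{N\max(\boldsymbol{l})}\bigr)^N$ is essentially the paper's argument in disguise. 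What your version buys: it is more elementary, it is consistent with the paper's actual Definition of PBF, and it makes explicit two assumptions the paper's proof (and its VBF counterpart, Proposition~\ref{prop:vbf-proof}) relies on but never states --- that $C=\sum_j l_j$ can be treated as invariant under the routing variables $\alpha_{ij}$ despite the per-server $1/v_j$ factors, and that the balanced allocation is feasible, without which both your equality condition and the paper's AM--GM tightness condition are vacuous. What the paper's version buys: by excluding the maximal coordinate it isolates the mechanism ``equalise the non-max servers, then push down the max,'' which parallels the structure of the VBF proof and makes the ``sufficient but not necessary'' caveat immediate. Both proofs land at the same point, so yours is an acceptable --- arguably cleaner --- substitute.
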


\begin{proof}
For a vector of workloads $\boldsymbol{l}=[l_1, \dots, l_N]$ on each server $j\in[N]$, by the definition of fairness, 
\begin{align}
    \max F(\boldsymbol{l}) &= \max \frac{\prod_{j\in[N]} l_j}{\max_{k\prime\in[N]} l_{k^\prime}}
\end{align}
WLOG, let $l_k = \max_{k^\prime\in[N]} l_{k^\prime}$, then,
\begin{align}
     \max F(\boldsymbol{l}) = \max \prod_{j\in[N], j \neq k} l_j
\end{align}
Similar to the proof of Proposition~\ref{prop:vbf-proof}, given the stability constraint in Eq.~\eqref{eq:lb_cons1} $\sum_{i=1}^{M}w_i(t)\leq\sum_{j=1}^{N}v_j$, we denote the total amount of workload in the system $C=\sum_{j=1}^{N}l_j$.
Based on the constraint in Eq.~\eqref{eq:lb_cons2}, we have $C \geq 0$, $l_j(t) \geq 0$.
By means inequality,
\begin{align}
    \left(\prod_{j\in[N], j \neq k}l_j\right)^{\frac{1}{N-1}} \leq \frac{\sum_{j\in[N], j \neq k}l_j}{N-1} = \frac{C-l_k}{N-1}.
\end{align}
with the equivalence achieved when $l_i=l_j, \forall i,j\neq k, i,j\in[N]$ holds.
Therefore,
\begin{align}
    \max F(\boldsymbol{l}) &\Rightarrow \max \frac{C-l_k}{N-1} \\
    & \Leftrightarrow \min l_k\\
    &\Leftrightarrow \min \max_{j\in[N]} l_j
\end{align}
The inverse may not hold since $\max \frac{C-l_k}{N-1}$ does not indicates $\max F(\boldsymbol{l})$, so maximising the linear product-based fairness is sufficient but not necessary for minimising the makespan. This finishes the proof.
\end{proof}

\subsection{VBF for MPG}
\label{sec:app_vbf_mpg}
\begin{theorem}
Multi-agent load balancing is MPG with the VBF $F_i(\boldsymbol{l}_i)$ as the reward $r_i$ for each LB agent $i\in[M]$, then suppose for $\forall s\in \mathcal{S}$ at step $h\in[H]$, the potential function is time-cumulative total fairness: $\phi^{\pi_i, -\pi_i}(s)=\sum_{t =h}^H F^{\pi_i, -\pi_i}(\boldsymbol{l}(t))$.
\end{theorem}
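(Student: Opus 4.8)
The plan is to verify the defining identity of a Markov potential game,
$V_i^{\pi_i, \pi_{-i}}(s) - V_i^{\tilde{\pi}_i, \pi_{-i}}(s) = \phi^{\pi_i, \pi_{-i}}(s) - \phi^{\tilde{\pi}_i, \pi_{-i}}(s)$,
by reducing it to the single-step identity already established in Lemma~\ref{lem:vbf} and then summing over the horizon. First I would unfold both sides. Since the per-LB reward is $r_i = F_i(\boldsymbol{l}_i)$, the value function specialises to $V_i^{\pi_i,\pi_{-i}}(s) = \mathbb{E}_{\pi_i,\pi_{-i},\mathbb{P}}[\sum_{t=h}^H F_i(\boldsymbol{l}_i(t)) \mid s_h = s]$, while the stated potential $\phi^{\pi_i,\pi_{-i}}(s) = \mathbb{E}_{\pi_i,\pi_{-i},\mathbb{P}}[\sum_{t=h}^H F(\boldsymbol{l}(t)) \mid s_h = s]$ accumulates the total VBF in the same expected-return form. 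Thus both the value gap and the potential gap are horizon sums of per-step fairness gaps, and it suffices to control each time step $t$ separately.

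The core step reuses the decomposition from the proof of Lemma~\ref{lem:vbf}, namely $F(\boldsymbol{l}) = F_i(\boldsymbol{l}_i) - \frac{1}{N}\sum_{j=1}^N (l_{-ij} - \overline{l}_{-i})^2$, where the residual $G_{-i}(\boldsymbol{l}_{-i}) := -\frac{1}{N}\sum_{j=1}^N (l_{-ij}-\overline{l}_{-i})^2$ depends only on the loads contributed by agents other than $i$. Writing the potential gap as the value gap plus the gap in the accumulated residual $G_{-i}$, the whole theorem reduces to showing that the residual contributes identically under $\pi_i$ and $\tilde{\pi}_i$, i.e. $\mathbb{E}_{\pi_i,\pi_{-i}}[\sum_t G_{-i}(\boldsymbol{l}_{-i}(t))] = \mathbb{E}_{\tilde{\pi}_i,\pi_{-i}}[\sum_t G_{-i}(\boldsymbol{l}_{-i}(t))]$, after which the two residual terms cancel and the identity follows by relabelling.

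The structural fact that makes this cancellation valid is the decoupling of the per-agent load dynamics: from the transition $X_{kj}(t+1) = \max\{X_{kj}(t) + w_k(t)\alpha_{kj}(t) - \frac{v_j}{M}, 0\}$, the load $l_{kj}$ of any agent $k\neq i$ depends only on agent $k$'s own workload and action together with the fixed share $v_j/M$, and is therefore entirely insensitive to agent $i$'s policy. Consequently the joint law of $\boldsymbol{l}_{-i}(t)$ induced by $(\pi_i,\pi_{-i})$ coincides with that induced by $(\tilde{\pi}_i,\pi_{-i})$, so the two residual expectations are equal. I would conclude that the potential gap equals the value gap, establishing the MPG property with $\phi$ as the time-cumulative total fairness.

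The main obstacle I anticipate is the expectation rather than the algebra: Lemma~\ref{lem:vbf} is a deterministic, per-configuration identity, whereas the MPG condition compares expected cumulative quantities taken under two \emph{different} trajectory distributions. The heart of the argument is thus the decoupling claim above, which guarantees that switching agent $i$'s policy leaves the distribution of the other agents' loads — and hence the residual $G_{-i}$ — unchanged, so that the lemma's pointwise cancellation lifts cleanly to the level of expected returns. I would state this decoupling explicitly as the one nontrivial ingredient and treat the summation over $t$ and the final relabelling as routine.
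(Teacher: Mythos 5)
Your proposal is correct and follows the same core route as the paper's own proof: expand $V_i$ as the expected horizon sum of the per-LB rewards $F_i(\boldsymbol{l}_i(t))$, convert each per-step gap into a total-fairness gap via the decomposition behind Lemma~\ref{lem:vbf}, and sum over $t\in\{h,\dots,H\}$ to recover $\phi$. The genuine difference is one of rigour rather than strategy: the paper invokes Lemma~\ref{lem:vbf} directly inside the two expectations and then silently drops the expectation operators --- exactly the looseness you flag --- whereas you isolate the residual $G_{-i}(\boldsymbol{l}_{-i})=-\frac{1}{N}\sum_{j=1}^{N}(l_{-ij}-\overline{l}_{-i})^2$ and justify its cancellation across the two trajectory distributions by the decoupled per-agent dynamics $X_{kj}(t+1)=\max\{X_{kj}(t)+w_k(t)\alpha_{kj}(t)-\frac{v_j}{M},0\}$, which render the law of $\boldsymbol{l}_{-i}(\cdot)$ invariant to agent $i$'s policy switch. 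This decoupling is precisely the assumption the paper buries in the lemma's phrase that the residual is ``a common term not depend[ing] on the changing policy $\pi_i$'', so your version is a strictly more complete rendering of the same argument, lifted properly to the level of expected returns. One caveat worth stating explicitly if you write it up: the decoupling holds because, in the paper's idealised model and POMG setting, each other agent's observations and loads are functions of its own actions and workloads only; if policies could condition on globally shared state influenced by agent $i$ (as in the generic Markov-game definition $\Pi_k:\mathcal{S}\rightarrow\Pr(\mathcal{A}_k)$), the distributional invariance of $\boldsymbol{l}_{-i}$ would no longer be automatic and the cancellation would need a separate argument.
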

\begin{proof}
\begin{align}
    V_i^{\pi_i, \pi_{-i}}(s) - V_i^{\tilde{\pi}_i, \pi_{-i}}(s)&=\mathbb{E}_{\pi_i, \pi_{-i}}\bigg[\sum_{t =
        h}^H r_{i, t}(s_{t}, \boldsymbol{a}_{t}) \bigg| s_h = s\bigg] - \mathbb{E}_{\tilde{\pi}_i, \pi_{-i}}\bigg[\sum_{t =
        h}^H r_{i, t}(s_{t}, \tilde{a}_{i, t}, a_{-i, t}) \bigg| s_h = s\bigg]\\
        &=\mathbb{E}_{{\pi}_i, \pi_{-i}}\bigg[\sum_{t =
        h}^H F_i(\boldsymbol{l}_i(t))\bigg] - \mathbb{E}_{\tilde{\pi}_i, \pi_{-i}}\bigg[\sum_{t =
        h}^H F_i(\tilde{\boldsymbol{l}}_i(t))\bigg]\\
        &=\sum_{t =
        h}^H \bigg(F^{\pi_i, -\pi_i}(\boldsymbol{l})-F^{\tilde{\pi}_i, -\pi_i}(\tilde{\boldsymbol{l}})\bigg) \quad (\text{Lemma \ref{lem:vbf}})\\
        &=\phi^{\pi_i, -\pi_i}(s)-\phi^{\tilde{\pi}_i, -\pi_i}(s)
\end{align}
Notice that $s$ is the ground truth state of the environment, therefore involving the expected time $\boldsymbol{l}$ to finish remaining jobs.
\end{proof}

\begin{lemma} NE for MPG is $\epsilon$-approximate NE for $\epsilon$-approximate MPG.~\cite{ali2019reinforcement}

\begin{proof}
We know NE $(\pi^*_i, \pi^*_{-i})$ for MPG,
\begin{align}
    V_i^{\pi^*_i, \pi^*_{-i}}(s) - V_i^{\tilde{\pi}_i, \pi^*_{-i}}(s)=\phi^{\pi^*_i, {\pi}^*_{-i}}(s)-\phi^{\tilde{\pi}_i, \pi^*_{-i}}(s) \ge 0
\end{align}
the policies can be $\epsilon$-approximate NE for another game with a different value function $\widehat{V}$ but the same potential function,
\begin{align}
    \widehat{V}_i^{\pi^*_i, \pi^*_{-i}}(s) - \widehat{V}_i^{\tilde{\pi}_i, \pi^*_{-i}}(s) \ge \epsilon, \forall i\in[N], \tilde{\pi}_i\in\Pi_i, s\in\mathcal{S}
\end{align}
thus,
\begin{align}
   \bigg|\bigg( \widehat{V}_i^{\pi^*_i, \pi^*_{-i}}(s) - \widehat{V}_i^{\tilde{\pi}_i, \pi^*_{-i}}(s) \bigg)-\bigg( \phi^{\pi^*, \pi^*_{-i}}(s)-\phi^{\tilde{\pi}, \pi^*_{-i}}(s)\bigg)\bigg| \le \epsilon
\end{align}
which satisfies the definition of $\epsilon$-approximate MPG.
\end{proof}
\end{lemma}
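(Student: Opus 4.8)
The plan is to chain together two facts: the exact optimality identity that the MPG structure forces on an exact Nash equilibrium, and the definition of the $\epsilon$-approximate MPG as a value-function perturbation that preserves the same potential $\phi$. First I would fix an arbitrary player $i$, an arbitrary deviation $\tilde{\pi}_i \in \Pi_i$, and a state $s \in \mathcal{S}$, so that the whole statement reduces to a single scalar inequality that I then show holds universally.

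The first step extracts the only information I need from the hypothesis that $(\pi^*_i, \pi^*_{-i})$ is an exact NE of the MPG. The NE condition gives $V_i^{\pi^*_i, \pi^*_{-i}}(s) \ge V_i^{\tilde{\pi}_i, \pi^*_{-i}}(s)$, and applying the defining identity of an MPG (value differences equal potential differences) converts this into a nonnegativity statement about the shared potential,
\begin{align}
\phi^{\pi^*_i, \pi^*_{-i}}(s) - \phi^{\tilde{\pi}_i, \pi^*_{-i}}(s) \ge 0.
\end{align}
This is the sole property of the equilibrium profile carried into the rest of the argument; notably it references $\phi$ alone, not the value function.

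The second step invokes the definition of the $\epsilon$-approximate MPG, namely a game whose value function $\widehat{V}$ shares the \emph{same} potential $\phi$ but whose unilateral-deviation differences match the corresponding $\phi$-differences only up to an additive $\epsilon$ in absolute value. Discarding the upper half of that absolute-value bound yields the one-sided inequality
\begin{align}
\widehat{V}_i^{\pi^*_i, \pi^*_{-i}}(s) - \widehat{V}_i^{\tilde{\pi}_i, \pi^*_{-i}}(s) \ge \left( \phi^{\pi^*_i, \pi^*_{-i}}(s) - \phi^{\tilde{\pi}_i, \pi^*_{-i}}(s) \right) - \epsilon.
\end{align}
Substituting the nonnegativity from the first step bounds the right-hand side below by $-\epsilon$, and rearranging gives $\widehat{V}_i^{\pi^*_i, \pi^*_{-i}}(s) \ge \widehat{V}_i^{\tilde{\pi}_i, \pi^*_{-i}}(s) - \epsilon$, which is exactly the $\epsilon$-approximate NE condition for the $\widehat{V}$-game. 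Since $i$, $\tilde{\pi}_i$, and $s$ were arbitrary, this completes the proof.

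I do not expect any genuinely hard analytic step: the argument is a substitution followed by a one-sided triangle inequality, so the bulk of the work is bookkeeping. The one point I would make explicit before writing the inequalities — and the main place the statement could be misread — is the precise definition of ``$\epsilon$-approximate MPG'' in force, i.e. a game sharing the same potential $\phi$ whose values deviate from the exact MPG identity by at most $\epsilon$. Conceptually the lemma is a robustness statement: because it is the \emph{potential's} ordering of unilateral deviations (not the values directly) that certifies the equilibrium, any value perturbation leaving $\phi$ intact can degrade each player's incentive to deviate by at most the additive slack $\epsilon$.
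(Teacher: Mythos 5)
Your proof is correct, and it is in fact logically tighter than the paper's own argument, which runs the chain of implications in the opposite direction. You take the two-sided perturbation bound $\bigl|\bigl(\widehat{V}_i^{\pi^*_i,\pi^*_{-i}}(s)-\widehat{V}_i^{\tilde{\pi}_i,\pi^*_{-i}}(s)\bigr)-\bigl(\phi^{\pi^*_i,\pi^*_{-i}}(s)-\phi^{\tilde{\pi}_i,\pi^*_{-i}}(s)\bigr)\bigr|\le\epsilon$ as the \emph{definition} of the $\epsilon$-approximate MPG (same potential $\phi$, perturbed values $\widehat{V}$), keep only its lower half, and combine it with the nonnegativity of the potential gap at the exact NE to get $\widehat{V}_i^{\pi^*_i,\pi^*_{-i}}(s)\ge\widehat{V}_i^{\tilde{\pi}_i,\pi^*_{-i}}(s)-\epsilon$, which is exactly the $\epsilon$-approximate NE condition; hypothesis and conclusion are cleanly separated. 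The paper instead \emph{asserts} the approximate-NE-type inequality for the perturbed game (and writes it with the sign flipped, $\widehat{V}_i^{\pi^*_i,\pi^*_{-i}}(s)-\widehat{V}_i^{\tilde{\pi}_i,\pi^*_{-i}}(s)\ge\epsilon$, where an $\epsilon$-approximate NE requires only $\ge-\epsilon$), and then concludes the absolute-value bound, i.e.\ it ends by verifying that the perturbed game ``satisfies the definition of $\epsilon$-approximate MPG'' rather than deriving the NE property from that definition. As written, that final step is also a non sequitur: a one-sided value inequality together with $\phi^{\pi^*_i,\pi^*_{-i}}(s)-\phi^{\tilde{\pi}_i,\pi^*_{-i}}(s)\ge 0$ does not imply the two-sided $\epsilon$ bound, which must be assumed, as you do. So your route buys a sound proof of the lemma as stated, with the correct sign convention, while the paper's version at best sketches the converse bookkeeping; your closing remark--that the potential, not the values, certifies the equilibrium, so potential-preserving value perturbations cost at most an additive $\epsilon$--is precisely the robustness content the lemma is meant to capture.
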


\section{Implementation}
\label{app:implement}

\subsection{Simulator}
\label{app:implement-simulator}

\begin{figure}[tbp]
	\centering
	\begin{subfigure}{0.48\columnwidth}
		\centering
		\includegraphics[width=0.9\columnwidth]{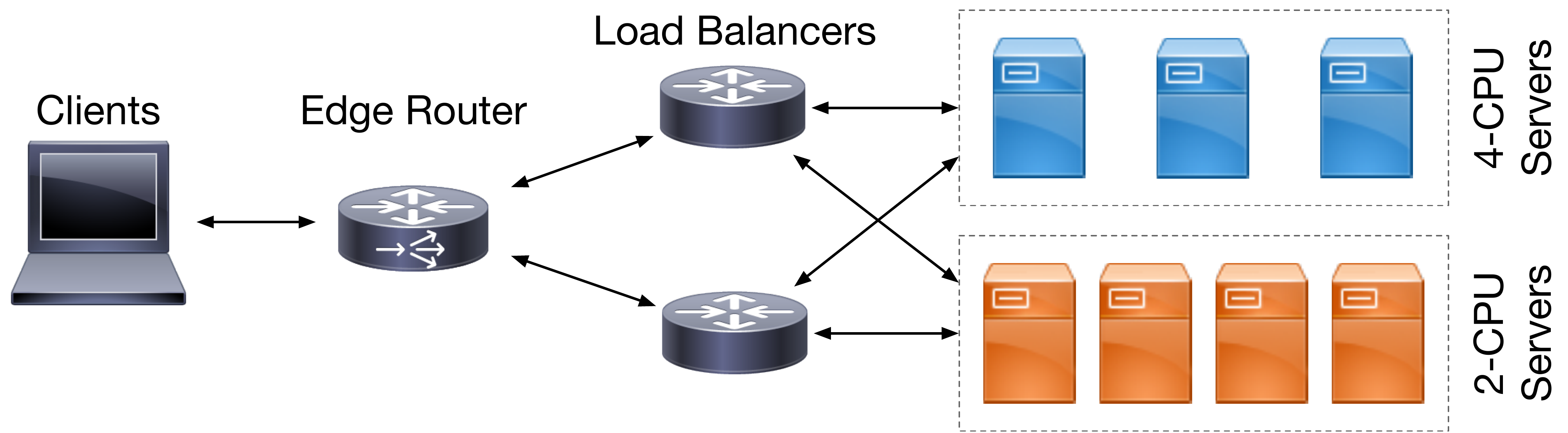}
		\caption{An example of network topology with two groups of $7$ servers.}
		\label{fig:app-implement-topo}
	\end{subfigure}
	\hspace{.05in}
	\begin{subfigure}{0.48\columnwidth}
		\centering
		\includegraphics[width=0.9\columnwidth]{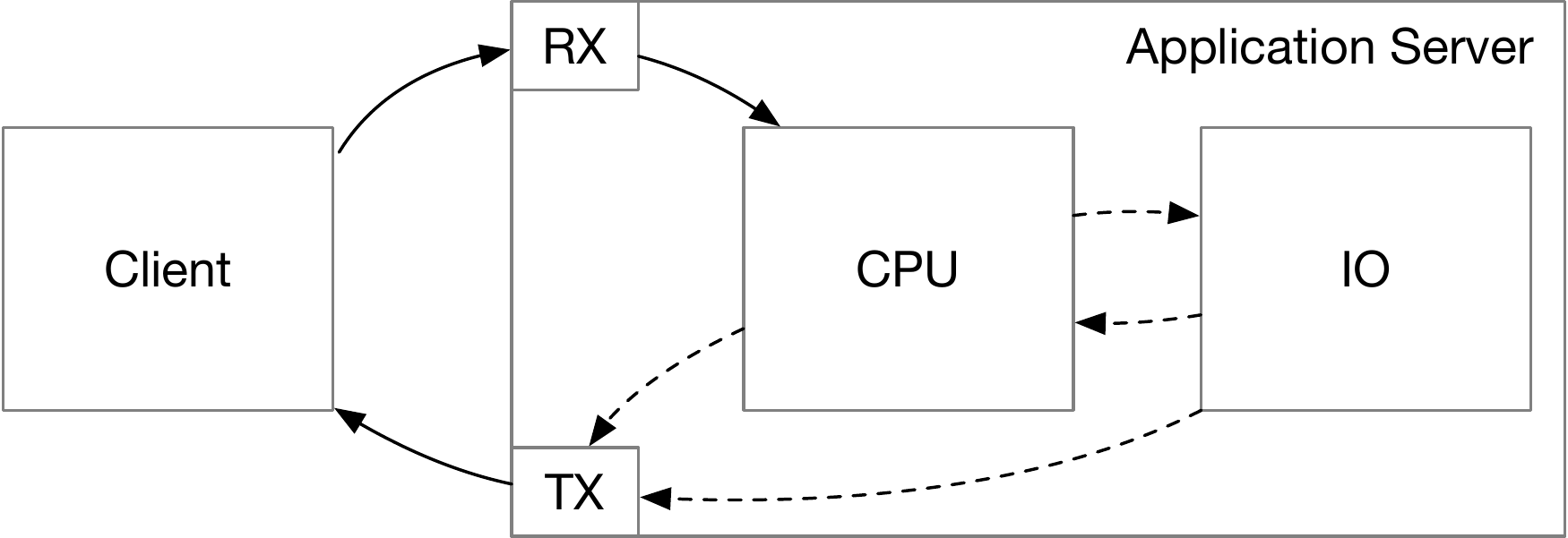}
		\caption{Illustration of the processing states of connection requests. Solid and dashed arrows represent deterministic and non-deterministic procedures respectively.}
		\label{fig:app-implement-simulator-process-model}
	\end{subfigure}
	\caption{Simulator implementation details.}
	\label{fig:app-implement-simulator}
\end{figure}

In order to compare the proposed RLB algorithms to the theoretically optimal solution which has perfect observation over the system -- which is not achievable in real-world system, we implement an event-driven simulator to simulate the discrete process of network flow arrival and departure in a networked system.
The simulator implements the network topology as in Fig.~\ref{fig:app-implement-topo}, where each LB is connected to all servers.

Real-world network applications can be CPU-bound or IO-bound~\cite{hadoop2011apache, spark2018apache}.
The simulator allows configuring applications that require multi-stage processes switching between CPU/IO queues (Fig.~\ref{fig:app-implement-simulator-process-model}). For instance, a connection request for a $2$-stage application is first processed in the CPU queue, then in the IO queue, before being sent back to the client.

Two different processing models are used for CPU and IO queues, respectively.
A FIFO model is defined for CPU queues, and connections that arrive when no CPU is available will be blocked in a backlog queue until there is an available CPU.
Realistic network applications feature blocked processor sharing model~\cite{hadoop2011apache, spark2018apache}, in which the instantaneous processing speed for each task $\hat{v}_j(t)$ at time $t$ on the $j$-th server is:
\begin{equation}
	\hat{v}_j(t)=
	\begin{cases}
	1       & \quad |w_{j}(t)| \leq p_{j},\\
	\frac{p_{j}}{\min \left(\hat{p}_{j}, |w_{j}(t)|\right)}  & \quad |w_{j}(t)| > {p}_{j},
	\end{cases}
\end{equation}
where $|w_{j}(t)|$ denotes the number of on-going tasks, and $p_j$ denotes the number of processors on the $j$-th server.
At any given moment, the maximum number of tasks that can be processed is $\hat{p}_j$.
Tasks that arrive when $|w_{j}(t)| \geq \hat{p}_j$ will be blocked in a wait queue (similar to backlog in \eg Apache) and will not be processed until there is an available slot in the CPU processing queue.
However, this does not happen under the constraints in Eq.~\eqref{eq:lb_cons1} as the task arrival rates are always slower than task departure rates (processing speed).
The server processing speed therefore is $v_j(t)=\hat{v}_j(t)|w_{j}(t)|$.
IO is simulated as a simple processor sharing model, in which the instantaneous processing speed is the inverse of the number of connections in the IO queue.
The backlog queue length of each server is configured as $64$.
Connections that arrive when the backlog queues are full will be rejected, with $40$s timeout.
Communication latency between $2$ nodes is uniformly distributed between $0.1$ms and $1$ms.

\subsection{Real-World DC Testbed}
\label{app:implement-testbed}

\subsubsection{System Platform}
\label{app:testbed-system}

Application servers are virtualised on $4$ UCS B200 M4 servers, each with one Intel Xeon E5-2690 v3 processor ($12$ physical cores and $48$ logical cores), interconnected by UCS $6332$ $16$UP fabric.
Operating systems are \texttt{Ubuntu 18.04.3 LTS} (\texttt{GNU/Linux 4.15.0-128-generic x86\_64}).
Compilers are \texttt{gcc} version \texttt{7.5.0} (\texttt{Ubuntu 7.5.0-3ubuntu1~18.04}).
Applications employed in this paper are the following: \texttt{Apache} \texttt{2.4.29}, \texttt{VPP v20.05}, \texttt{MySQL 5.7.25-0ubuntu0.18.04.2}, and \texttt{MediaWiki v1.30}.
The VMs are deployed on the same layer-$2$ link, with statically configured routing tables. 
Apache HTTP servers share the same VIP address on one end of GRE tunnels with the load balancer on the other end.

\subsubsection{Apache HTTP Servers}
\label{app:testbed-apache}

The Apache servers use \texttt{mpm\_prefork} module to boost performance.
Each server has maximum $32$ worker threads and TCP backlog is set to $128$.
In the Linux kernel, the \texttt{tcp\_abort\_on\_overflow} parameter is enabled, so that a TCP RST will be triggered when the queue capacity of TCP connection backlog is exceeded, instead of silently dropping the packet and waiting for a SYN retransmit.
With this configuration, the FCT measures application response delays rather than potential TCP SYN retransmit delays.
Two metrics are gathered as ground truth server load state on the servers: CPU utilization and instant number of Apache busy threads.
CPU utilization is calculated as the ratio of non-idle cpu time to total cpu time measured from the file \texttt{/proc/stat} and the number of Apache busy threads is assessed via Apache's \textit{scoreboard} shared memory.

\subsubsection{$24$-Hour Wikipedia Replay Trace}
\label{app:testbed-wiki}

To create Wikipedia server replicas, an instance of MediaWiki\footnote{https://www.mediawiki.org/wiki/Download} of version $1.30$, a MySQL server and the \texttt{memcached} cache daemon are installed on each of the application server instance. 
\textit{WikiLoader} tool~\cite{wikiloader} and a copy of the English version of Wikipedia database~\cite{wiki_traces}, are used to populate MySQL databases. 
The 24-hour trace is obtained from the authors of~\cite{wiki_traces} and for privacy reasons, the trace does not contain any information that exposes user identities.

\subsubsection{Feature Collection and Policy Update in the Data Plane}
\label{app:testbed-feature-collection}

\begin{figure}[t]
	\centering
	\centerline{\includegraphics[width=.6\columnwidth]{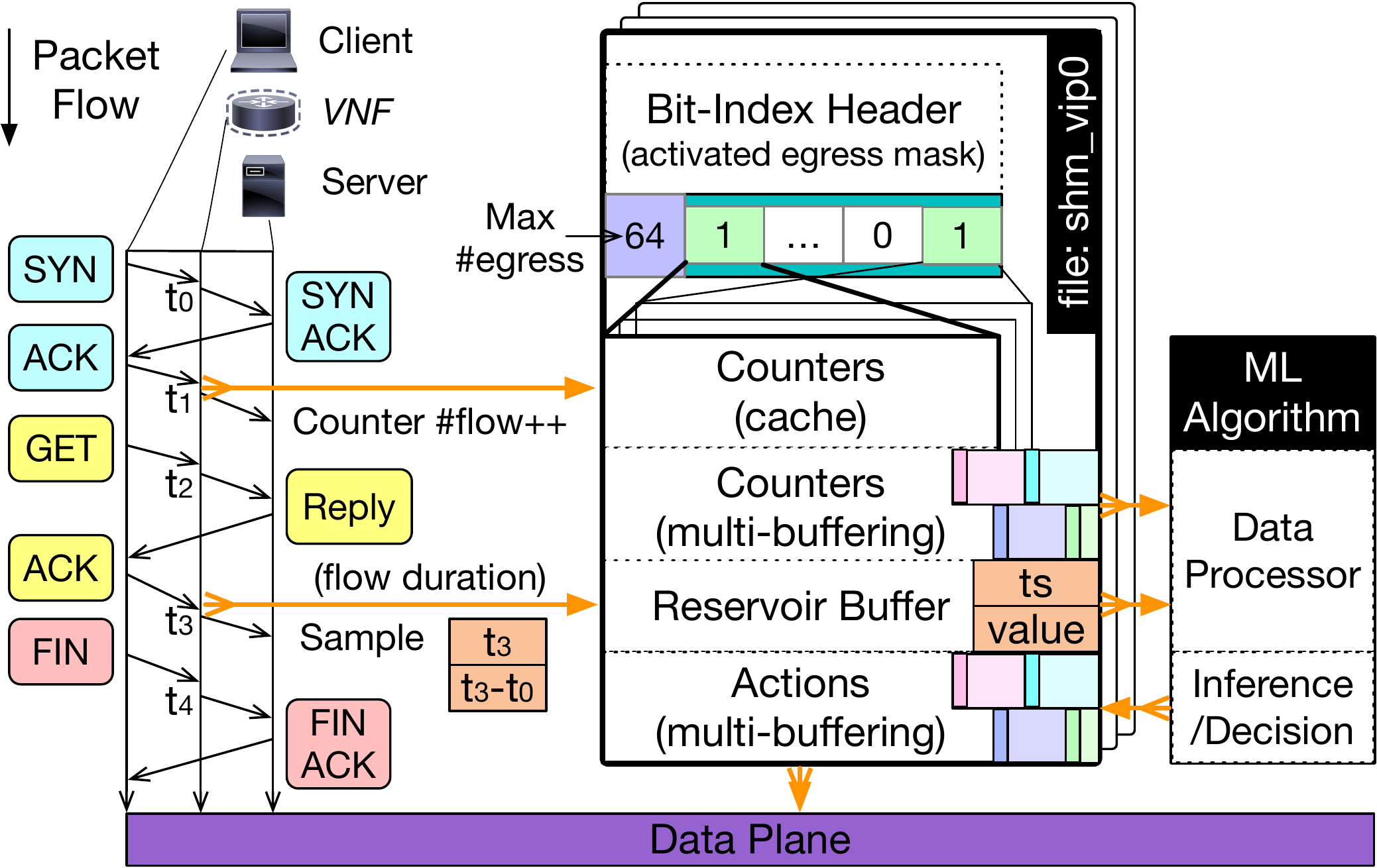}}
	\caption{Feature collection mechanism: \texttt{shm} layout and data flow pipeline.}
	\label{fig:app-design-sim}
\end{figure}

\begin{algorithm}[H]
	\footnotesize
	\caption{Reservoir sampling with no rejection}
		\label{alg:feature-reservoir}
		\begin{algorithmic}[1]
		\STATE $k \gets$ reservoir buffer size
		\STATE $buf \gets [(0, 0), \dots, (0, 0)]$\Comment{Size of $k$}
		\FOR {each observed sample $v$ arriving at $t$}
			\STATE $randomId \gets rand()$
			\STATE $idx \gets randomId\%N$ \Comment{randomly select one index}
			\STATE $buf[idx] \gets (t, v)$ \Comment{register sample in buffer}
		\ENDFOR
		\end{algorithmic}
\end{algorithm}

In order to apply RL in an asynchronous close-loop load balancing framework with high scalability and low latency, communication between the load balancer data plane and the ML application is implemented via POSIX shared memory (\texttt{shm}).
This mechanism allows features to be extracted from the data plane and conveyed to the RL agent -- with absolutely zero control message or communication overhead, and allows data-driven decisions generated by the RL agent to be updated asynchronously on the load balancer.

The pipeline of the data flow over the lifetime of a TCP connection is depicted in Fig.~\ref{fig:app-design-sim}.
By statefully tracking flow states, on receipt of different networking packets, we inspect packet header and collect networking features as counters or samples.
Quantitative features (task duration and task completion time) are collected as samples, using reservoir sampling (Algorithm~\ref{alg:feature-reservoir}).
Since networking environments are dynamic, it is important to capture not only the features, but also the sequential information of the system.
Reservoir sampling gathers a representative group of samples in fix-sized buffer from a stream, with $\mathcal{O}(1)$ insertion time.
It captures both the sampling timestamps and exponentially-distributed numbers of samples over a time window, which help conduct sequential pattern analysis\footnote{Based on the characteristics of different system dynamics, \eg long-term distribution shifts or short-term oscillations, tuning the reservoir sampling mechanism enables to collect different statistical representations of the states.}.
For a Poisson stream of events with rate $\lambda$, the expectation of the amount of samples that are preserved in buffer after $n$ steps is $E = \lambda \left(\frac{k - 1}{k}\right)^{\lambda n}$, where $k$ is the size of reservoir buffer.
On the other hand, counters are collected atomically and made available to the data processing agent using multi-buffering.

Cloud services have different characteristics and they are identified by virtual IPs (VIPs), which correspond to clusters of provisioned resources -- \eg servers, identified by a unique direct IP (DIP).
In production, cloud DCs are subject to high traffic rates and their environments and topologies change dynamically.
This requires to organise collected features in a generic yet scalable format, and make features available for ML algorithms without disrupting the data plane.
We organise observations of each VIPs in independent POSIX shared memory (\texttt{shm}) files, to provide scalable and dynamic service management.
In each \texttt{shm} file, collected features are further partitioned by egress equipments so that spatial information can be distinguished, including counters and reservoir samples.
Fig.~\ref{fig:app-design-sim} exemplifies the \texttt{shm} layout and data flow.

The bit-index binary header helps efficiently identify active application servers.
Each server has its own independent memory space, storing counters, reservoir samples, and data plane policies (actions) if necessary.
As depicted in Fig.~\ref{fig:app-design-sim}, on receipt of the first \texttt{ACK} from the client to a specific server $i$, VNF increments the number of flows in the counters cache of node $i$ with $\mathcal{O}(1)$ complexity.
With the same level of complexity, quatitative features (\eg flow duration $t_3 - t_0$ gathered at $t_3$ in Fig.~\ref{fig:app-design-sim}) can be stored in the reservoir buffer of node $i$ using Algorithm~\ref{alg:feature-reservoir}.
Gathered features (counters and samples) are made available in an organised layout and they can be quickly accessed by ML algorithms running in a different process.
With the bit-index header, locating features for a given server requires $\mathcal{O}(1)$ computational complexity and $\mathcal{O}(k)$ memory complexity, where $k$ is the reservoir buffer size.
Obtained features for all active servers can then be aggregated and processed to make further inferences or data-driven decisions, which can be written back to the memory space of each server ($\mathcal{O}(1)$ computational complexity).

While quantitative features are collected using reservoir sampling, counters are incremented by the data plane in the cache, and periodically drawn from cache using $m$-level multi-buffering with incremental sequence ID.
When copying data between cache and buffer, the sequence ID is set to $0$ to avoid I/O conflicts.
Pulling the counters from cache to multi-buffering requires $\mathcal{O}(1)$ computational complexity and maximal $\mathcal{O}(N)$ memory complexity.
ML algorithms can pull the latest observations from the multi-buffering with no disruption in the data plane, with $\mathcal{O}(m)$ computational complexity to find the buffer with the highest sequence ID.
Similarly, new network policies and data-driven decisions (\eg forwarding rules) can be updated to the data plane via action multi-buffering with $\mathcal{O}(m)$ computational complexity.

\begin{table}[t]
	\footnotesize
	\centering
	\begin{tabular}{|ll|c|c|}
		\hline
		\multicolumn{2}{|l|}{Operation / Complexity}                                                                                        & Computation                       & Memory               \\ \hline
		\multicolumn{2}{|l|}{Add / Remove VIP}                                                                                              & $\mathcal{O}(1)$                  & $\mathcal{O}(kN+mN)$ \\ \hline
		\multicolumn{2}{|l|}{Add server}                                                                                               & $\mathcal{O}(1)$                  & $\mathcal{O}(k+m)$   \\ \hline
		\multicolumn{2}{|l|}{Remove server}                                                                                            & $\mathcal{O}(1)$                  & $\mathcal{O}(1)$     \\ \hline
		\multicolumn{2}{|l|}{\begin{tabular}[c]{@{}l@{}}Register reservoir sample\\ Update counter (cache)\end{tabular}}                    & $\mathcal{O}(1)$                  & $\mathcal{O}(1)$     \\ \hline
		\multicolumn{2}{|l|}{\begin{tabular}[c]{@{}l@{}}Update counters / actions\\ (multi-buffering)\end{tabular}}                         & $\mathcal{O}(1)$                  & $\mathcal{O}(N)$     \\ \hline
		\multicolumn{1}{|l|}{\multirow{2}{*}{\begin{tabular}[c]{@{}l@{}}Get the latest \\observation\end{tabular}}}                                                       & 1 node    & \multirow{2}{*}{$\mathcal{O}(m)$} & $\mathcal{O}(k+m)$   \\ \cline{2-2} \cline{4-4} 
		\multicolumn{1}{|l|}{}                                                                                                  & All nodes &                                   & $\mathcal{O}(kN+mN)$ \\ \hline
		\multicolumn{1}{|l|}{\multirow{2}{*}{\begin{tabular}[c]{@{}l@{}}Update action in\\ the data plane\end{tabular}}} & 1 node    & \multirow{2}{*}{$\mathcal{O}(m)$} & $\mathcal{O}(1)$     \\ \cline{2-2} \cline{4-4} 
		\multicolumn{1}{|l|}{}                                                                                                  & All nodes &                                   & $\mathcal{O}(N)$     \\ \hline
	\end{tabular}
    \caption{Computation and memory complexity of different operations, where $k$ is the size of reservoir buffer, $N$ is the number of servers, and $m$ is the level of multi-buffering.}
	\label{tab:design-complexity}
\end{table}

To summarise, both computation and memory space complexity is presented in Table~\ref{tab:design-complexity}.
The whole dataflow is asynchronous and avoid stalling in the data exchange process in both the data plane and the control plane.

\subsubsection{Network Topology}
\label{app:testbed-topology}

\begin{figure}[t]
	\centering
	\centerline{\includegraphics[width=\columnwidth]{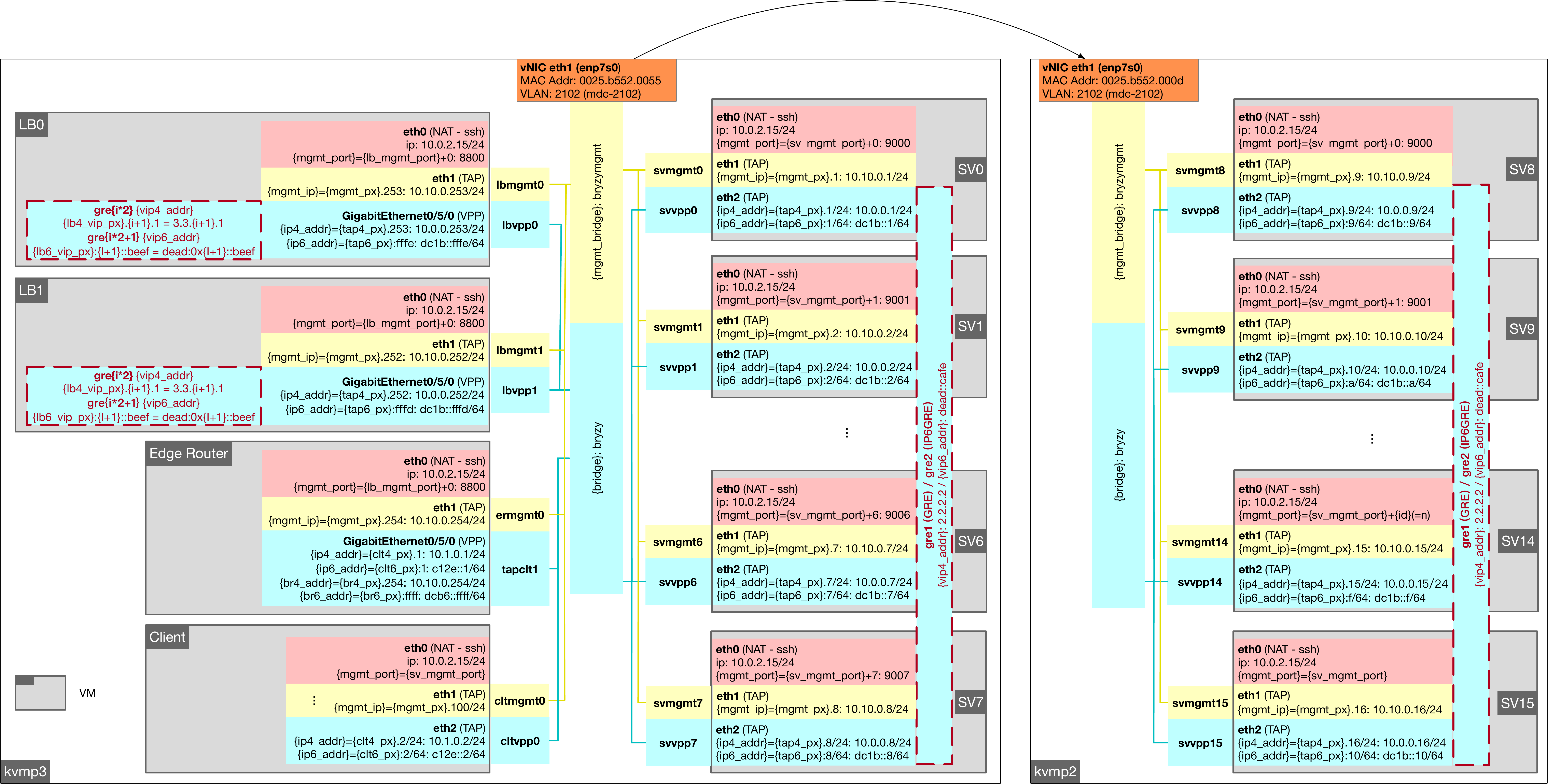}}
	\caption{Network topology of the real-world DC testbed.}
	\label{fig:app-design}
\end{figure}

For reproducibility, the network topology is depicted in Fig.~\ref{fig:app-design}. Two physical servers are connected via a VLAN. Each device is an instance of KVM, which is widely used for in-production vitualised Content Delivery Networks (CDNs).

\subsubsection{Realistic Testbed}

Modern data center may comprise thousands of servers and hundreds of LBs.
However, each independent service is exposed in a modular way at one or several virtual IP (VIP) addresses to receive requests, running over a cluster of servers.
Each server in the cluster can be identified by a unique direct IP (DIP).
Traffic and queries from the clients destined to a VIP are load balanced among the DIPs of the service.
The development of virtualization, where computers are emulated and/or sharing an isolated portion of the hardware by way of Virtual Machines (VMs), or run as isolated entities (containers) within the same operating system kernel, has accelerated the commoditization of compute resources.
Therefore, the gigantic in-production data center network are typically partitioned into small pods, where different services (VIPs) are hosted.
To justify the setups of our experiment satisfy the ``real-world'' requirement, we present a brief survey of real-world DC setup based on a set of state-of-the-art load balancing research papers, which are summarized below (Table~\ref{tab:dc-survey}).

\begin{table}[tbp]
	\caption{Survey on real-world testbed configurations.}
	\label{tab:dc-survey}
	\centering
	\resizebox{\textwidth}{!}{ 
	\begin{tabular}{lll}
	\toprule
	Related Work & Testbed Scale                                                                                & Note                                                                                                                                                                          \\ \midrule
	6LB~\cite{6lb}         & 2 LB + 28 servers (2-CPU each)                                                               & Our paper uses the same network trace as input traffic.                                                                                                                       \\ \cline{1-3}
	Ananta~\cite{ananta2013}       			& 14 LBs for $12$ VIPs                                                                         & \begin{tabular}[c]{@{}l@{}}The exact number of servers per VIP and the in-production\\  traffic is not documented in the paper.\end{tabular}                                  \\ \cline{1-3}
	Beamer~\cite{beamer}       & \begin{tabular}[c]{@{}l@{}}2 LB + 8 servers (small)\\ 4 LB + 10 servers (large)\end{tabular} & \begin{tabular}[c]{@{}l@{}}Large scale experiments are conducted with 700 active\\ HTTP connections max.\end{tabular}                                                         \\ \cline{1-3}
	Duet~\cite{duet}         & \begin{tabular}[c]{@{}l@{}}3 software LB + 3 hardware LB\\ + 34 servers\end{tabular}         & \begin{tabular}[c]{@{}l@{}}Synthetic traffic is applied so that the server cluster \\ behind VIP processes 60k (identical) packets per second.\end{tabular}                   \\ \cline{1-3}
	SilkRoad~\cite{silkroad2017}     & \begin{tabular}[c]{@{}l@{}}1 hardware LB or 3 software LB\\ per VIP\end{tabular}             & \begin{tabular}[c]{@{}l@{}}Real-world PoP traffic is applied, where one server cluster\\ behind VIP processes on average 309.84 active connections\\ per second.\end{tabular} \\ \cline{1-3}
	Cheetah~\cite{cheetah2020}      & 2 LB + 24 servers                                                                            & A Python generator creates 1500-2500 synthetic requests/s.                                                                                                                    \\ \bottomrule
	\end{tabular}
	}
\end{table}

Using $2$ physical servers ($48$ CPUs each), we have made our best effort to find a configuration that allows us to conduct experiments similar to real-world setups.
Based on the survey above, we believe that the experiments conducted in this paper have reasonable scale -- not only in terms of number of agents ($2/6$ LBs) and servers ($7/20$ servers), but also in terms of traffic rates -- more than $2$k queries per second per VIP and more than $1150.76$ concurrent connections in large scale experiments —- and are representative of real-world circumstances.

\subsection{Benchmark Load Balancing Methods}
\label{sec:simulator-methods}

To compare load balancing performance, $4$ state-of-the-art workload distribution algorithms are implemented.
Equal-cost multi-path (ECMP) randomly assigns servers to tasks with a server assignment function $\mathbb{P}(j) = \frac{1}{n}$, where $\mathbb{P}(j)$ denotes the probability of assigning the $j$-th server~\cite{faild2018}.
Weighted-cost multi-path (WCMP) assigns servers based on their weights derived, and has an assignment function as $\mathbb{P}(j) = \frac{v_j}{\sum v_j}$~\cite{maglev}.
Local shortest queue (LSQ) assigns the server with the shortest queue, \ie $\arg \min_{j\in[n]} |w^j(t)|$~\cite{twf2020}.
Shortest expected delay (SED) assigns the server the shortest queue normalized by the number of processors, \ie $\arg \min_{j\in[n]} \frac{|w^j(t)|+1}{v_j}$~\cite{lvs}, and is expected to have the best performance among conventional heuristics.
In the simulator, an \textit{Oracle} LB algorithm is implemented, which distributes connections to the server which is expected to finish all its job with the lowest delay (including the new connection).
The Oracle LB is aware of the remaining time of each connection, which is otherwise not observable for network LBs in real-world setups.
When receiving a new connection, the Oracle LB algorithm calculates the remaining time to process on each server (assuming the newly received connection is assigned on the server as well) and assigns the server with the lowest remaining time to process to the new-coming connection, to make sure that the makespan is always minimised with the global observation, which is not possible to be achieved in real-world system.
The load balancing decisions for the Oracle algorithm are also made immediately for the Oracle LB algorithm.

\section{Hyperparameters}
\label{app:hyperparameter}

\begin{table}[tbp]
\caption{Hyperparameters in MARL-based LB.}
\label{tab:rl_params}
\centering
\resizebox{\textwidth}{!}{ 
\begin{tabular}{ccccc}
\toprule
                           & \multirow{2}{*}{Hyperparameter} & Simulation        & \multicolumn{2}{c}{Experiments}            \\ \cline{3-5} 
                           &                                 & Moderate-Scale    & Moderate-Scale         & Large-Scale       \\ \cline{2-5} 
\multirow{9}{*}{Distributed LB}       & Learning rate                   & $3\times 10^{-4}$ & $1\times 10^{-3}$      & $1\times 10^{-3}$ \\ \cline{2-5} 
                           & Batch size                      & $25$              & $25$                   & $12$              \\ \cline{2-5} 
                           & Hidden dimension                & $64$              & $64$                   & $128$              \\ \cline{2-5} 
                           & Hypernet dimension              & $32$              & $32$                   & $64$              \\ \cline{2-5} 
                           & Replay buffer size              & $3000$            & $3000$                 & $3000$            \\ \cline{2-5} 
                           & Episodes                        & $500$             & $120$                  & $200$              \\ \cline{2-5} 
                           & Updates per episode             & $10$              & $10$                   & $10$              \\ \cline{2-5} 
                           & Step interval                   & $0.5$s            & $0.25$s                & $0.25$s           \\ \cline{2-5} 
                           & Target entropy                  & $-|\mathcal{A}|$  & $-|\mathcal{A}|$       & $-|\mathcal{A}|$  \\ \midrule
\multirow{9}{*}{LB System} & TCT Distribution                & Exponential       & Real-world trace       & Real-world trace  \\ \cline{2-5} 
                           & Average TCT                     & $1$s              & $200$ms                & $200$ms           \\ \cline{2-5} 
                           & Average bytes per task & -              & $12$KiB                & $12$KiB           \\ \cline{2-5} 
                           & Traffic rate                    & $20.28$tasks/s    & $[650, 800]$tasks/s    & $2000$tasks/s           \\ \cline{2-5} 
                           & Number of LB agents             & $2$    			 & $2$    			      & $6$           \\ \cline{2-5} 
                           & Total number of servers         & $8$    			 & $7$                    & $20$           \\ \cline{2-5} 
                           & Server group 2       & $4$ (1-CPU) & $3$ (2-CPU)    & $10$ (2-CPU)           \\ \cline{2-5} 
                           & Server group 1       & $4$ (2-CPU) & $4$ (4-CPU)    & $10$ (4-CPU)           \\ \cline{2-5} 
                           & Episode duration                & $60$s             & $60$s                  & $60$s             \\ \bottomrule
\end{tabular}
}
\end{table}

MARL-based load balancing methods are trained in both simulator, and moderate- and large-scale testbed setups for various amount of episodes.
At the end of each episode, the RL models are trained and updated for $10$ iterations.
Given the total provisioned computational resource, the traffic rates of network traces for training are carefully selected so that the RL models can learn from sensitive cases where workloads should be carefully placed to avoid overloaded less powerful servers. 
The set of hyper-parameters are listed in Table~\ref{tab:rl_params}.

\section{Results}
\label{app:results}

\subsection{Inaccurate Server Weights}
\label{app:results-simulation-weights}

\begin{table}[t]
	\centering
	\caption{Four configurations with different application types.}
	\begin{tabular}{ccccc}
		\hline
		\multicolumn{1}{|c|}{\begin{tabular}[c]{@{}c@{}}Application\\Type\end{tabular}} & \multicolumn{1}{c|}{\begin{tabular}[c]{@{}c@{}}Pure\\CPU\end{tabular}} & \multicolumn{1}{c|}{\begin{tabular}[c]{@{}c@{}}CPU\\Intensive\end{tabular}} & \multicolumn{1}{c|}{\begin{tabular}[c]{@{}c@{}}Balanced\end{tabular}} & \multicolumn{1}{c|}{\begin{tabular}[c]{@{}c@{}}IO\\Intensive\end{tabular}} \\ \hline
		\multicolumn{1}{|c|}{\begin{tabular}[c]{@{}c@{}}Avg. CPU Time (s)\end{tabular}}                                                   & \multicolumn{1}{c|}{$1.$}  & \multicolumn{1}{c|}{$0.75$}  & \multicolumn{1}{c|}{$0.5$}  & \multicolumn{1}{c|}{$0.25$}  \\ \hline
		\multicolumn{1}{|c|}{\begin{tabular}[c]{@{}c@{}}Avg. IO Time (s)\end{tabular}}                                                   & \multicolumn{1}{c|}{$0.$}  & \multicolumn{1}{c|}{$0.25$}  & \multicolumn{1}{c|}{$0.5$}  & \multicolumn{1}{c|}{$0.75$}  \\ \hline
	\end{tabular}
	\label{tab:simulator-multi-stage}
\end{table}

In real-world systems, not only error-prone configurations, but also different application profiles can lead to inaccurate server weight assignments.
Using a similar setup where $2$ cluster of $4$ servers have the same IO processing speed but $2$x different CPU processing speeds, different application profiles are compared to derive the actual server processing capacity differences.
A $3$-stage application whose queries follow CPU-IO-CPU processing stages is compared with a pure CPU application.
Both CPU and IO processing time follow exponential distributions and the aggregated average FCT is $1$s.
The four different types of network applications are configured as in Table~\ref{tab:simulator-multi-stage}.
As depicted in Fig.~\ref{fig:motivation-multi-stage}, with different provisioned resource ratios for CPU (\texttt{2x}) and IO (\texttt{1x}) queues, to guarantee the optimal workload distribution fairness and make each server have the minised maximal remaining time to finish among all servers at all time, the weights to be assigned for servers with different provisioned capacities are stochastic and depend on different application profiles.
Therefore, it is a sub-optimal solution for existing load balancing algorithms to statically configure server weights based on computational resources.

The setup in the paper for Table~\ref{tab:simulation-optimal-distance} is the following. There are $2$ LB agents and $8$ servers. $4$ servers have $1$ CPU worker-thread each while the other $4$ servers have $2$ CPU worker-threads each, to simulate the different server processing capacities.
Three types of applications are compared. $100\%$-CPU application is a single stage application, whose expected time to process is $1$s in the CPU queue and $0$s in the IO queue.
$75\%$-CPU$+25\%$-IO application is a two stage application, whose expected time to process is $0.75$s in the CPU queue and $0.25$s in the IO queue, simulating the CPU-intensive applications.
$50\%$-CPU$+50\%$-IO application is a two stage application, whose expected time to process is $0.5$s in both the CPU and IO queue.
The actual time to process for each task follows an exponential distribution.
The traffic rate is normalised to consume on average $84.5\%$ resources.

\subsection{Ablation Results}
\label{app:results-ablation}

Besides the experiments conducted in the paper, we further study the following aspects of the application of MARL on real-world network load balancing problems.

\subsubsection{Reward Engineering}
\label{app:results-reward-engineering}

To verify the effectiveness of the proposed potential function VBF, we compare it with a set of different reward functions, including makespan (MS), PBF, and coefficient of variation (CV).
During our study based on real-world testbed, we found that, when using VBF as the reward, the convergence is fast at the beginning of the training process and the sample variance of average flow duration (as an estimation of the queuing and processing delay) on each server becomes close to zero.
However, it does not necessarily mean that the load balancing policy is optimal and the NE is achieved.
To capture the small variance which is close-to-zero, we also calculate the logarithm of VBF ($\log$VBF) as reward.
And the combination of VBF + $\log$VBF is an empirical design aiming at faster convergence towards the NE policy.
The complete comparison results are shown in Table~\ref{tab:compare_small_scale_full_avg} (average QoS) and in Table~\ref{tab:compare_small_scale_full_99} ($99$th percentile QoS), where the proposed distributed MARL framework achieves the best performance for most cases.
To provide a complete view of all comparison results besides the one shown in Fig.~\ref{fig:eval-cdf}, we show the CDF of task completion time under all test cases in Fig.~\ref{fig:app-cdf}
Accompanying the evaluation results of average QoS in large-scale testbed in Table~\ref{tab:compare_large_scale}, we also show in Table~\ref{tab:compare_large_scale_99} the $99$th percentile QoS in large-scale testbed.

\begin{figure}[tbp]
	\centering
	\begin{subfigure}{0.45\columnwidth}
		\centering
		\includegraphics[height=1in]{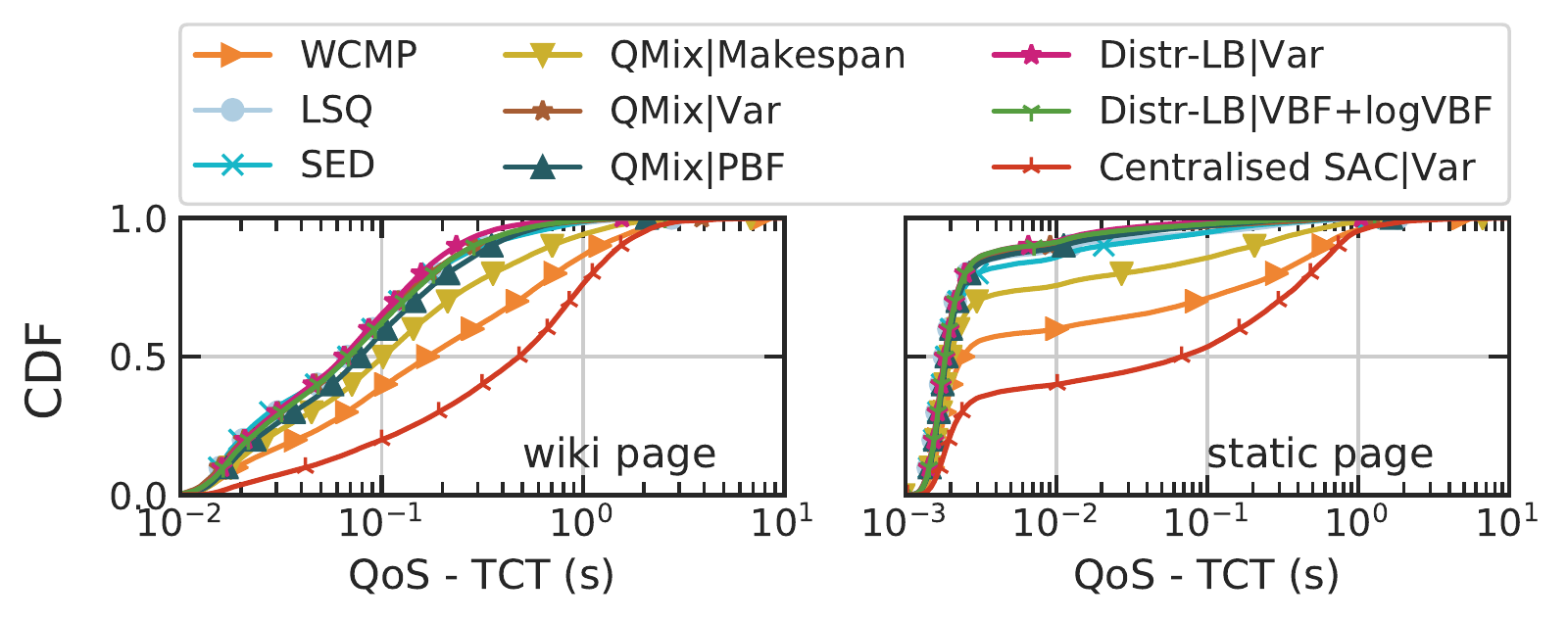}
		\vskip -.1in
		\caption{ }
		\label{fig:app-cdf-hour0}
	\end{subfigure}
	\hspace{.05in}
	\begin{subfigure}{0.45\columnwidth}
		\centering
		\includegraphics[height=1in]{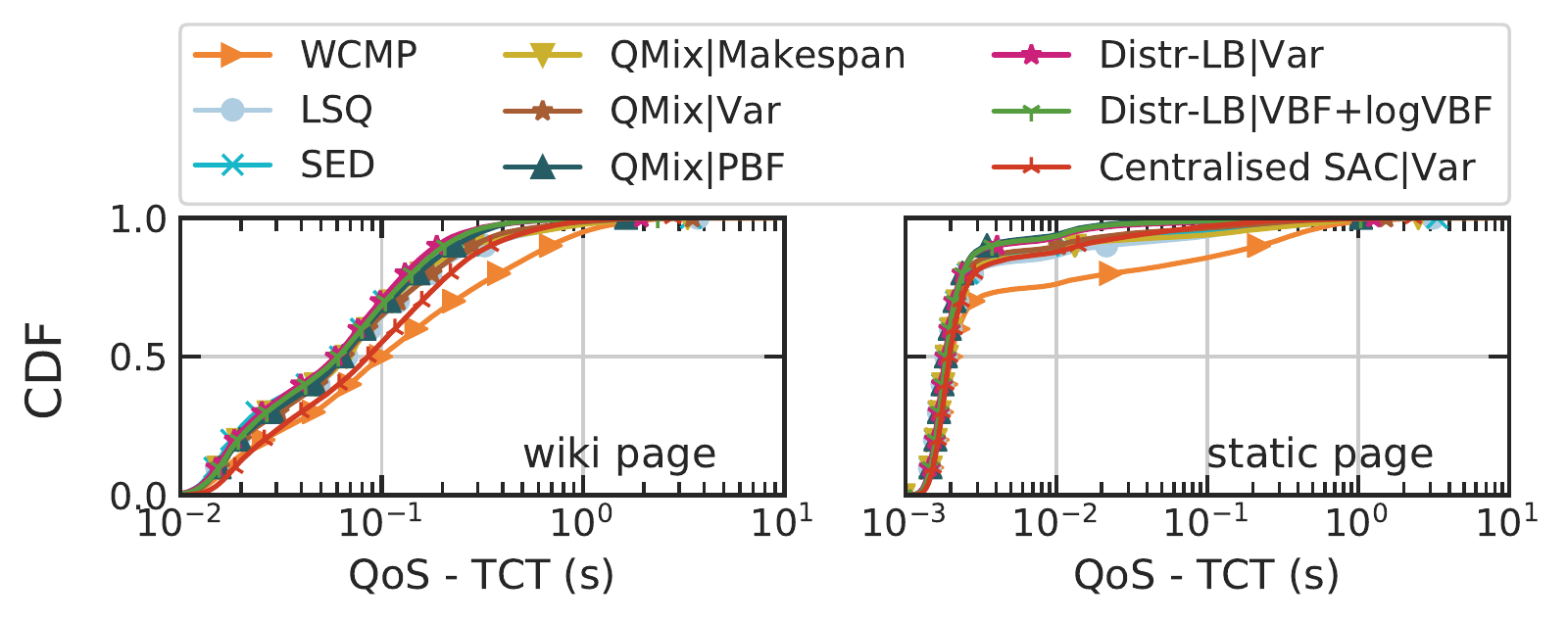}
		\vskip -.1in
		\caption{ }
		\label{fig:app-cdf-hour2}
	\end{subfigure}
    \begin{subfigure}{0.45\columnwidth}
		\centering
		\includegraphics[height=1in]{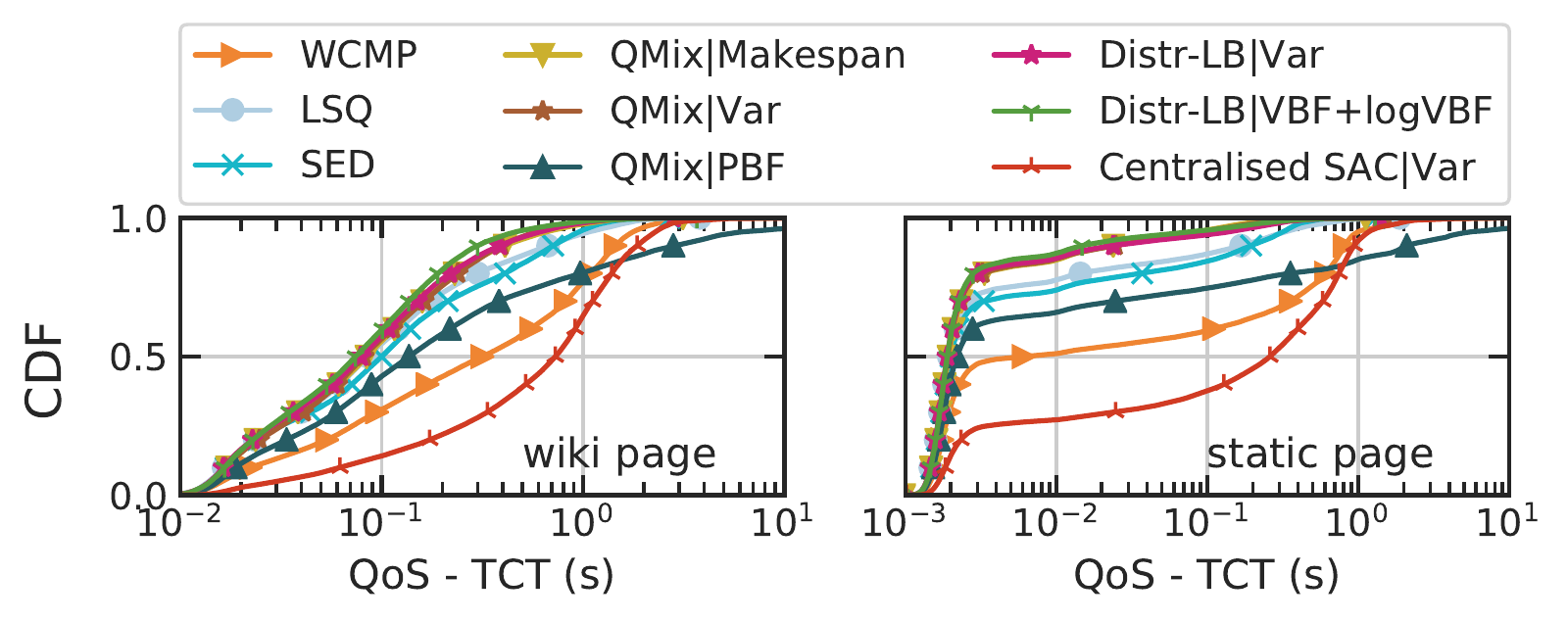}
		\vskip -.1in
		\caption{ }
		\label{fig:app-cdf-hour4}
	\end{subfigure}
	\hspace{.05in}
	\begin{subfigure}{0.45\columnwidth}
		\centering
		\includegraphics[height=1in]{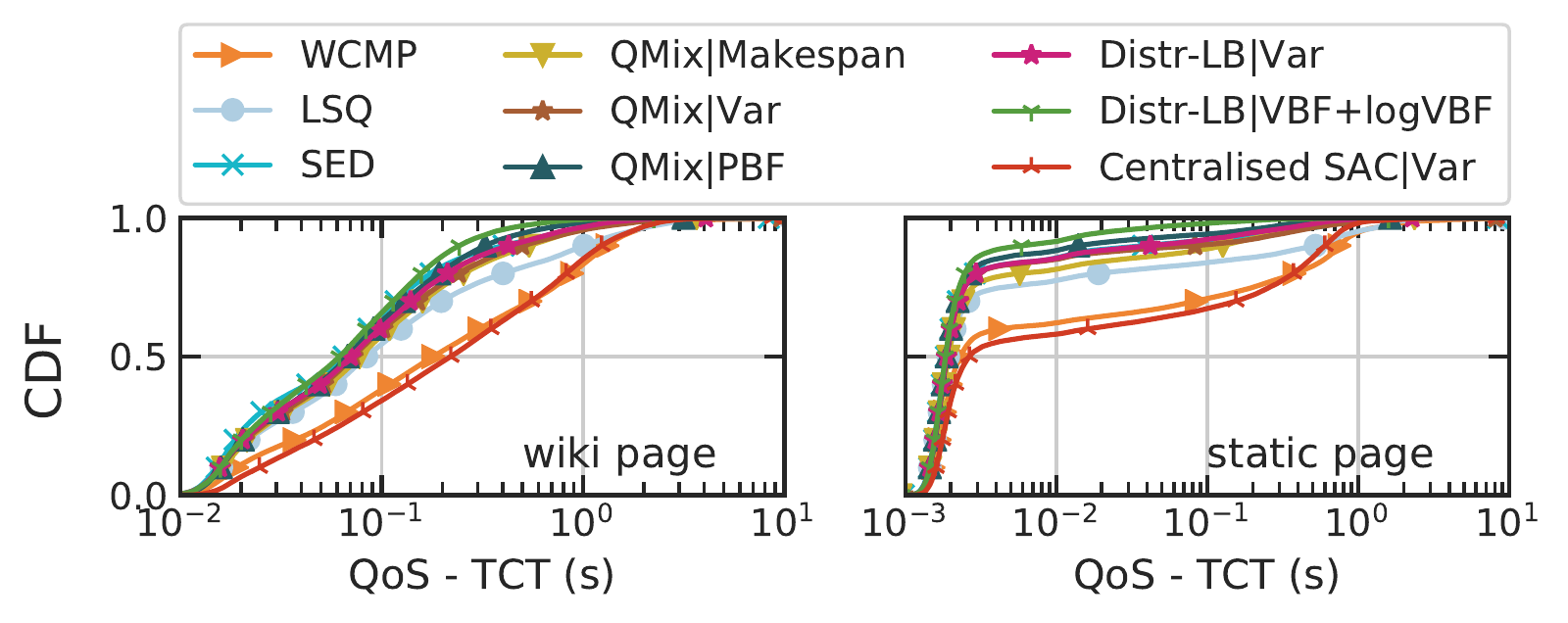}
		\vskip -.1in
		\caption{ }
		\label{fig:app-cdf-hour18}
	\end{subfigure}
	\begin{subfigure}{0.45\columnwidth}
		\centering
		\includegraphics[height=1in]{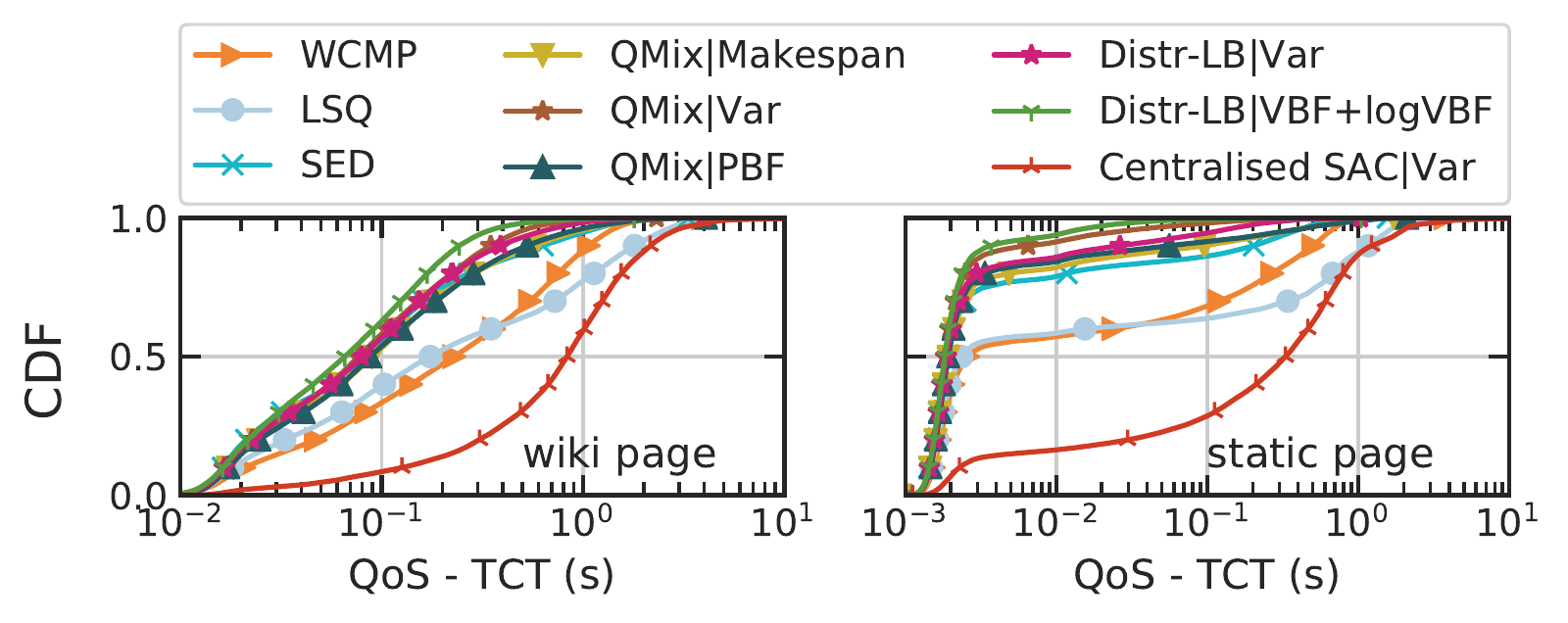}
		\vskip -.1in
		\caption{ }
		\label{fig:app-cdf-hour20}
	\end{subfigure}
	\hspace{.05in}
	\begin{subfigure}{0.45\columnwidth}
		\centering
		\includegraphics[height=1in]{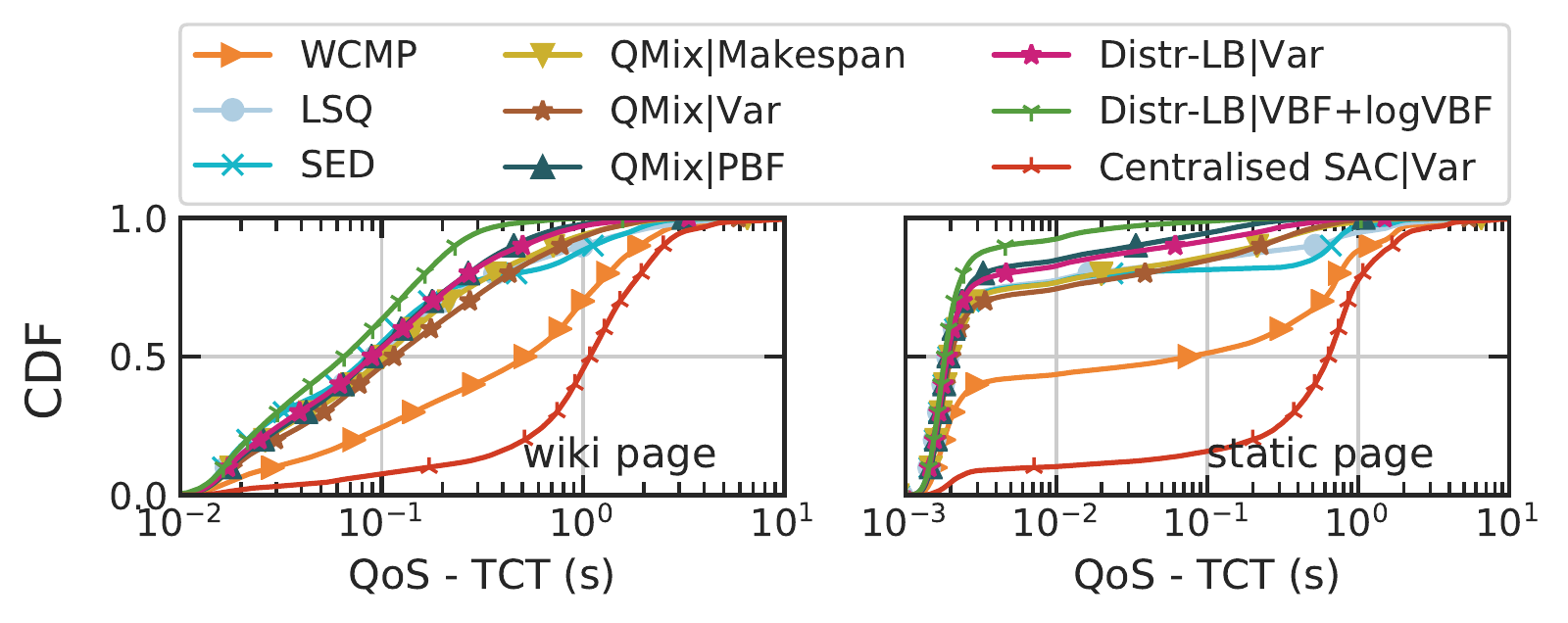}
		\vskip -.1in
		\caption{ }
		\label{fig:app-cdf-hour22}
	\end{subfigure}
	\caption{Experimental results with real-world network traces from different period of time during a day, which demonstrates the effectiveness of the proposed distributed RL framework with VBF as rewards.}
	\label{fig:app-cdf}
	\vskip -.2in
\end{figure}

\begin{table}[tbp]
    \centering
    \caption{Complete results of \emph{average} QoS (s)
    for comparison in moderate-scale real-world network setup (DC network and traffic).}
    \resizebox{\columnwidth}{!}{ 
    \begin{tabular}{c|c|c|c|c|c|c|c|c|c}
    \toprule
    \multicolumn{2}{c|}{\multirow{2}{*}{Method}} & \multicolumn{2}{c|}{Period I ($796.315$ queries/s)} & \multicolumn{2}{c|}{Period II ($687.447$ queries/s)} & \multicolumn{2}{c|}{Period III ($784.522$ queries/s)} & \multicolumn{2}{c}{Period IV ($784.522$ queries/s)} \\
    \cline{3-10}
    \multicolumn{2}{c|}{} &  \multicolumn{1}{c|}{Wiki} & \multicolumn{1}{c|}{Static} & \multicolumn{1}{c|}{Wiki} & \multicolumn{1}{c|}{Static} &\multicolumn{1}{c|}{Wiki} & \multicolumn{1}{c|}{Static} & \multicolumn{1}{c|}{Wiki} & \multicolumn{1}{c}{Static}  \\
    \hline
      \multicolumn{2}{c|}{WCMP} & $0.435\pm0.083 $ & $0.171\pm0.055 $ & $0.254\pm0.087 $ & $0.073\pm0.056 $ & $0.412\pm0.101$ & $0.134\pm0.059$ & $ 0.834\pm0.323$ & $0.492\pm0.276 $ \\
      \multicolumn{2}{c|}{LSQ} & $ 0.141\pm0.073$ & $0.023\pm0.030 $ & $ 0.143\pm0.040$ & $0.023\pm0.011$ & $0.620\pm0.442$ & $0.339\pm0.316$ & $0.357\pm0.373 $ & $0.173\pm0.299 $  \\
      \multicolumn{2}{c|}{SED}  & $0.137\pm0.076$ & $0.020\pm0.023 $ & $ 0.131\pm0.067$ & $0.027\pm0.035 $ &  $0.215\pm0.210$ & $0.051\pm0.081$ & $0.346\pm0.496$ &$0.169\pm0.330 $ \\ \hline
      \multirow{2}{*}{\textbf{RLB-SAC}~\cite{yao2022reinforced}} 
      & Jain & $0.137\pm0.020$ & $0.009\pm0.006$ & $0.125\pm0.035$ & $0.012\pm0.008$ & $0.193\pm0.073$ & $0.026\pm0.022$ & $0.204\pm0.084$ & $0.039\pm0.047$ \\
      & G    & $0.140\pm0.053$ & $0.015\pm0.019$ & $0.103\pm0.022$ & $0.010\pm0.007$ & $0.149\pm0.049$ & $0.015\pm0.011$ & $0.155\pm0.052$ & $0.011\pm0.011$ \\	\hline
      \multirow{7}{*}{\textbf{QMix-LB}} 
      & MS & $0.258\pm0.174 $ & $0.071\pm0.087 $ & $0.142\pm0.073 $ & $0.030\pm0.034 $ & $0.217\pm0.157$ & $0.048\pm0.069$ &$0.263\pm0.202 $ &$0.073\pm0.092 $  \\
     & $\log$MS & $0.167\pm0.031 $ & $0.009\pm0.004 $ & $0.132\pm0.034 $ & $0.011\pm0.008 $ & $0.844\pm1.376$ & $0.635\pm1.249$ & $ 0.278\pm0.130$ &$0.041\pm0.038 $  \\
      & VBF & $ 0.128\pm0.052$ & $0.014\pm0.017 $ & $0.132\pm0.075 $ & $0.016\pm0.025 $ & $0.141\pm0.025$ & $0.008\pm0.004$ &$ 0.286\pm0.162$ & $0.068\pm0.066 $ \\
     & $\log$VBF & $\mathbf{0.106\pm0.011} $ & $0.007\pm0.001 $ & $0.109\pm0.032 $ & $0.011\pm0.009 $ & $0.171\pm0.043$ & $0.022\pm0.013$ &$0.223\pm0.045 $ &$ 0.026\pm0.017$  \\
      & VBF+$\log$VBF & $0.112\pm0.005 $ & $\mathbf{0.005\pm0.002} $ & $ 0.101\pm0.010$ & $0.005\pm0.001 $ & $0.187\pm0.090$ & $0.024\pm0.029$ &$0.201\pm0.080 $ & $0.021\pm0.020 $ \\
      & PBF &  $0.142\pm0.035 $ & $0.012\pm0.006 $ & $0.099\pm0.011 $ & $\mathbf{0.004\pm0.001}$ &$0.211\pm0.153$ & $0.047\pm0.078$ & $0.181\pm0.042 $& $0.018\pm0.009 $ \\
      & CV & $ 0.407\pm0.505$ & $0.201\pm0.340 $ & $0.113\pm0.036 $ & $0.009\pm0.008 $ & $0.203\pm0.089$ & $0.039\pm0.037$ & $0.219\pm0.072 $& $ 0.031\pm0.017$\\ \hline
    
     \multirow{3}{*}{\textbf{Centr-LB}} 
      & VBF & $0.690\pm0.211 $ & $0.284\pm0.181 $ & $0.152\pm0.041 $ & $0.016\pm0.011 $ & $1.068\pm0.386$ & $0.570\pm0.378$ & $ 1.378\pm0.377$& $ 0.867\pm0.350$\\
     & $\log$VBF & $0.676\pm0.231 $ & $ 0.265\pm0.151$ & $0.160\pm0.023 $ & $ 0.013\pm0.005$ & $0.938\pm0.200$ & $0.446\pm0.179$ & $ 0.972\pm0.288$& $ 0.495\pm0.268$\\
     & VBF+$\log$VBF & $ 0.520\pm0.034$ & $0.167\pm0.017 $ & $0.192\pm0.040 $ & $0.019\pm0.014 $ & $0.759\pm0.254$ & $0.306\pm0.222$ & $ 1.013\pm0.168$&$ 0.520\pm0.167$ \\ \hline
     \multirow{4}{*}{\makecell{\textbf{Distr-LB}\\(this paper)}} 
      & VBF & $\mathbf{0.106\pm0.013} $ & $0.007\pm0.002 $ & $\mathbf{0.090\pm0.016} $ & $ 0.007\pm0.005$ & $0.159\pm0.054$ & $0.017\pm0.009$ & $0.196\pm0.091 $& $ 0.032\pm0.033$\\
     & $\log$VBF & $ 0.139\pm0.021$ & $0.011\pm0.004 $ & $0.129\pm0.032 $ & $ 0.012\pm0.011$ & $0.250\pm0.156$ & $0.057\pm0.077$ & $ 0.226\pm0.059$&$ 0.038\pm0.019$ \\
     & VBF+$\log$VBF & $ 0.126\pm0.038$ & $0.009\pm0.006 $ & $ 0.094\pm0.023$ & $ 0.006\pm0.006$ & $\mathbf{0.108\pm0.022}$ & $\mathbf{0.004\pm0.001}$ & $\mathbf{0.104\pm0.013} $&
    $\mathbf{0.006\pm0.003}$\\
     & CV & $0.150\pm0.040 $ & $0.011\pm0.009 $ & $0.149\pm0.060 $ & $0.026\pm0.025 $ & $0.301\pm0.146$ & $0.066\pm0.072$ & $ 0.267\pm0.156$& $0.051\pm0.052 $\\

    \bottomrule
    \end{tabular}
    }
    \label{tab:compare_small_scale_full_avg}
\end{table}

\begin{table}[tbp]
	\centering
	\caption{Complete results of \emph{99th percentile} QoS (s)
	for comparison in moderate-scale real-world network setup (DC network and traffic).}
	\resizebox{\columnwidth}{!}{ 
	\begin{tabular}{c|c|c|c|c|c|c|c|c|c}
	\toprule
	\multicolumn{2}{c|}{\multirow{2}{*}{Method}} & \multicolumn{2}{c|}{Period I ($796.315$ queries/s)} & \multicolumn{2}{c|}{Period II ($687.447$ queries/s)} & \multicolumn{2}{c|}{Period III ($784.522$ queries/s)} & \multicolumn{2}{c}{Period IV ($784.522$ queries/s)} \\
	\cline{3-10}
	\multicolumn{2}{c|}{} &  \multicolumn{1}{c|}{Wiki} & \multicolumn{1}{c|}{Static} & \multicolumn{1}{c|}{Wiki} & \multicolumn{1}{c|}{Static} &\multicolumn{1}{c|}{Wiki} & \multicolumn{1}{c|}{Static} & \multicolumn{1}{c|}{Wiki} & \multicolumn{1}{c}{Static}  \\
	\hline
	  \multicolumn{2}{c|}{WCMP} & $5.801 \pm 4.519$ & $4.462 \pm 3.867$   & $4.019 \pm 3.601$ & $3.192 \pm 3.543$   & $3.239 \pm 2.721$ & $2.305 \pm 2.700$   & $8.066 \pm 7.025$ & $6.733 \pm 5.329$ \\
	  \multicolumn{2}{c|}{LSQ}  & $0.722 \pm 0.487$ & $0.195 \pm 0.314$   & $0.814 \pm 0.478$ & $0.288 \pm 0.259$   & $1.846 \pm 1.915$ & $1.168 \pm 1.575$   & $1.257 \pm 1.921$ & $0.831 \pm 2.002$  \\
	  \multicolumn{2}{c|}{SED}  & $0.706 \pm 0.399$ & $0.208 \pm 0.246$   & $0.697 \pm 0.460$ & $0.217 \pm 0.291$   & $0.726 \pm 0.554$ & $0.203 \pm 0.261$   & $0.909 \pm 1.180$ & $0.450 \pm 1.112$  \\ \hline
	\multirow{2}{*}{\textbf{RLB-SAC}~\cite{yao2022reinforced}} 
	& Jain & $0.858 \pm 0.240$ & $0.159 \pm 0.125$ & $0.830 \pm 0.358$ & $0.227 \pm 0.186$ & $1.227 \pm 0.489$ & $0.354 \pm 0.246$ & $1.283 \pm 0.594$ & $0.408 \pm 0.374$ \\
	& G & $0.945 \pm 0.495$ & $0.185 \pm 0.214$ & $0.682 \pm 0.255$ & $0.177 \pm 0.162$ & $1.003 \pm 0.459$ & $0.225 \pm 0.176$ & $0.973 \pm 0.389$ & $0.166 \pm 0.156$ \\	\hline
	 \multirow{7}{*}{\textbf{QMix-LB}} 
	  & MS & $1.469 \pm 0.789$ & $0.584 \pm 0.547$   & $1.095 \pm 0.694$ & $0.444 \pm 0.423$   & $1.182 \pm 0.801$ & $0.420 \pm 0.483$   & $1.447 \pm 0.885$ & $0.751 \pm 0.772$ \\
	 & $\log$MS & $0.985 \pm 0.264$ & $0.117 \pm 0.043$   & $0.909 \pm 0.388$ & $0.172 \pm 0.142$   & $7.043 \pm 12.237$& $6.427 \pm 12.479$  & $1.326 \pm 0.584$ & $0.371 \pm 0.305$ \\
	  & VBF & $0.732 \pm 0.395$ & $0.159 \pm 0.239$   & $0.665 \pm 0.550$ & $0.157 \pm 0.278$   & $0.744 \pm 0.278$ & $0.123 \pm 0.093$   & $1.028 \pm 0.694$ & $0.279 \pm 0.365$ \\
	 & $\log$VBF & $0.682 \pm 0.100$ & $0.124 \pm 0.019$   & $0.772 \pm 0.313$ & $0.205 \pm 0.159$   & $1.174 \pm 0.323$ & $0.382 \pm 0.183$   & $1.426 \pm 0.323$ & $0.327 \pm 0.153$ \\
	  & VBF+$\log$VBF & $0.664 \pm 0.057$ & $\mathbf{0.087 \pm 0.056}$& $0.611 \pm 0.097$ & $0.055 \pm 0.027$   & $1.171 \pm 0.568$ & $0.302 \pm 0.293$   & $1.206 \pm 0.501$ & $0.278 \pm 0.239$ \\
	  & PBF & $0.661 \pm 0.193$ & $\mathbf{0.087 \pm 0.099}$ & $0.505 \pm 0.119$ & $0.048 \pm 0.029$   & $0.768 \pm 0.728$ & $0.205 \pm 0.465$   & $0.726 \pm 0.433$ & $0.128 \pm 0.136$ \\
	  & CV & $1.928 \pm 2.228$ & $1.281 \pm 2.095$   & $0.708 \pm 0.405$ & $0.131 \pm 0.130$   & $1.331 \pm 0.593$ & $0.481 \pm 0.297$   & $1.344 \pm 0.329$ & $0.451 \pm 0.218$ \\ \hline	
	  \multirow{3}{*}{\textbf{Centr-LB}} 
	 & VBF & $3.101 \pm 1.582$ & $1.985 \pm 1.790$   & $0.903 \pm 0.350$ & $0.328 \pm 0.353$   & $4.409 \pm 2.693$ & $3.629 \pm 3.219$   & $6.649 \pm 4.562$ & $6.120 \pm 4.721$ \\
	 & $\log$VBF & $2.715 \pm 0.444$ & $1.718 \pm 0.547$   & $1.016 \pm 0.229$ & $0.264 \pm 0.092$   & $3.247 \pm 0.725$ & $2.136 \pm 0.832$   & $4.286 \pm 2.091$ & $3.459 \pm 2.323$ \\
	 & VBF+$\log$VBF & $2.459 \pm 0.101$ & $1.309 \pm 0.063$   & $1.243 \pm 0.358$ & $0.285 \pm 0.189$   & $2.796 \pm 0.900$ & $1.702 \pm 1.287$   & $3.466 \pm 0.820$ & $2.628 \pm 1.142$ \\ \hline	
	 \multirow{4}{*}{\makecell{\textbf{Distr-LB}\\(this paper)}} 
	  & VBF & $\mathbf{0.651 \pm 0.151}$ & $0.119 \pm 0.072$   & $0.571 \pm 0.237$ & $0.133 \pm 0.136$   & $1.039 \pm 0.302$ & $0.298 \pm 0.125$   & $1.187 \pm 0.594$ & $0.355 \pm 0.318$ \\
	 & $\log$VBF & $0.923 \pm 0.162$ & $0.193 \pm 0.086$   & $0.933 \pm 0.415$ & $0.243 \pm 0.302$   & $1.491 \pm 0.764$ & $0.579 \pm 0.531$   & $1.481 \pm 0.473$ & $0.558 \pm 0.286$ \\
	 & VBF+$\log$VBF & $0.745 \pm 0.316$ & $0.185 \pm 0.152$   & $\mathbf{0.385 \pm 0.094}$& $\mathbf{0.023 \pm 0.003}$& $\mathbf{0.595 \pm 0.199}$& $\mathbf{0.051 \pm 0.030}$& $\mathbf{0.563 \pm 0.180}$& $\mathbf{0.100 \pm 0.073}$  \\
	 & CV & $0.865 \pm 0.261$ & $0.147 \pm 0.121$   & $1.109 \pm 0.668$ & $0.433 \pm 0.431$   & $1.730 \pm 0.468$ & $0.612 \pm 0.420$   & $1.383 \pm 0.666$ & $0.446 \pm 0.345$ \\	
	\bottomrule
	\end{tabular}
	}
	\label{tab:compare_small_scale_full_99}
\end{table}

\begin{table}[t]
    \vskip -.2in
    \scriptsize
    \centering
    \caption{Comparison of 99th percentile QoS (s) in large-scale real-world network setup (DC network and traffic).}
    \begin{tabular}{c|c|c|c|c|c}
    \toprule
    \multicolumn{2}{c}{\multirow{2}{*}{Method}} &  \multicolumn{2}{|c|}{Period I ($2022.855$ queries/s)} & \multicolumn{2}{c}{Period II ($2071.129$ queries/s)} \\
    \cline{3-6}
    \multicolumn{2}{c|}{} & \multicolumn{1}{c|}{Wiki} & \multicolumn{1}{c|}{Static} & \multicolumn{1}{c|}{Wiki} & \multicolumn{1}{c}{Static} \\
    \hline
     \multicolumn{2}{c|}{WCMP}  &    $3.014 \pm 0.612$  & $2.152 \pm 0.907$ & $4.290 \pm 3.593$ & $3.300 \pm 3.308$ \\
      \multicolumn{2}{c|}{LSQ}  &    $1.863 \pm 0.888$  & $0.843 \pm 0.773$ & $1.243 \pm 1.389$ & $0.675 \pm 1.223$ \\
      \multicolumn{2}{c|}{SED}  &    $0.891 \pm 0.475$  & $0.208 \pm 0.251$ & $1.074 \pm 0.751$ & $0.592 \pm 0.650$ \\
      \multicolumn{2}{c|}{RLB-SAC-G\cite{yao2022reinforced}}  &   $1.064 \pm 0.283$  & $0.210 \pm 0.132$ & $0.739 \pm 0.317$ & $0.186 \pm 0.214$ \\ \cline{1-2}
    \multirow{2}{*}{\textbf{QMix-LB}} 
    & VBF                       &   $1.104 \pm 0.481$  & $0.241 \pm 0.264$ & $1.223 \pm 1.169$ & $0.634 \pm 0.983$ \\
    & PBF                       &   $1.201 \pm 0.321$  & $0.196 \pm 0.112$ & $0.583 \pm 0.103$ & $0.071 \pm 0.050$ \\ \cline{1-2}
    \multirow{2}{*}{\makecell{\textbf{Distr-LB}\\(this paper)}} 
    & VBF                       &   $1.350 \pm 0.311$  & $0.263 \pm 0.139$ & $1.180 \pm 0.702$ & $0.448 \pm 0.371$ \\
    & VBF+$\log$VBF             & $\mathbf{0.890 \pm 0.250}$ & $\mathbf{0.103 \pm 0.064}$ & $\mathbf{0.531 \pm 0.149}$ & $\mathbf{0.057 \pm 0.039}$ \\
    \bottomrule
    \end{tabular}
    \label{tab:compare_large_scale_99}
\end{table}

\subsubsection{Communication Overhead of CTDE and Centralised RL}
\label{app:results-ablation-comm}

This section studies the communication overhead of CTDE RL scheme and analyses its impact on real-world distributed systems.

First, we discuss the communication overhead in data center networks in two-folds: throughput and latency.

\begin{figure}[t]
	\centering
	\centerline{\includegraphics[width=.6\columnwidth]{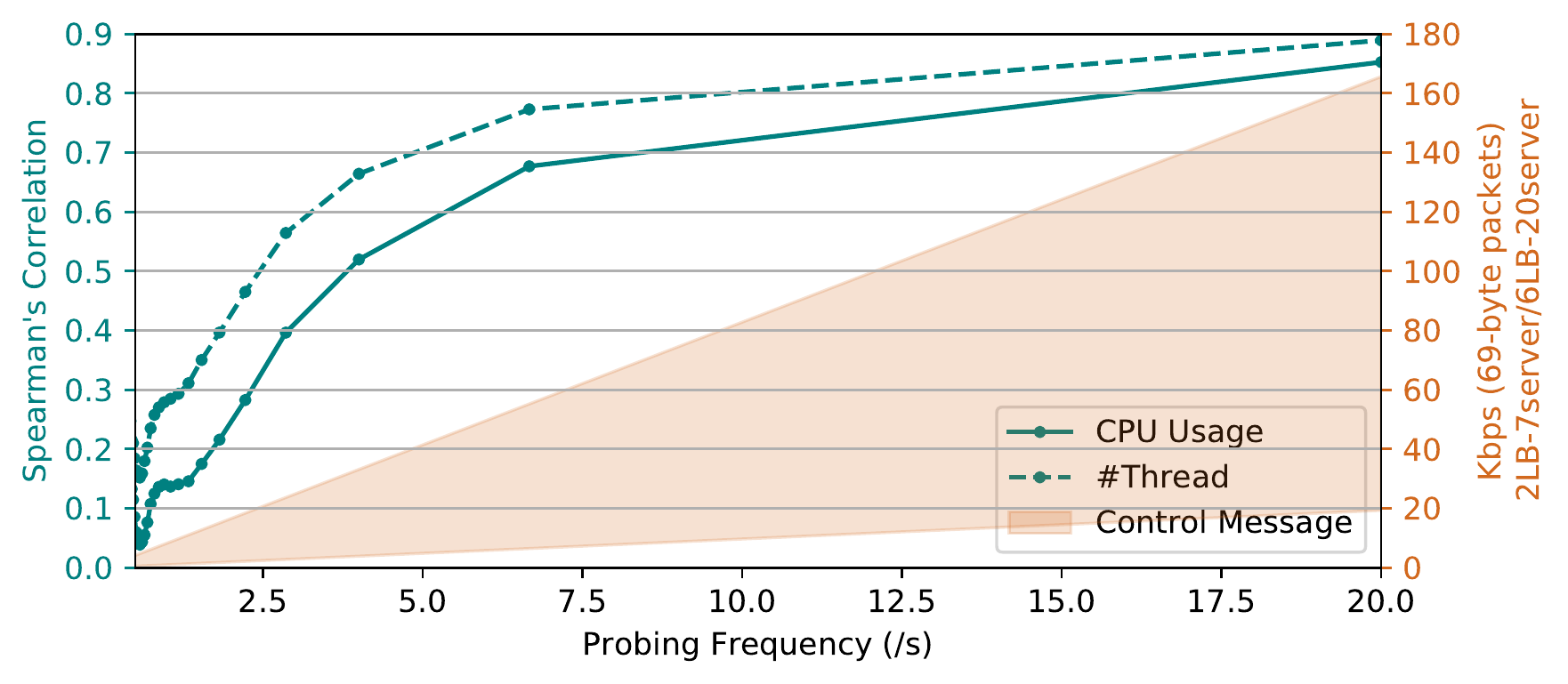}}
	\vskip -.05in
	\caption{Correlation (Spearman) increases when the probing frequency grows, yet, so do additional control messages.}
	\label{fig:app-ablation-comm-corr}
	\vskip -.2in
\end{figure}

\begin{figure}[tbp]
	\centering
	\begin{subfigure}{0.4\columnwidth}
		\centering
		\includegraphics[height=1.3in]{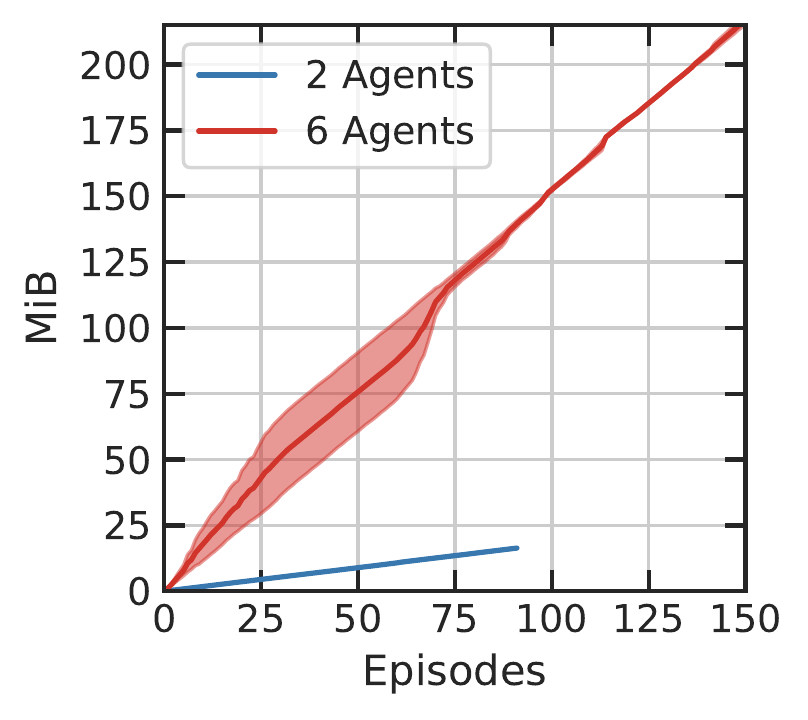}
		\vskip -.1in
		\caption{ }
		\label{fig:app-overhead-size}
	\end{subfigure}
	\hspace{.05in}
	\begin{subfigure}{0.55\columnwidth}
		\centering
		\includegraphics[height=1.3in]{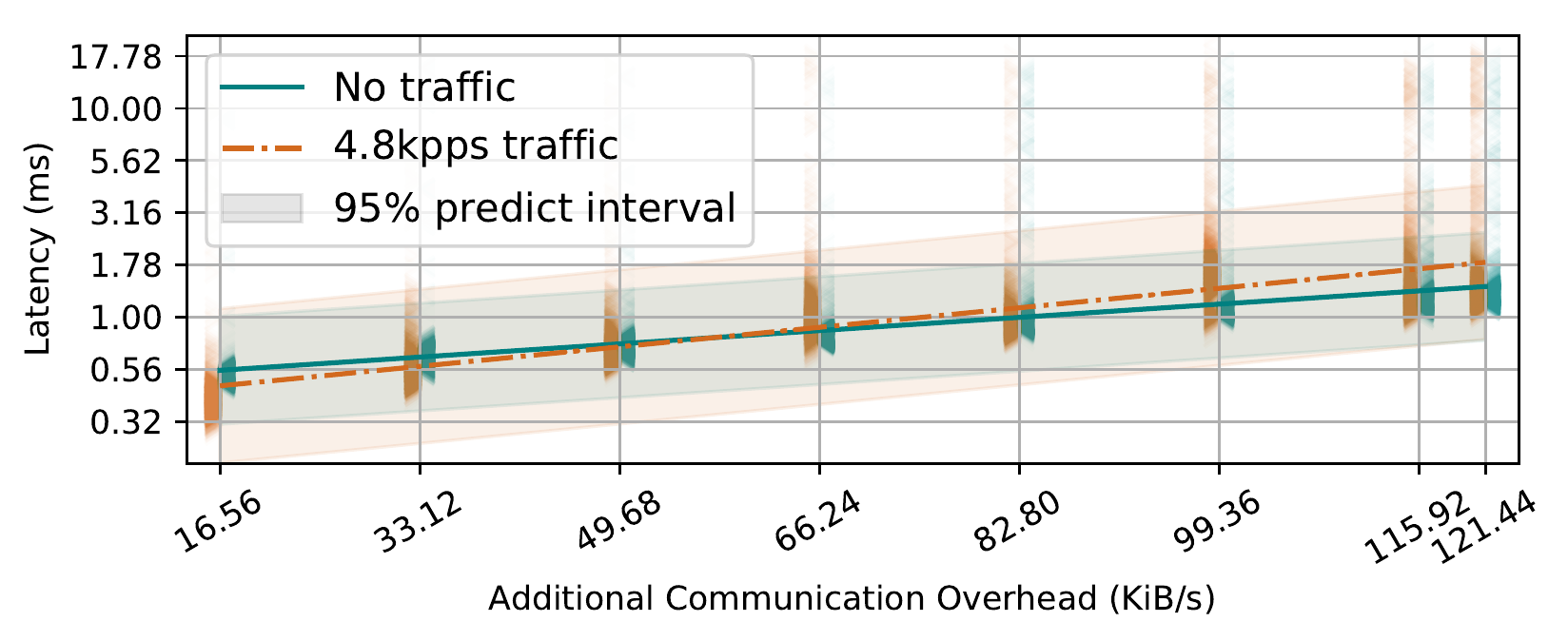}
		\vskip -.1in
		\caption{ }
		\label{fig:app-overhead-compare}
	\end{subfigure}
	\caption{Communication overhead for CTDE (a) grows linearly during training and (b) can have negative effects on the packet transmission latency of the whole networking system.}
	\label{fig:app-overhead}
\end{figure}

\begin{figure}[t]
	\centering
	\centerline{\includegraphics[width=.6\columnwidth]{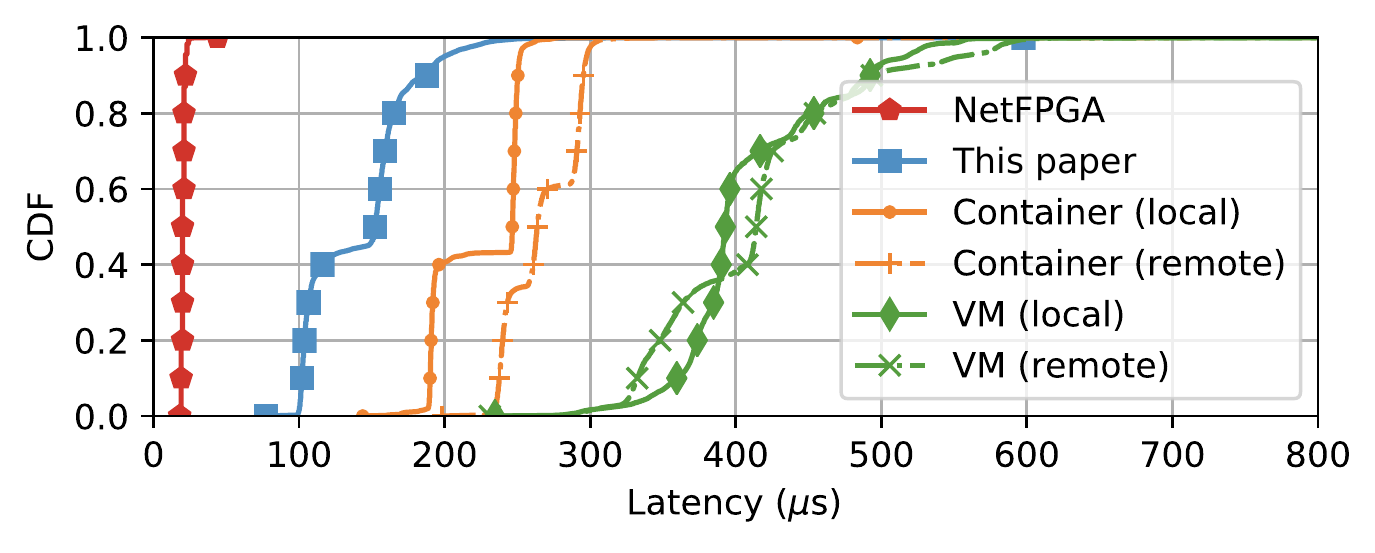}}
	\vskip -.05in
	\caption{Feature collection latency comparison against active probing techniques.}
	\label{fig:app-ablation-comm-collect}
\end{figure}

\begin{enumerate}[wide, labelwidth=!, labelindent=0pt]
	\item \textbf{Thoughput:} Active signaling (\eg periodically probing, or sharing messages) is an intrinsic way to observe and measure system states so that informed decisions can be made to improve performance~\cite{ctde2019, ctde2020nips}.
	Higher communication frequency gives more relevant and timely observations yet there is a trade-off between communication frequency and additionally consumed bandwidth.
	Especially, in large distributed systems like data center networks, services are organized by multiple server clusters scattered all over the physical data center network in the era of cloud computing.
	Thus, management traffic among different nodes can cascade and plunder the bandwidth for data transmission in high-tier links.
	To demonstrate the trade-off between measurement quality and throughput overhead, we have conducted experiments to evaluate (i) the relevance of collected server utilization information to the actual server utilization information with root mean square error (RMSE) and Spearman’s Correlation in our testbed on physical servers.
	When a controller VM periodically probes a server cluster via TCP sockets\footnote{In the $69$-byte control packet emitted by the server, the $24$-byte payload consists of the server ID, CPU and memory usage, and the number of busy application threads.}, as depicted in Fig.~\ref{fig:app-ablation-comm-corr}, the visibility over system states (relevance between measurements and ground truth) correlates with the probing frequency.
	Additional management traffic within a single service cluster —- behind one virtual IP (VIP) —- can exceed the $90$th percentile of per-destination-rack flow rate ($100$kbps as depicted in Figure 8a in~\cite{facebook-dc-traffic}) in Facebook data center in production. 

	As depicted in Fig.~\ref{fig:app-overhead-size}, CTDE RL scheme requires agents to communicate and share their trajectories, which include the observed states and actions.
	This leads to linearly increasing replay buffer size with the growth of number of episodes.
	The replay buffer size also grows with the number of agents which makes CTDE RL scheme not a scalable mechanism.
	Transmitting and synchronising replay buffer among agents incur additional communication overhead in the networking system, reducing the throughput for data transmission channel -- which can break full-bisection bandwidth (an important throughput related performance metric) in data center networks~\cite{zhang2018load} -- thus decreasing the QoS.

	\item \textbf{Latency:} Using the same network topology as the moderate-scale real-world testbed, when a single controller VM periodically transmit different amount of bytes via TCP sockets towards the agents, the latency overhead increases with the number of servers, which diminishes the QoS, as depicted in Fig.~\ref{fig:app-overhead-compare}.
	It is measured for per-packet round trip time (RTT) between to directly connected network nodes.
	While normal RTT is $0.099ms \pm 0.014ms$ in such setup, with additional communication overhead, RTT can grow more than $10$x.
	This is not considered as low additional latency, especially not in high performance networking systems.
	In elastic and cloud computing context and real-world setups, load balancers can be deployed in different racks~\cite{gandhi2014duet}.
	There can be multiple hops between two nodes and one connection consists of tens of hundreds of packets, which can lead to cascaded high latency.

	Based on the analysis of Fig.~\ref{fig:app-ablation-comm-corr}, we can see that delayed measurement and communication can cause degraded system state observation.
	To further demonstrate the performance of the passive feature collection mechanism which incurs absolutely zero communication overhead, an additional experiment is conducted to compare the feature collection latency.
	The latency overhead of passive feature collection process in our paper using POSIX shared memory is compared with different active probing techniques.
	The idle communication latency is compared using both KVM and Docker containers between two hosts either deployed on the same machine (local) or on two neighbor machines (remote).
	To compare with the shortest latency possible of an hardware-based SDN controller directly connected to the agent, a loopback test is conducted using a NetFPGA~\cite{zilberman2014netfpga} connected to the machine via both Ethernet and PCIe.
	We parse features stored in the local shared memory with a simple Python script without generating control messages.
	As depicted in Fig.~\ref{fig:app-ablation-comm-collect}, its median processing latency outperforms typical VM- and container-based VNF probing mechanisms~\cite{nfvsdnsurvey, osm2019, opnfv2019, openstack} by more than $94.18 \mu s$.
\end{enumerate}

To evaluate the performance of the proposed algorithm within the operational range of traffic rates, we conducted the scaling experiment using $6$ LB agents and $20$ servers in the real-world testbed with traffic rates that range from low to high.
As shown in Table~\ref{tab:avg_wiki} and~\ref{tab:avg_static}, similar to the QoS ($99$-th percentile of the task completion time) evaluation in Table~\ref{tab:99_qos_wiki} and~\ref{tab:99_qos_static}.
Low traffic rates do not saturate server processing capacities and the servers are not stressed.
Therefore, all servers are able to handle all the requests without accumulating jobs in the queue regardless of the differences of their processing capacities.
However, under heavy traffic rates, LSQ still distribute workloads so as to maintain the same queue lengths on servers with different processing speeds, which leads to degraded average task completion time.
SED assigns more jobs in proportional to the number of CPUs deployed for each server, achieving slightly better performance than LSQ in terms of the average task completion time.
The proposed Distr-LB outperforms both LSQ and SED especially under heavy traffic rates, thus when servers undergo heavy resource utilisation.
Since the server processing speed for different applications is not necessarily proportional to the number of CPU--as we have discussed over Fig.~\ref{fig:motivation-multi-stage} in Sec.~\ref{sec:background}, Distr-LB is able to learn the appropriate ratio of workload distribution for servers with different capacities.

\begin{table}[tbp]
\scriptsize
\centering
\caption{Comparison of average job completion time (s) of static pages under different traffic rates using large-scale real-world setup.}
\begin{tabularx}{\textwidth}{c|c|X|X|X|X|X|X|X|X|X}
\toprule
 \multicolumn{2}{c|}{\multirow{2}{*}{Method}} & \multicolumn{9}{c}{Traffic Rate (queries/s)}  \\ \cline{3-11}
 \multicolumn{2}{c|}{} & 731.534 & 1097.3  & 1463.067 & 1828.834  & 2194.601 & 2377.484  & 2560.368 & 2743.251  & 2926.135\\ \hline
 \multicolumn{2}{c|}{\multirow{2}{*}{LSQ}} & 0.048\newline$\pm$0.002  & 0.055\newline$\pm$0.003  & 0.059\newline$\pm$0.003 & 0.069\newline$\pm$0.008  & 0.131\newline$\pm$0.070  & 0.643\newline$\pm$0.325 & 1.910\newline$\pm$0.269  & 2.873\newline$\pm$0.215 & 3.545\newline$\pm$0.146\\ \hline
 \multicolumn{2}{c|}{\multirow{2}{*}{SED}}  & 0.054\newline$\pm$0.001 & 0.061\newline$\pm$0.004  & 0.068\newline$\pm$0.004 & 0.080\newline$\pm$0.004  & 0.117\newline$\pm$0.025  & 0.660\newline$\pm$0.396 &  1.718\newline$\pm$0.366 & 2.767\newline$\pm$0.207 & 3.482\newline$\pm$0.189 \\ \hline
 \multirow{4}{*}{\makecell{\textbf{Distr-LB}\\(this paper)}}  &  \multirow{2}{*}{VBF} & 0.047\newline$\pm$0.001  & 0.054\newline$\pm$0.003 & 0.059\newline$\pm$0.005  & \textbf{0.066\newline$\pm$0.007}  & 0.105\newline$\pm$0.035  & \textbf{0.266\newline$\pm$0.139} & 1.465\newline$\pm$0.115 & 2.047\newline$\pm$0.145 & 2.704\newline$\pm$0.108 \\ \cline{2-11}
 & \multirow{2}{*}{VBF+$\log$VBF}  &  0.047\newline$\pm$0.001 & 0.054\newline$\pm$0.004  & 0.059\newline$\pm$0.004 & 0.069\newline$\pm$0.008  & \textbf{0.084\newline$\pm$0.009}  & 0.413\newline$\pm$0.249  &  \textbf{1.183\newline$\pm$0.063} & \textbf{1.838\newline$\pm$0.083} & \textbf{2.513\newline$\pm$0.105} \\ 
\bottomrule
\end{tabularx}
\label{tab:avg_wiki}
\end{table}

\begin{table}[tbp]
\scriptsize
\centering
\caption{Comparison of average job completion time (s) of static pages under different traffic rates using large-scale real-world setup.}
\begin{tabularx}{\textwidth}{c|c|X|X|X|X|X|X|X|X|X}
\toprule
 \multicolumn{2}{c|}{\multirow{2}{*}{Method}} & \multicolumn{9}{c}{Traffic Rate (queries/s)}  \\ \cline{3-11}
 \multicolumn{2}{c|}{} & 731.534 & 1097.3  & 1463.067 & 1828.834  & 2194.601 & 2377.484  & 2560.368 & 2743.251  & 2926.135\\ \hline
 \multicolumn{2}{c|}{\multirow{2}{*}{LSQ}} & 0.004\newline$\pm$0.001  & 0.004\newline$\pm$0.000  & 0.003\newline$\pm$0.000 & 0.004\newline$\pm$0.000  & 0.018\newline$\pm$0.023  & 0.252\newline$\pm$0.234 & 1.455\newline$\pm$0.258  & 2.426\newline$\pm$0.207 & 3.080\newline$\pm$0.136\\ \hline
 \multicolumn{2}{c|}{\multirow{2}{*}{SED}}  & 0.003\newline$\pm$0.000  & 0.004\newline$\pm$0.001  & 0.004\newline$\pm$0.000 & 0.004\newline$\pm$0.000  & 0.006\newline$\pm$0.003  & 0.284\newline$\pm$0.308 &  1.283\newline$\pm$0.374 & 2.322\newline$\pm$0.226 & 3.041\newline$\pm$0.188 \\ \hline
 \multirow{4}{*}{\makecell{\textbf{Distr-LB}\\(this paper)}}  &  \multirow{2}{*}{VBF} & 0.004\newline$\pm$0.000  & 0.004\newline$\pm$0.000 & 0.004\newline$\pm$0.000  & 0.004\newline$\pm$0.000  & \textbf{0.005\newline$\pm$0.001}  & \textbf{0.055\newline$\pm$0.070} & 1.039\newline$\pm$0.144 & 1.617\newline$\pm$0.135 & 2.277\newline$\pm$0.096 \\ \cline{2-11}
 & \multirow{2}{*}{VBF+$\log$VBF}  &  0.004\newline$\pm$0.000 & 0.004\newline$\pm$0.000  & 0.004\newline$\pm$0.000 & 0.004\newline$\pm$0.000  & 0.006\newline$\pm$0.004  & 0.116\newline$\pm$0.114  &  \textbf{0.750\newline$\pm$0.063} & \textbf{1.413\newline$\pm$0.083} & \textbf{2.076\newline$\pm$0.096} \\ 
\bottomrule
\end{tabularx}
\label{tab:avg_static}
\end{table}

\subsubsection{MARL Robustness}
\label{app:results-robust}

\begin{table}[tbp]
	\scriptsize
	\centering
	\vskip -.1in
	\caption{Comparison of QoS (mean, $95$th-percentile, and $99$th-percentile task completion time in $s$) when server processing capacity changes over time.}
	\resizebox{\columnwidth}{!}{ 
	\begin{tabular}{c|c|c|c|c|c|c|c}
	\toprule
	\multicolumn{2}{c|}{} & \multicolumn{3}{c|}{Wiki} &  \multicolumn{3}{c}{Static}  \\ \hline
	\multicolumn{2}{c|}{} & \multicolumn{1}{c|}{Mean} &  \multicolumn{1}{c|}{$95th$-percentile} & \multicolumn{1}{c|}{$99th$-percentile} &  \multicolumn{1}{c|}{Mean} &  \multicolumn{1}{c}{$95th$-percentile} & \multicolumn{1}{c}{$99th$-percentile}\\ \hline
	\multicolumn{2}{c|}{WCMP}               & $1.792\pm0.393 $ & $7.534\pm1.817$ & $2.366 \pm 1.685$ &  $1.512\pm0.385 $  & $6.571\pm1.996$ & $1.084 \pm 1.842$\\ 
	\multicolumn{2}{c|}{LSQ}                & $0.453\pm0.178 $ & $1.958\pm0.827$ & $3.482 \pm 1.257$ &  $0.202\pm0.130 $ & $0.975\pm0.617$  & $1.801 \pm 1.064$\\ 
	\multicolumn{2}{c|}{SED}                & $0.340\pm0.268 $ & $1.225\pm0.812$ & $30.600 \pm 6.718$ &  $0.130\pm0.206 $ & $0.519\pm0.571$ & $29.893 \pm 7.042$\\ \hline 
	\multirow{4}{*}{\textbf{QMix-LB}} & MS  & $ 0.373\pm0.177$ & $1.621\pm0.830$ & $4.046 \pm 6.632$ &  $0.144\pm0.112 $ &  $0.663\pm0.523$ & $2.655 \pm 6.899$\\ 
									  & PBF & $ 0.368\pm0.375$ &  $1.529\pm1.581$& $2.436 \pm 1.468$ &  $0.159\pm0.338$ & $0.733\pm1.437$   & $0.974 \pm 1.204$\\ 
									  & VBF & $0.282\pm0.166$  & $1.186\pm0.799$ & $3.187 \pm 1.479$ &  $0.081\pm0.104$ & $0.395\pm0.518$   & $1.654 \pm 1.181$\\ 
							& VBF+$\log$VBF & $0.533\pm0.179 $ & $2.525\pm0.913$ & $4.864 \pm 1.635$ &  $0.266\pm0.129$ & $1.409\pm0.680$   & $3.374 \pm 1.626$\\ \hline
	\multirow{2}{*}{\makecell{\textbf{Distr-LB}\\(this paper)}} & VBF& $0.262\pm0.100$  & $1.086\pm0.454$ & $2.190 \pm 0.792$ & $0.057\pm0.044$ &  $0.305\pm0.234$   & $0.683 \pm 0.510$\\ 
							& VBF+$\log$VBF & $ \mathbf{0.221\pm0.112}$ & $\mathbf{0.895\pm0.530}$ & $\mathbf{1.903 \pm 0.976}$ &$\mathbf{0.039\pm0.057}$ & $\mathbf{0.197\pm0.284}$ & $\mathbf{0.480 \pm 0.650}$  \\
	\bottomrule
	\end{tabular}
	}
	\label{tab:marl_robustness}
	\end{table}

With the rise of elastic and server-less computing, where tenants in data center can share physical resources (\eg CPU, disk, memory), servers can have different processing capacities, which may also change over time dynamically —- because of \eg updated server configuration (upgrading an Amazon EC2 \texttt{a1.xlarge} instance to \texttt{a1.4xlarge}) or resource contention (co-located workloads)~\cite{guo2019limits}.
According to~\cite{silkroad2017}, there are $32\%$ of server clusters in data center that update more than $10$ times per minute based on the measurements collected over $432$ minutes up time in a month.
$3\%$ of clusters have more than $50$ updates perf minute.
Therefore, dynamic changes prevail in real-world data center networks.

Therefore, this section studies the robustness of the proposed distributed RL-based LB framework to react to dynamic changes in server processing speeds, \eg when server VMs are migrated to a new physical architecture. Using the same moderate-scale real-world testbed with $2$ LB agents, additional CPU-bound workloads are applied on the 4-CPU server group starting from $25$s. As depicted in Fig.~\ref{fig:app-robust-timeline}, under heavy Wikipedia traffic, MARL-based LB agents adapt server weights over time and achieves better performance than heuristic LB algorithms -- finishing the same amount of workloads faster, maintaining lower amount of acive number of threads, even when server processing capacity is reduced.
As depicted in Fig.~\ref{fig:app-robust-cdf}, over multiple runs ($10$ runs for each LB algorithm), RL-based LB algorithms effectively achieves lower task completion time in dynamic environments.
They help avoid human intervention and make the LB agents autonomously adapt to the changes in the system.
Table~\ref{tab:marl_robustness} lists the performance of all LB algortihms in terms of the QoS (measured as the average and $95$th-percentile task completion time).

\begin{figure}[tbp]
	\centering
	\begin{subfigure}{0.8\columnwidth}
		\centering
		\includegraphics[width=\columnwidth]{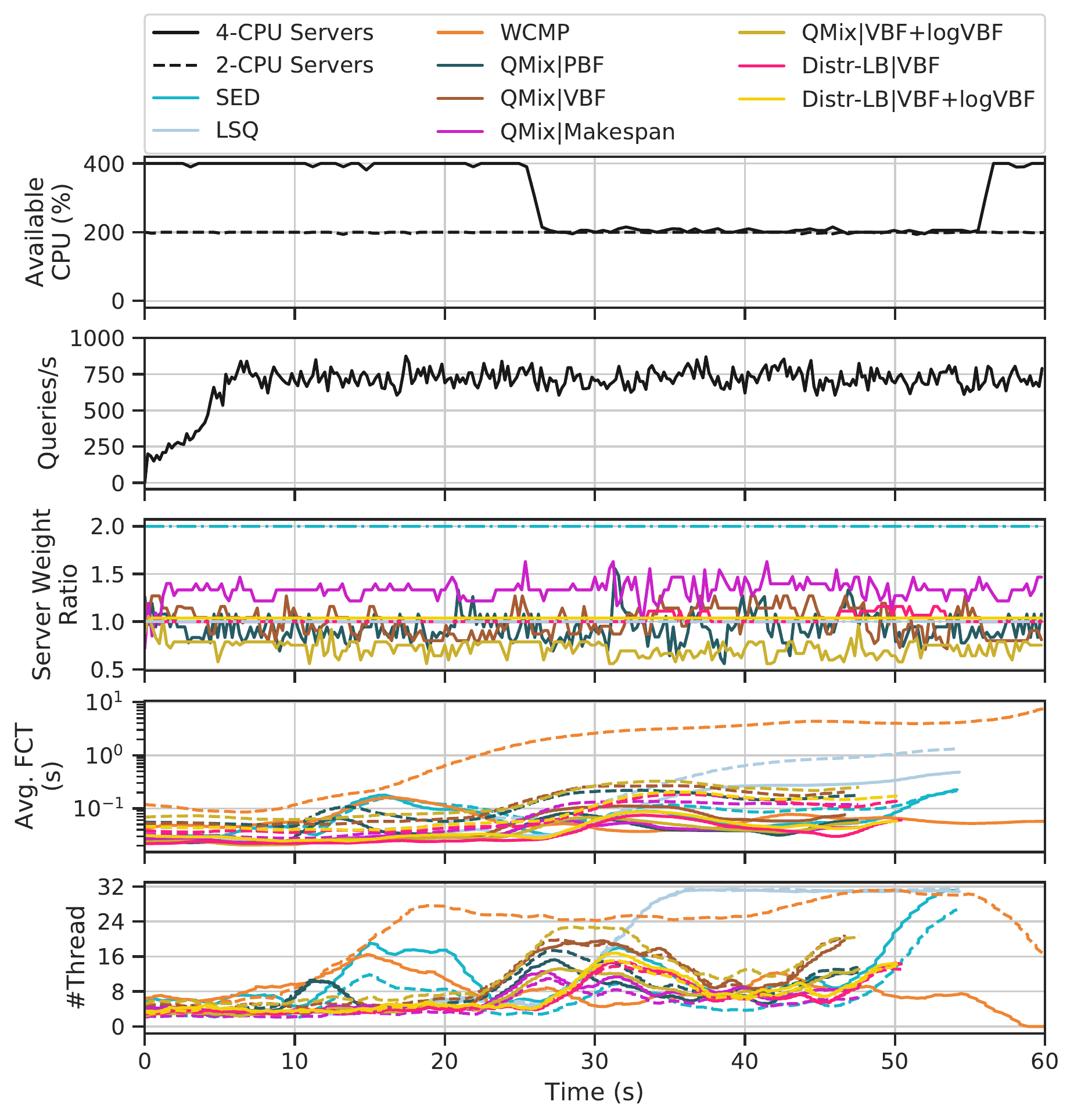}
		\vskip -.1in
		\caption{Additional workloads are applied on servers with 4 CPUs at around $25$s.}
		\label{fig:app-robust-timeline}
	\end{subfigure}
	\hspace{.05in}
	\begin{subfigure}{0.8\columnwidth}
		\centering
		\includegraphics[width=\columnwidth]{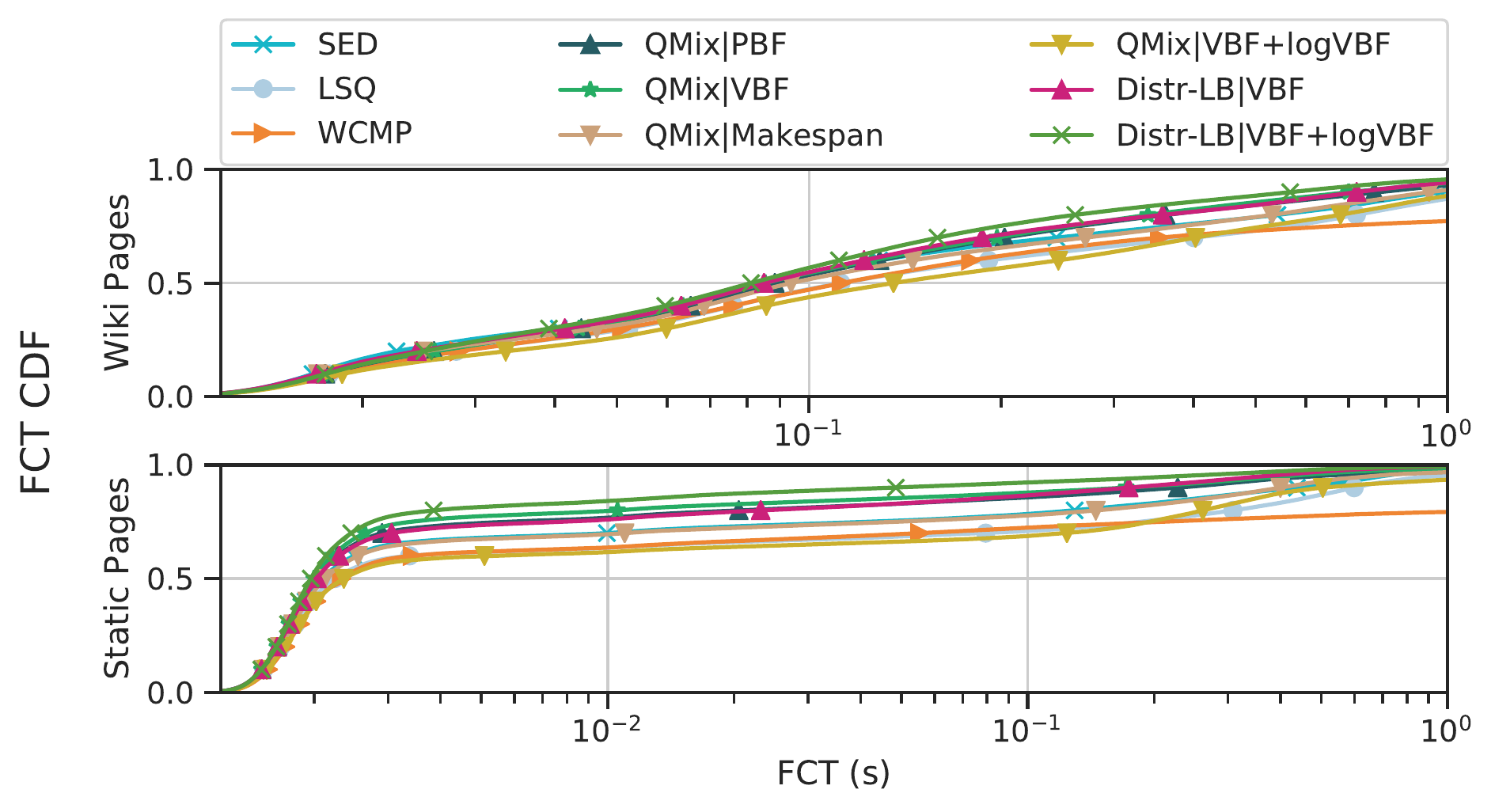}
		\vskip -.1in
		\caption{FCT CDF comparisons for two types of tasks.}
		\label{fig:app-robust-cdf}
	\end{subfigure}
	\caption{Load balancing performance comparison in dynamic environments.}
	\label{fig:app-robust}
	\vskip -.2in
\end{figure}

\end{document}